\newcommand{\denselist}{\itemsep -2pt\partopsep 0pt}
\newcommand{\E}{\mathop{\mathbb{E}}} 
\newcommand{\base}{\mathrm{base}}
\newtheorem{theorem}{Theorem}[section]
\newtheorem{corollary}{Corollary}[theorem]
\begin{document}

\global\long\def\marker{\checkmark}
\global\long\def\m{\checkmark}

%

%

\twocolumn[

\aistatstitle{A Rule for Gradient Estimator Selection, with an Application to Variational Inference}

\aistatsauthor{ Tomas Geffner \And Justin Domke}

\aistatsaddress{ University of Massachusetts, Amherst \And University of Massachusetts, Amherst} ]

\begin{abstract}
Stochastic gradient descent (SGD) is the workhorse of modern machine learning. Sometimes, there are many different potential gradient estimators that can be used. When so, choosing the one with the best tradeoff between cost and variance is important. This paper analyzes the convergence rates of SGD as a function of time, rather than iterations. This results in a simple rule to select the estimator that leads to the best optimization convergence guarantee. This choice is the same for different variants of SGD, and with different assumptions about the objective (e.g. convexity or smoothness). Inspired by this principle, we propose a technique to automatically select an estimator when a finite pool of estimators is given. Then, we extend to infinite pools of estimators, where each one is indexed by control variate weights. This is enabled by a reduction to a mixed-integer quadratic program. Empirically, automatically choosing an estimator performs comparably to the best estimator chosen with hindsight.
\end{abstract}


\section{Introduction}

Convergence guarantees for stochastic optimization algorithms depend on structural properties of the objective, such as convexity and smoothness, the variance of the gradient estimates, and the number of iterations performed. Naturally, lower variance and more iterations result in better guarantees \cite{shamir_SGDopt, shamir_lastiter_avgs, bottou_SGDreview}. Given a fixed total optimization time, fewer iterations are possible with a slower estimator. Therefore, an estimator with low variance and low computational cost is ideal.


This paper is inspired by stochastic gradient variational inference (SGVI), where there are multiple gradient estimators with varying costs and variances. Estimators may be obtained using the reparameterization trick \cite{vaes_welling, rezende2014stochastic, doublystochastic_titsias}, the score function method \cite{williams_reinforce, blackbox_blei}, or other techniques \cite{localexpectations_gredilla, stickingthelanding, schulman2015gradient, generalreparam_blei, agakov2004auxiliary}. Also, many control variates can be added to the estimator to reduce variance \cite{traylorreducevariance_adam, backpropvoid, neuralVI_minh, viasstochastic_jordan, REBAR, CVs}.\footnote{In fact, it has been shown that the problem of selecting a gradient estimator for variational inference can be phrased as choosing a set of control variates to use along with their weights \cite{CVs}.}

The cost and variance of an estimator significantly affects optimization convergence speed. Different estimators lead to different performances, and the optimal estimator is often situation-dependent. As an example, Fig. \ref{fig:intro} shows the results of running SGVI on several models (described in Sec. \ref{sec:exp_details}). We compare three common gradient estimators:
\begin{itemize} \denselist
\item (Rep) Plain reparameterization estimator,
\item (Miller) Estimator proposed by Miller et al. \cite{traylorreducevariance_adam}, which adds a Taylor expansion control variate,
\item (STL) ``Sticking-the-landing'' estimator \cite{stickingthelanding}, which avoids computing certain terms.
\end{itemize}
(Detailed descriptions of the estimators are in Section \ref{sec:g2tpractice}.) Table \ref{table:intro} gives the time costs of each estimator. There is no universal best estimator. Different estimators perform better in different situations. An estimator's variance depends on the model, dataset, and current parameters, while the time cost also depends on the implementation and computational platform. Rather than rely on the user to navigate these tradeoffs, we propose that estimator selection could be done adaptively. This paper investigates how, given a pool of gradient estimators, \emph{automatically} choose one to get the best convergence guarantee.

Section \ref{sec:back} introduces some background on SGVI and control variates. Section \ref{sec:G2T} studies cost-variance tradeoffs by analyzing the convergence rates of several variants of SGD. We do this with or without the structural properties of \emph{convexity}, \emph{smoothness}, or \emph{strong convexity}. We observe what we call the ``$G^2 T$ principle''. This states that, given a set of candidate estimators, the best convergence guarantee is obtained by the estimator for which $G^2 T$ is minimum, where $G^2 \geq \E \Vert g \Vert^2$ denotes a bound on an estimator's expected squared norm, and $T$ its time cost. Put another way, estimators with lower $G^2 T$ lead to better convergence guarantees. This is true regardless of the objective's structural properties (Section \ref{sec:g2t}). 

In theory, this principle could be directly used if a pool of estimators with known values for $G$ and $T$ was given. In practice, these values are rarely known a priori. In Section \ref{sec:estselec} we propose an estimator selection algorithm that uses estimates of these quantities instead of the true values.

Finally, motivated by SGVI where control variates play an important role, we look at the problem of control variate selection. If used with the right weights, control variates lead to a reduction in the estimator’s variance. However, they carry an extra computational cost, which may or may not be worth paying depending on the situation. In Section \ref{sec:miqcp} we consider the case where there is an infinite pool of estimators, each indexed by a set of control variate weights. Selecting an estimator involves selecting what control variates to use along with their weights. We show that the problem of minimizing the estimated $G^2 T$ values over this infinite set of estimators can be reduced to a small mixed integer quadratically constrained program (MIQCP) and then solved with off the shelf methods. Section \ref{sec:exps} gives an experimental evaluation.


\begin{table}[]
\centering
\begin{tabular}{llll}
\toprule
\multirow{2}{*}{\textbf{Model}} & \multicolumn{3}{c}{\textbf{Estimator}} \\ \cmidrule{2-4}
                       & Rep           & Miller          & STL          \\ \midrule
BNN-A                  & 8.2           & 25              & 11           \\
BNN-B                  & 7.1           & 21              & 9.5          \\
Log Reg (a1a)          & 6.9           & 120             & 12           \\
Hier Poisson           & 2.6           & 7.9             & 3.8          \\
\bottomrule
\end{tabular}
\caption{\textbf{Different estimators have different costs.} Time costs (in ms) for estimators in Fig. \ref{fig:intro}.}
\label{table:intro}
\end{table}

\begin{figure}[t]
  \centering
  \includegraphics[width=0.49\linewidth]{./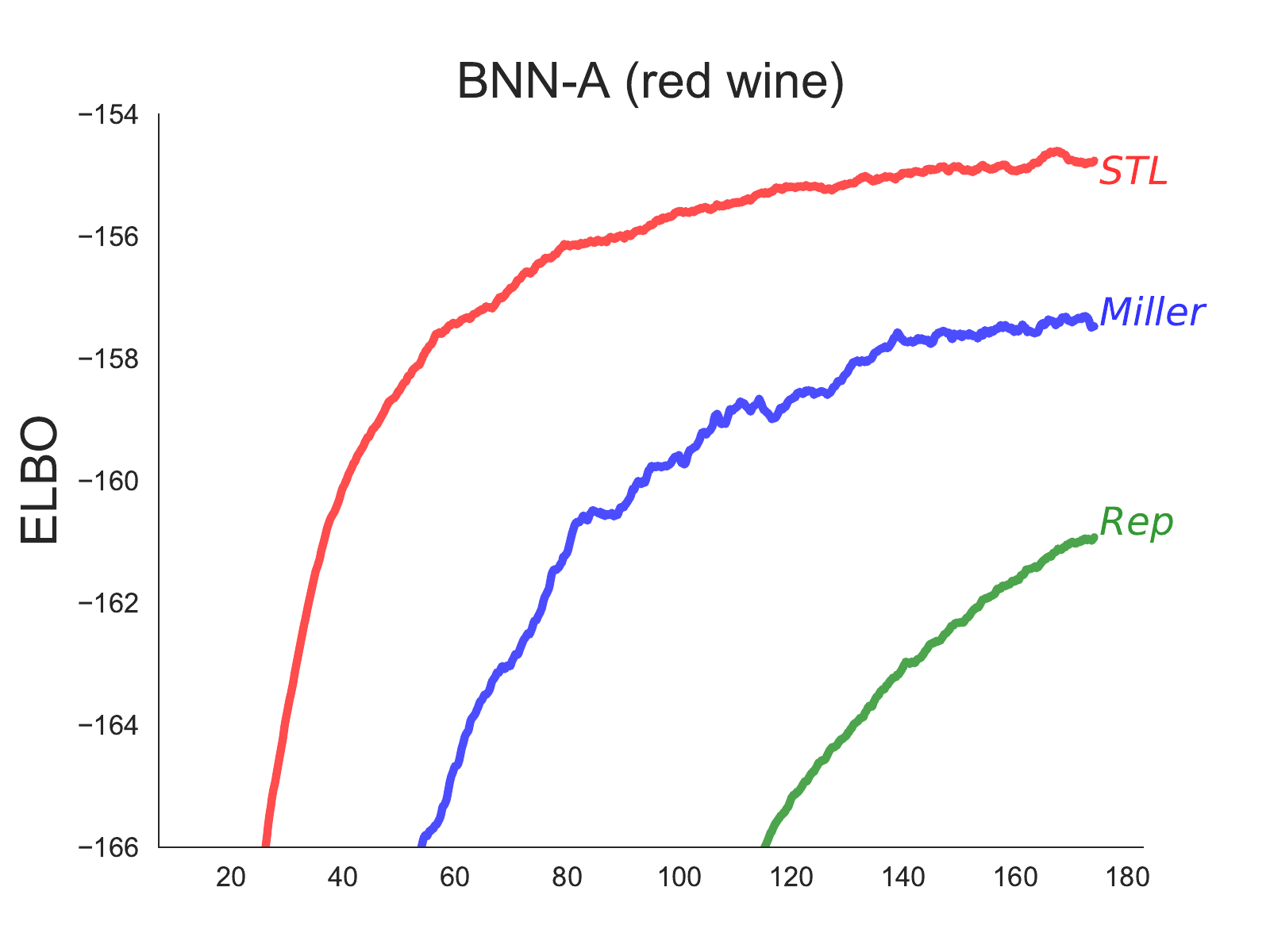}
  \includegraphics[width=0.49\linewidth]{./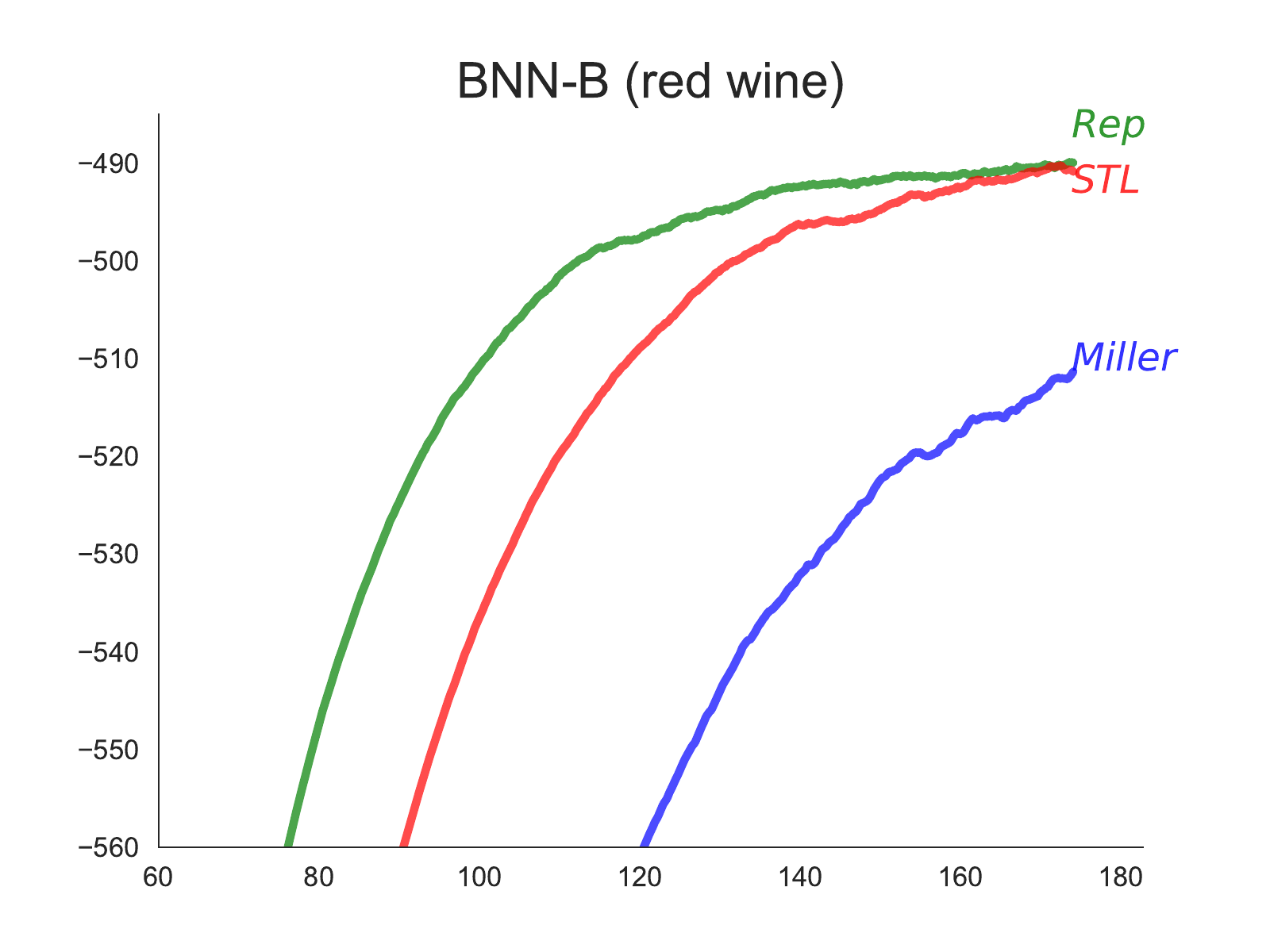}

  \includegraphics[width=0.49\linewidth]{./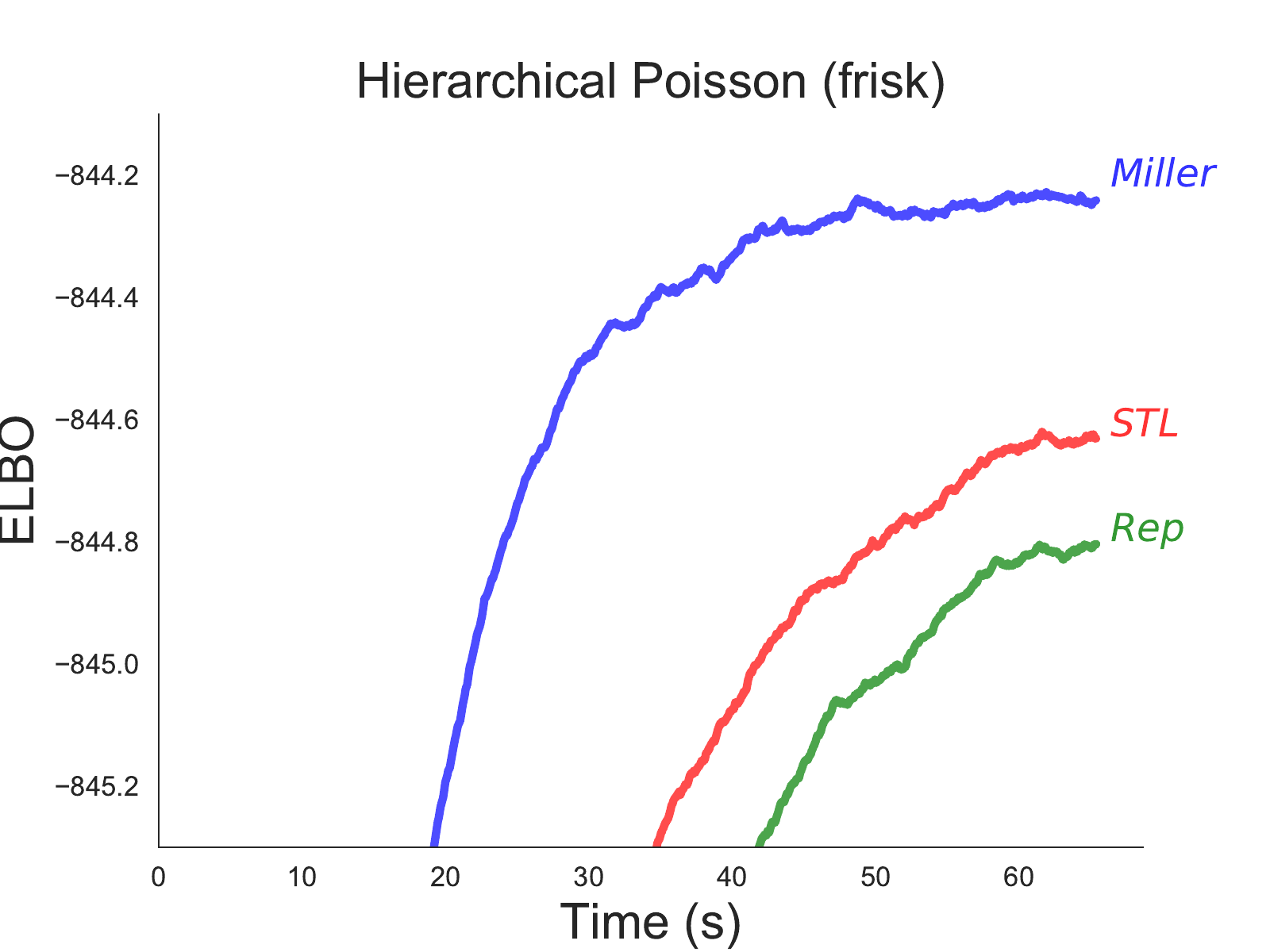}
  \includegraphics[width=0.49\linewidth]{./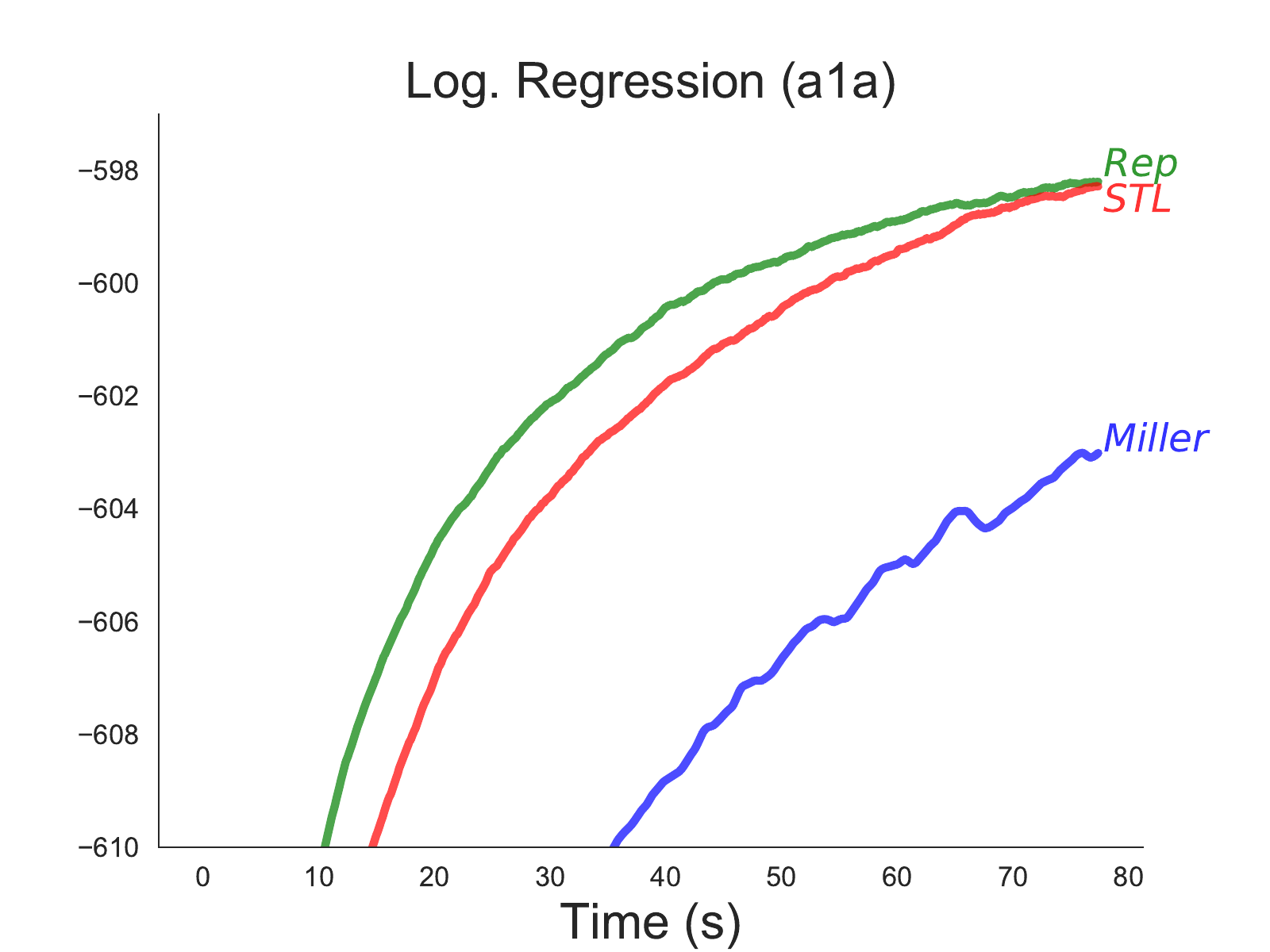}
  \caption{\textbf{Different estimators are better in different situations.} SGVI on four models. Each plot compares three different gradient estimators, described in the text. More details on the optimization algorithm and models are in Section \ref{sec:exps}.}
  \label{fig:intro}
\end{figure}

\subsection{Contributions}

\begin{description}
\item[$\mathbf{G^2 T}$ principle:] Given a set of gradient estimators with different $G^2$ and $T$ values, the best convergence guarantee is achieved by the estimator with minimum $G^2 T$. The same estimator is optimal \emph{regardless} of structural properties of the objective and with different variants of SGD (Section \ref{sec:g2t}).

\item[Gradient estimator selection:] Given a finite set of gradient estimators, we propose an algorithm to automatically select one using empirical estimates of $G^2$ and $T$. (Algorithm \ref{alg:g2tpractice}, Section \ref{sec:estselec}).

\item[Control variate selection via MIQCP:] Given an infinite set of estimators indexed by control variate weights, we propose an algorithm to automatically select a gradient estimator using empirical estimates of $G^2$ and $T$. This is based on a reduction to a small MIQCP, which can be solved efficiently in practice. (Algorithm \ref{alg:g2tcv}, Section \ref{sec:miqcp}).

\item[Empirical validation] We apply the automatic estimator selection techniques to Bayesian logistic regression, hierarchical regression, and Bayesian neural network models. Empirically, automatic estimator selection performs similarly to choosing the best estimator with hindsight with finite pools of estimators (Section \ref{sec:g2tpractice}) and similarly with infinite pools of estimators (Section \ref{sec:exps}).
\end{description}

\textbf{Notation:} We use $g(w, \xi)$, where $\xi$ is a random variable, to denote an unbiased estimator of target's gradient, $G^2(g)$ to denote a bound on $g$'s expected squared norm, $\E_{\xi}||g(w, \xi)||^2 \leq G^2(g) \,\, \forall w$, and $T(g)$ to denote the computational cost of computing estimator $g(w, \xi)$, measured in seconds. We drop the dependencies of $G^2$ and $T$ on $g$ when clear from context.


\section{Preliminaries}\label{sec:back}

\subsection{Stochastic Gradient Variational Inference}

Given a model $p(x, z)$, where $x$ is observed data and $z$ are latent variables, the goal of variational inference (VI) is to find parameters $w$ to approximate $p(z|x)$ with a simpler distribution $q_w(z)$. Since
\[ \log p(x) = \underbrace{\E_{Z\sim q_w(z)} \left[ \log\frac{p(x,Z)}{q_w(Z)} \right]}_{\mbox{ELBO}(w)} + \mbox{KL}(q_w(Z)||p(Z|x)), \]
minimizing the KL-divergence is equivalent to maximizing the $\mbox{ELBO}(w)$ (Evidence Lower BOund).

When $p$ and $q$ are complex, it is typically necessary to optimize the ELBO using stochastic methods \cite{neuralVI_minh, viasstochastic_jordan, blackbox_blei, doublystochastic_titsias, VIforMCobjectives_mnih}. These require an unbiased estimator of $\nabla_w \mbox{ELBO}(w)$, which can be expressed as
\begin{align}
 & \nabla_w \mbox{ELBO} = \nabla_w \E_{q_w(z)} \log p(x,Z) + \nabla_w \mathcal{H}(q_w) \label{eq:rep}\\
 & = \nabla_w \E_{q_w(z)} \log p(x,Z) \nonumber\\
 & - \left[\nabla_w \E_{q_w(z)} \log q_v(Z) + \nabla_w \E_{q_v(z)} \log q_w(Z) \right]_{v = w} \label{eq:stl}.
\end{align}
Estimates for this gradient can be obtained through the reparameterization trick \cite{vaes_welling, doublystochastic_titsias, rezende2014stochastic}, the generalized reparameterization trick \cite{generalreparam_blei}, and the score function method \cite{williams_reinforce}, among other techniques \cite{blackbox_blei, traylorreducevariance_adam, neuralVI_minh, VIforMCobjectives_mnih, viasstochastic_jordan, REBAR}. All of these express the gradient as an expectation and approximate it using Monte Carlo sampling. For example, the popular reparameterization trick is based on finding a function $\mathcal{T}_w$ that transforms $\xi \sim q_0$ into $Z = \mathcal{T}_w(\xi) \sim q_w$. Then, it approximates terms like $\nabla_w \E_{q_w(z)} \log p(x, Z)$ with the unbiased estimator $\nabla_w \log p(x,\mathcal{T}_w(\xi))$.

\subsection{Control Variates for Variational Inference} \label{sec:CVSVI}

Control variates \cite{mcbook} are random variables with mean zero that can be used to reduce the variance of an estimator. They are widely used in SGVI \cite{CVs, backpropvoid, traylorreducevariance_adam, neuralVI_minh, viasstochastic_jordan, blackbox_blei, REBAR}. Take an estimator, which is some function $g_{\base}$ such that $\E_\xi g_{\base}(w, \xi) = \nabla_w \mbox{ELBO}(w)$. Take also the control variate $c$ such that $\E_\xi c(w,\xi)=0$. The final gradient estimator is $g(w, \xi) = g_{\base}(w, \xi) + a \, c(w, \xi)$, where $a$ is a scalar. When multiple control variates $c_1,...,c_J$ are available, the final gradient estimator can be expressed as
\begin{equation} \textstyle g(w, \xi) = g_{\base}(w, \xi) + \sum_{i = 1}^J a_i \, c_i(w, \xi). \end{equation}
Equivalently, we could write $g = g_{\base} + C a$, where $C(w, \xi)$ is a matrix with control variate $c_i$ as its $i$-th column, and $a$ is a vector containing the weights for each control variate.

As a concrete example, Miller et al. \cite{traylorreducevariance_adam} proposed a control variate to reduce the variance of the reparameterization estimator $g_\base(w,\xi) = \nabla_w \log p(x,\mathcal{T}_w(\xi))$. Given a second-order Taylor approximation $u(z) \approx \log p(x,z)$, they suggest the control variate $c(w,\xi) = \nabla_w \E_{q_w} u(Z) - \nabla_w u(\mathcal{T}_w(\xi))$. Since $u$ is quadratic, this can be computed exactly when $q_w$ is Gaussian. If the Taylor expansion is accurate, $c$ will greatly reduce the estimator's variance.

Other control variates proposed can be built using upper/lower bounds of $\log p(x, z)$ instead of a Taylor expansion \cite{viasstochastic_jordan}, baselines \cite{neuralVI_minh, blackbox_blei}, or the difference between two unbiased estimates of a term (expectation of some function) \cite{CVs, backpropvoid, REBAR}, among others. 


\section{The $\mathbf{G^2T}$ principle}
\label{sec:G2T}

\subsection{Stochastic Optimization Review}

Many different first order stochastic optimization algorithms can be used to minimize an objective function. The simplest one, SGD, requires an initial guess $w_0$, a step-size $\eta_k$, and an unbiased estimator of the objective's gradient $g(w_k, \xi_k)$. It uses the update rule
\begin{equation} w_{k+1} = w_k - \eta_{k}\  g(w_k,\xi_k). \end{equation}
Table \ref{table:SGD} shows guarantees for different variants of SGD. These usually require two things: a bound on $\E \Vert g(w,\xi) \Vert^2$, and some structural assumption about the objective (the most common being convexity, $\lambda$-strong convexity, and $L$-smoothness).

\renewcommand{\arraystretch}{1.6}
\begin{table*}[t]
\begin{small}
  \centering
  \begin{tabular}{cccll}
    \toprule
   
    \multicolumn{3}{c}{\textbf{Assumptions Needed}} & \multirow{2}{*}{\textbf{Algorithm}} & \multirow{2}{*}{\textbf{Guarantee}}\\ \cmidrule{1-3}
    Convex & $\lambda$-strongly convex & $L$-smooth & & \\
    \midrule
    
    $\marker$ & $\marker$ & $\marker$ & SGD, $\eta_k = 1 / (\lambda k)$ \cite{shamir_SGDopt} & $\E F(w_K) - F(w^*) \leq \frac{2 L}{\lambda^2} \frac{G^2}{K}$ \\ \midrule

    $\marker$ & $\marker$ & $-$ & SGD, $\eta_k = 1 / (\lambda k)$ \cite{shamir_SGDopt} & $\E ||w_K - w^*||^2 \leq \frac{4}{\lambda^2} \frac{G^2}{K}$ \\ \hline

    $\marker$ & $-$ & $-$ & SGD, $\eta_k = \frac{D_w}{G\sqrt{K}}$ \cite{nemirovski2009robust} & $\E F(\bar{w}) - F(w^*) \leq D_w \frac{G}{\sqrt{K}}$\\ \midrule

    $-$ & $-$ & $\marker$ & SGD, $\eta_k = \sqrt{\frac{2 D_f}{L K G^2}}$ \cite{bottou_SGDreview} & $\E \frac{1}{K} \sum_{i = 1}^K ||\nabla F(w_i)||^2 \leq \sqrt{L D_f} \frac{G}{\sqrt{K}}$ \\ \midrule

    \multirow{2}{*}{$-$} & \multirow{2}{*}{$-$} & \multirow{2}{*}{$\marker$} & SGD+Momentum ($\beta$) \cite{yang_momentum}, &
    \multicolumn{1}{l}{$\E \frac{1}{K} \sum_{i = 1}^K ||\nabla F(w_i)||^2 \leq$} \\

    & & & $\eta_k = \sqrt{\frac{2 D_f (1 - \beta)^4}{\left(\beta^2 + (1-\beta)^2\right)K L G^2}}$ &
    \multicolumn{1}{r}{$\sqrt{\frac{8 D_f L \left(\beta^2 + (1 - \beta)^2\right)}{(1-\beta)^2}} \frac{G}{\sqrt{K}}$} \\ \midrule

    \multirow{2}{*}{$-$} & \multirow{2}{*}{$-$} & \multirow{2}{*}{$\marker$} & SGD+Nesterov ($\beta$) \cite{yang_momentum}, &
    \multicolumn{1}{l}{$\E \frac{1}{K} \sum_{i = 1}^K ||\nabla F(w_i)||^2 \leq$} \\

    & & & $\eta_k = \sqrt{\frac{2 D_f (1 - \beta)^4}{\left(\beta^4 + (1-\beta)^2\right)K L G^2}}$ &
    \multicolumn{1}{r}{$\sqrt{\frac{8 D_f L \left(\beta^4 + (1 - \beta)^2\right)}{(1-\beta)^2}} \frac{G}{\sqrt{K}}$} \\ \bottomrule
  
  \end{tabular}
  \vspace{0.1cm}
  \caption{Convergence rates for stochastic optimization algorithms under different assumptions about the objective. Constants in the results are \small $D_f = F(w_0) - F(w^*)$ \normalsize and \small $D_w = ||w_0 - w^*||$ \normalsize. $K$ is the number of optimization steps, and $G^2$ is such that \small $\E_\xi||g(w, \xi)||^2 \leq G^2 \,\, \forall w$ \normalsize. The learning rates shown are the optimal ones.}
  \label{table:SGD}
\end{small}  
\end{table*}

It will be useful for the analysis in the following section to represent each of the guarantees in Table \ref{table:SGD} as a function $\Theta(\lambda, L, C, K, G^2)$, where $\lambda$ is the strong-convexity parameter (or 0), $L$ is the smoothness parameter (or $\infty$), $C$ is a boolean representing whether the problem is convex, $K$ is the number of iterations, and $G^2$ represents the bound on the estimator's expected squared norm. 
For example, if the target is $\lambda$-strongly convex and $L$-smooth, then $\Theta(\lambda, L, C, K, G^2) = \frac{2 L}{\lambda^2} \frac{G^2}{K}$ (first row of Table \ref{table:SGD}). All bounds presented in Table \ref{table:SGD} can be written as
\begin{equation} \textstyle \Theta(\lambda, L, C, K, G^2) = \alpha(\lambda, L, C) \left( \frac{G^2}{K} \right)^p, \label{eq:GK} \end{equation}
where $p>0$ and $\alpha$ depends on the properties of the target and initialization, but \textit{not on the gradient estimator used}.

\subsection{The $\mathbf{G^2T}$ principle} \label{sec:g2t}


Given a set of gradient estimators with varying costs and variances, our goal is to find the one that gives the best convergence guarantee for optimization algorithms. Given a total optimization time $T_\mathrm{opt}$, an estimator $g$ with cost $T(g)$ can perform $K \approx T_\mathrm{opt} / T(g)$ iterations. Inserting this value for $K$ in eq. (\ref{eq:GK}) results in a convergence guarantee of
\begin{equation} \textstyle \Theta(\lambda, L, C, K, G^2(g)) = \frac{\alpha(\lambda, L, C)}{T_{\mathrm{opt}}^p} \left(G^2(g) T(g) \right)^p. \label{eq:GT} \end{equation}
In equation (\ref{eq:GT}) the only dependence of $\Theta(\lambda, L, C, K, G^2)$ on the estimator used $g$ comes through the $(G^2(g) T(g))^p$ factor. Thus, the best convergence guarantee is achieved by using the estimator with the smallest value of $G^2 T$, \emph{regardless} of the objective's properties $(\lambda, L, C)$ and total optimization time ($T_\mathrm{opt}$). We call this the ``$G^2 T$ principle''.

\begin{tcolorbox}[breakable, enhanced]
  \textbf{$\mathbf{G^2 T}$ principle.} Given
\begin{itemize}[leftmargin=*] \denselist
  \item A target function characterized by $(\lambda, L, C)$, which can be convex, strongly convex, strongly convex and smooth, or just smooth (non-convex);
  \item An optimization time budget $T_{\mathrm{opt}}$;
  \item A pool of gradient estimators. 
\end{itemize}
Then, \textit{for any} $(\lambda, L, C, T_{\mathrm{opt}})$, provided the appropriate algorithm is used, the best convergence guarantee is obtained using the estimator $g$ with minimum $G^2 T$. That is, given two estimators $g_1$ and $g_2$, $K_1 = T_{\mathrm{opt}}/T(g_1)$ and $K_2 = T_{\mathrm{opt}}/T(g_2)$, we have
\[ \Theta\left(\lambda, L, C, K_1, G^2(g_1)\right) < \Theta\left(\lambda, L, C, K_2, G^2(g_2)\right) \]
\[\iff G^2(g_1) T(g_1) < G^2(g_2) T(g_2),\]
\emph{regardless} of the values of $\lambda$, $L$, $C$ and $T_{\mathrm{opt}}$.
\end{tcolorbox}

\subsection{$\mathbf{G^2 T}$ for Gradient Estimator Selection} \label{sec:estselec}

This section presents a gradient estimator selection algorithm based on the $G^2 T$ principle. Given a pool of estimators with known $G^2$ and $T$, the one with minimum $G^2 T$ should be used. In practice, however, $G^2$ and $T$ are typically not known. We propose to use estimates.

Assuming that the cost of an estimator $g(w, \xi)$ is independent of $w$, an estimate $\hat{T}(g)$ of $T(g)$ can be obtained for each $g\in\mathcal{G}$ through a single initial profiling phase.

Dealing with $G^2(g)$ is more involved. Convergence guarantees typically assume that $\E||g(w, \xi)||^2 \leq G^2(g)$ for \emph{all} $w$. Often (e.g. when $w$ is unbounded) this is not true for any finite $G^2(g)$. Even if a finite $G^2(g)$ exists, the bound may be conservative. In practice, optimization starting from $w_0$ will only visit a restricted part of the parameter space, $\mathcal{W}_0$. In that case, it is sufficient to bound $\E ||g(w, \xi)||^2$ for this set of $w$ that may actually be encountered. Since $\E||g(w, \xi)||^2$ tends to decrease as optimization proceeds, we propose to use $G^2(g, w_0) = \E||g(w_0, \xi)||^2$ as an effective upper bound for $\E||g(w, \xi)||^2$ when $w\in\mathcal{W}_0$. With this effective upper bound, we select an estimator by finding the one with minimum $G^2(g, w_0) T(g)$.

After $k$ optimization steps, $G^2(g,w_k)$ will typically be less than $G^2(g,w_0)$. Starting from the current solution $w_k$, only a smaller part of the parameter space will be visited in the future, meaning a stronger bound for future gradients may be possible. We propose to use the new effective upper bound $G^2(g, w_k) = \E ||g(w_k, \xi)||^2$, and to select the estimator $g$ that now minimizes $G^2(g, w_k) T(g)$. We perform this update only a few times throughout training \footnote{In preliminary experiments, we tried computing the optimal estimator just once by minimizing $G^2(g, w_0) T(g)$, and using that $g$ for the full optimization. While this works, it often does not find the best estimator. This is likely because the initial solution $w_0$ is not so informative about the performance of gradient estimators closer to the solution.}. Finally, since $G^2(g, w_k) = \E ||g(w_k, \xi)||^2$ cannot be computed in closed form, we use a Monte Carlo estimate, $\hat{G}^2(g, w_k)$. The final algorithm is summarized in Algorithm \ref{alg:g2tpractice}.

\begin{algorithm}
\begin{small}
\caption{SGD with minimum $\hat{G}^2 \hat{T}$ estimator.}
\label{alg:g2tpractice}
\begin{algorithmic}[100]
\REQUIRE Set of estimators, $\mathcal{G}$.
\REQUIRE Total opt time and times to re-select estimator.
\REQUIRE Number of MC samples $M$.
\STATE For all $g \in \mathcal{G}$ measure time $\hat{T}(g)$.
\FOR{$k = 1, 2, \cdots$}
\IF{time to re-select estimator}
\STATE $ \displaystyle{\forall g\in \mathcal{G}\text{, set }\hat{G}^2(g, w_k) = \frac{1}{M} \sum_{m=1}^M \Vert g(w_k,\xi_{km}) \Vert^2}$.
\STATE $\displaystyle{g \leftarrow \arg\min_{g \in \mathcal{G}} \hat{G}^2(g,w_k) \times \hat{T}(g)}$.
\ENDIF
\STATE $w_{k+1} = w_k - \eta_k\ g(w_k,\xi_k)$
\ENDFOR
\end{algorithmic}
\end{small}
\end{algorithm}

\subsection{Empirical Validation of the $\mathbf{G^2T}$ principle} \label{sec:g2tpractice}

This section presents an empirical validation of Algorithm \ref{alg:g2tpractice}. We consider the same setting as in Fig. \ref{fig:intro}. We ran SGVI on four different models: two variants of a Bayesian neural network, a hierarchical Poisson model, and Bayesian logistic regression. For the logistic regression model we used a Gaussian distribution with a full-rank covariance matrix as variational distribution, and for the other models we used a factorized Gaussian.\footnote{More details on the models and optimization parameters in Section \ref{sec:exp_details}} We use three gradient estimators:
\begin{itemize} \denselist
\item (Rep) Plain reparameterization estimator, which computes the entropy in closed form (eq. (\ref{eq:rep})),
\item (Miller) Estimator proposed by Miller et al. \cite{traylorreducevariance_adam}, which adds a Taylor expansion control variate to reduce (Rep)'s variance (see Section \ref{sec:CVSVI}). As suggested by Miller et al. we use this control variate with a fixed weight of $1$.
\item (STL) ``Sticking-the-landing'' estimator \cite{stickingthelanding}, which removes the last term of equation (\ref{eq:stl}) (it is always zero) and estimates the remaining terms with reparameterization.
\end{itemize}
Additionally, we use Algorithm \ref{alg:g2tpractice} with the set of estimators $\mathcal{G} = \{\mathrm{(Rep)}, \mathrm{(Miller)}, \mathrm{(STL)}\}$, which uses the estimator $g\in\mathcal{G}$ with minimum $\hat{G}^2 \hat{T}$. We use $M = 400$ and we re-select the optimal estimator three times throughout training, initially, after 10\% of training is done, and after 50\% of training is done. Results are shown in Fig. \ref{fig:alg1val}. For all models the use of Algorithm \ref{alg:g2tpractice} leads to results that are at least as good as the results obtained using the best estimator chosen with hindsight.

\begin{figure}[t]
  \centering
  \includegraphics[width=0.49\linewidth]{./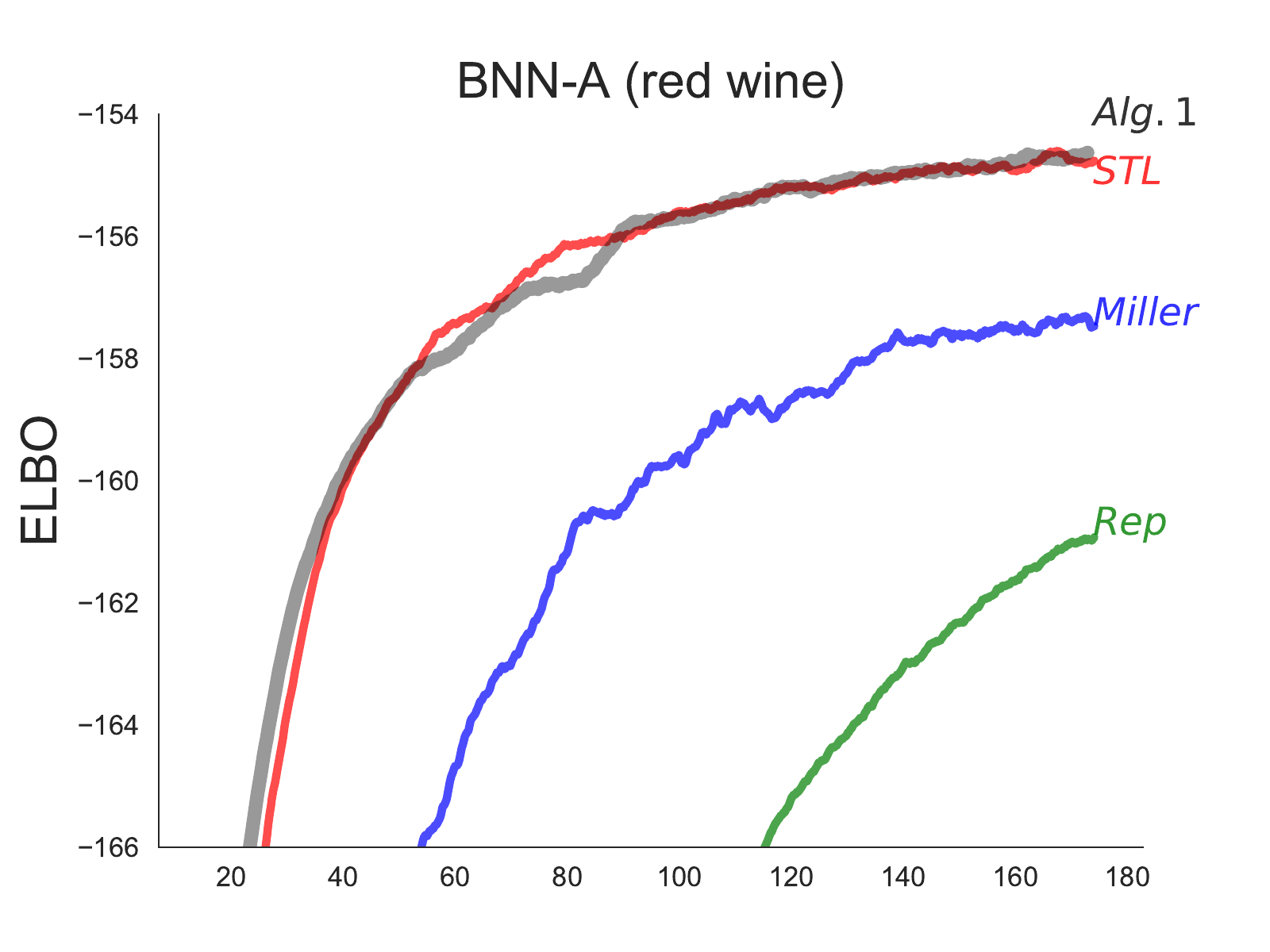}
  \includegraphics[width=0.49\linewidth]{./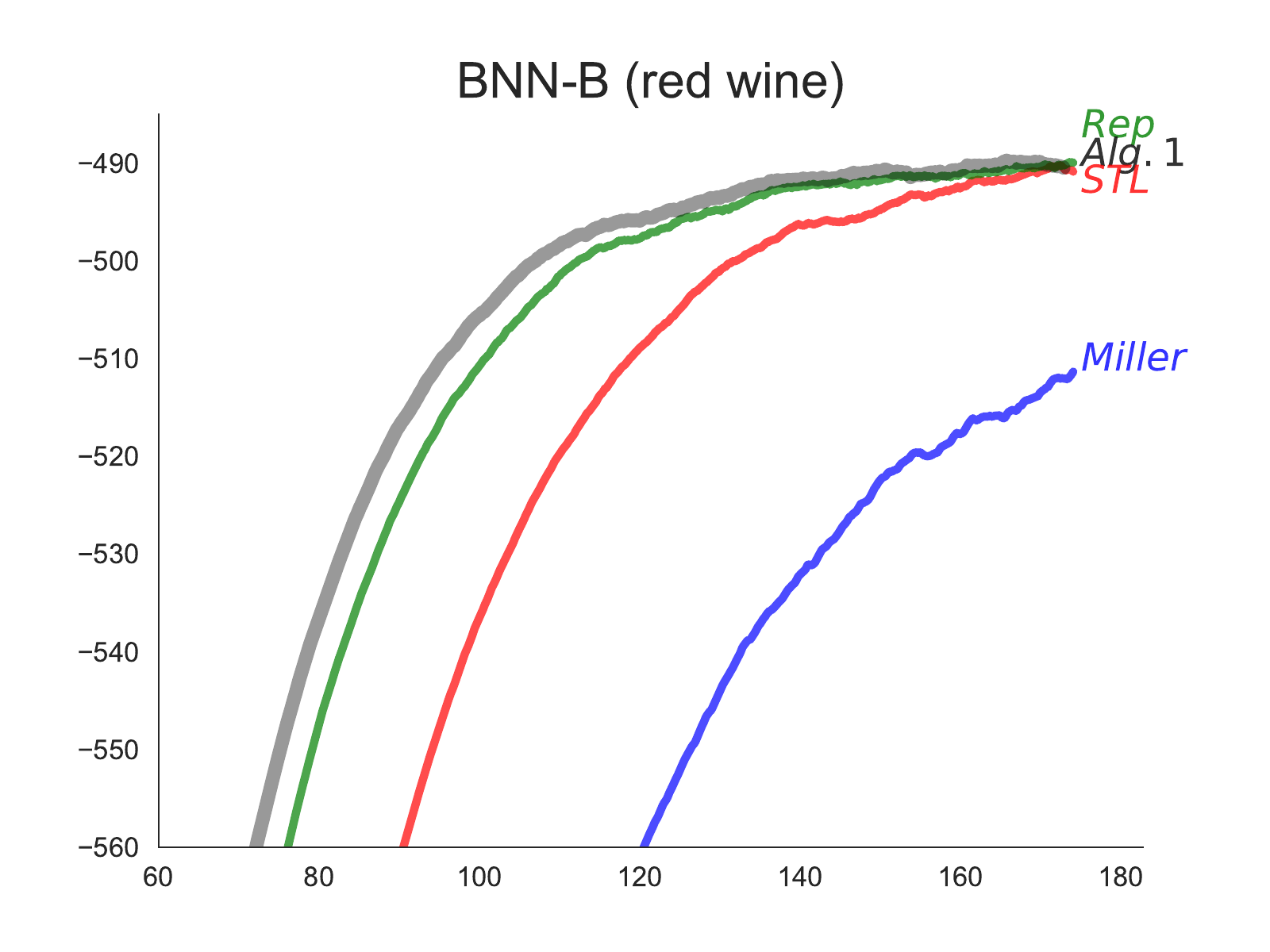}

  \includegraphics[width=0.49\linewidth]{./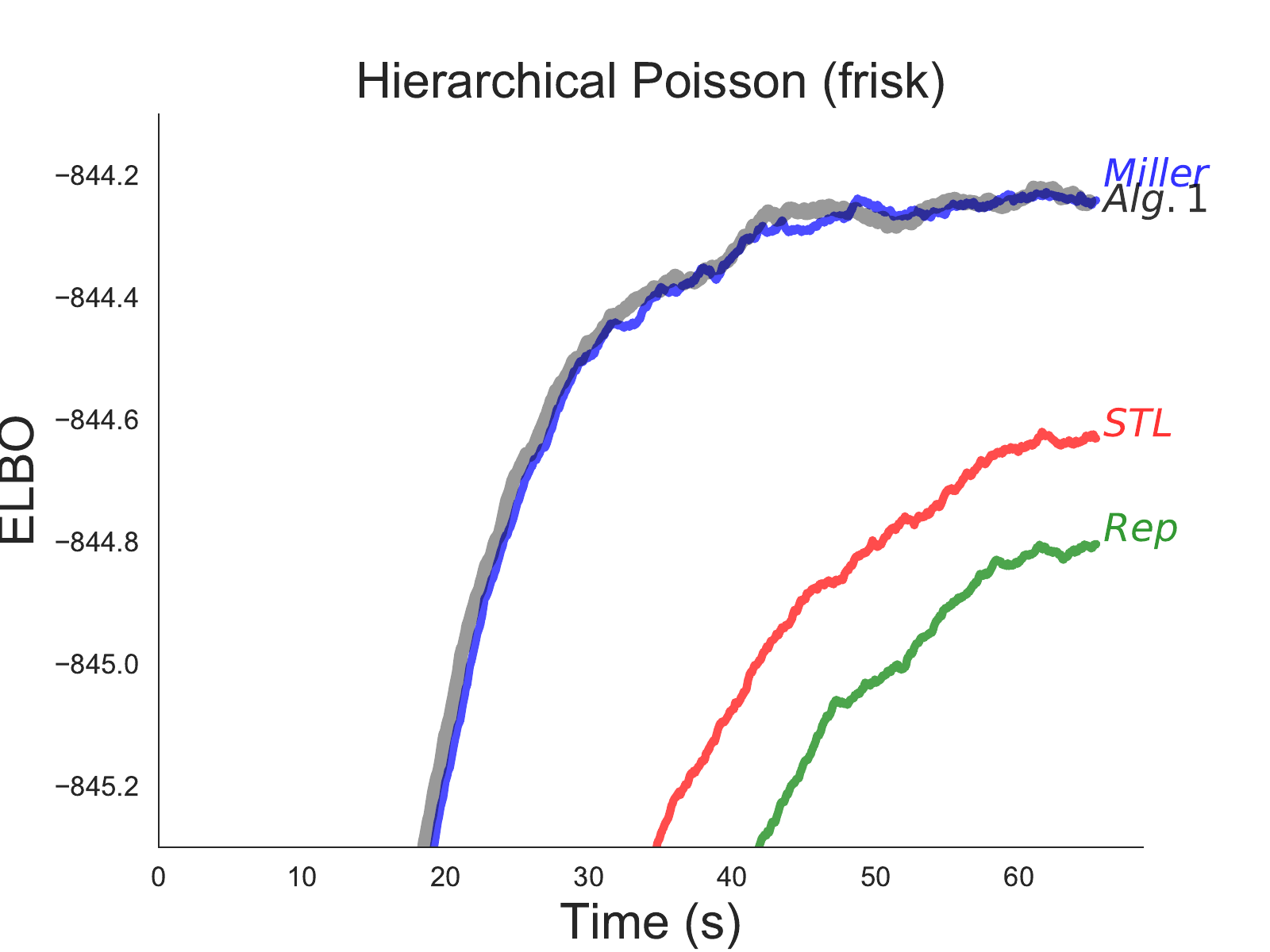}
  \includegraphics[width=0.49\linewidth]{./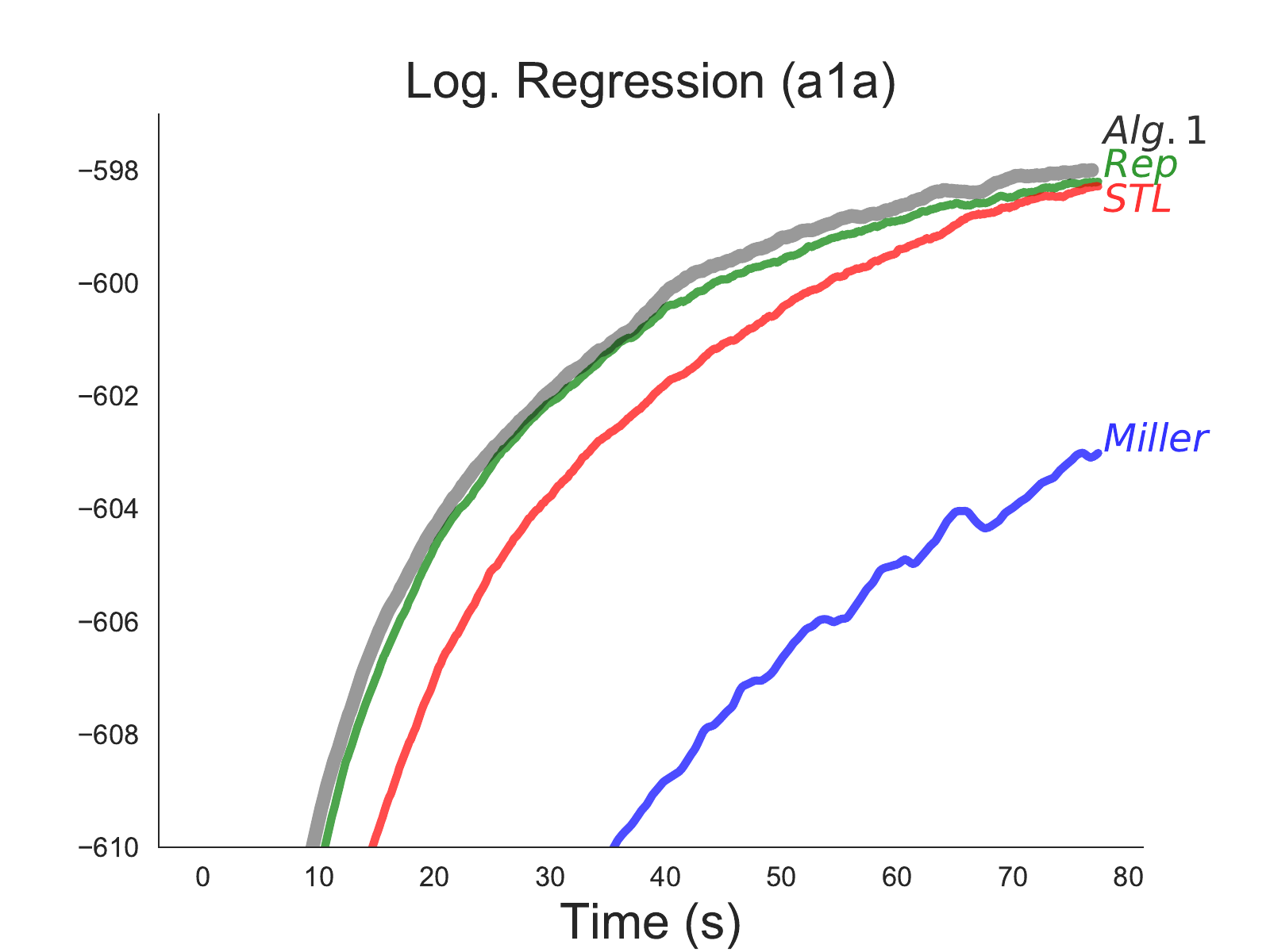}
  \caption{\textbf{Algorithm \ref{alg:g2tpractice} leads to results comparable to the results obtained using the best estimator chosen retrospectively.} SGVI on four models. Each plot compares Algorithm \ref{alg:g2tpractice} with three different gradient estimators, as described in the text.}
  \label{fig:alg1val}
\end{figure}

\section{$\mathbf{G^2T}$ with Control Variates} \label{sec:miqcp}

It is possible to use multiple control variates to reduce a gradient estimator's variance \cite{CVs}. However, some control variates might be computationally expensive but only result in a small reduction in variance. It may be better to remove them and accept a noisier but cheaper estimator. We should select what control variates to use adaptively.

Estimators with control variates can be expressed as
\begin{align} g_a(w, \xi) & = g_{\base}(w, \xi) + \sum_{i = 1}^J a_i c_i(w, \xi) \nonumber\\
& =  g_{\base}(w, \xi) + C(w, \xi) a. \label{eq:gacv} \end{align}
The set of available estimators is $\mathcal{G} = \{g_a : a \in \mathbb{R}^J\}$. In this case, the number of estimators $g_a\in\mathcal{G}$ is infinite. Therefore, Algorithm \ref{alg:g2tpractice} cannot be used (since we cannot measure $\hat{T}$ and $\hat{G}^2$ for each estimator individually).



This section shows how to find the minimum $\hat{G}^2 \hat{T}$ estimator when different estimators are indexed by a set of control variate weights. This is possible because two properties hold: (i) $\hat{T}(g_a)$ and $\hat{G}^2(g_a, w)$ can be efficiently obtained for all estimators $g_a \in\mathcal{G}$ through the use of shared statistics (a finite number of evaluations of the base gradients and control variates); and (ii) Finding $g_a$ that minimizes $\hat{G}^2(g_a, w) \hat{T}(g_a)$ can be reduced to a Mixed Integer Quadratically Constrained Program (MIQCP), which can be solved quickly in practice.

An expression for  $\hat{T}(g_a)$ can be obtained by noticing that computing $g_a$ only requires computing the base gradient and the control variates with \emph{non-zero} weights. Then, for all $g_a \in \mathcal{G}$,
\begin{equation} \hat{T}(g_a) = \hat{T}(g_\base) + \sum_{i = 1}^J \hat{T}(c_i)\  \mathbf{1}[a_i \neq 0]. \label{eq:Tga} \end{equation}
Thus, we can compute $\hat{T}(g_a)$ for all $g_a \in \mathcal{G}$ only by profiling the base gradient and each control variate individually.

Similarly, $\hat{G}^2(g_a, w)$ is determined by the same set of base gradient and control variate evaluations, regardless of the value of $a$. 
Suppose that, at iteration $k$, we sample $\xi_{k1}, ..., \xi_{kM}$. Then, for all $g_a \in \mathcal{G}$,
\begin{equation} \hat{G}^2(g_a, w_k) = \frac{1}{M} \sum_{m=1}^M \Vert g_{\base}(w_k, \xi_{km}) + C(w_k, \xi_{km}) \, a \Vert^2. \label{eq:Gga} \end{equation}
Thus, we can compute $\hat{G}^2(g_a,w_k)$ for all $g_a \in \mathcal{G}$ using only $M$ evaluations of the base gradient $g_\base$ and each control variate $c_i$.

Equations (\ref{eq:Tga}) and (\ref{eq:Gga}) characterize the (estimated) cost and variance of the gradient estimator with weights $a$. We find the weights that result in the optimal cost-variance tradeoff by solving
\begin{equation} \textstyle a^*(w) = \underset{a\in\mathbb{R}^J}{\arg\min} \,\, \hat{G}^2(g_a, w) \times \hat{T}(g_a), \label{eq:g2ta}\end{equation}
where $\hat{T}(g_a)$ and $\hat{G}^2(g_a, w)$ are as in equations (\ref{eq:Tga}) and (\ref{eq:Gga}). The solution $a^*(w)$ indicates what control variates to use (those with $a_i^* \neq 0$), and their weights.

Solving the (combinatorial) minimization problem in equation (\ref{eq:g2ta}) may be challenging. However, theorem \ref{thm:opta} states that it can be reduced to a MIQCP, which can be solved fast using solvers such as Gurobi \cite{gurobi}.

\begin{restatable}{thm}{optimala} \label{thm:opta}
When different gradient estimators are indexed by a set of $J$ control variate weights, the problem of finding $a^*(w)$ as in equation (\ref{eq:g2ta}) can be reduced to solving a mixed integer quadratically constrained program with $2 J + 2$ variables, one quadratic constraint, and one linear constraint.
\end{restatable}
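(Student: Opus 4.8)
The plan is to make the two sources of difficulty in \eqref{eq:g2ta} explicit and then remove each with a standard modeling step, the only non-routine move being a homogenization that collapses a product of a quadratic and a support-dependent term into a single convex quadratic constraint. First I would rewrite the variance estimate \eqref{eq:Gga} as an explicit quadratic form in $a$. Writing $g_m = g_{\base}(w,\xi_{km})$ and $C_m = C(w,\xi_{km})$, expanding $\Vert g_m + C_m a\Vert^2$ and averaging gives $\hat G^2(g_a,w) = a^\top Q a + b^\top a + c$, with $Q = \frac{1}{M}\sum_m C_m^\top C_m \succeq 0$, $b = \frac{2}{M}\sum_m C_m^\top g_m$, and $c = \frac{1}{M}\sum_m \Vert g_m\Vert^2 \ge 0$. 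By \eqref{eq:Tga} the cost is $\hat T(g_a) = T_0 + \sum_i t_i\,\mathbf{1}[a_i\neq 0]$ with $T_0 = \hat T(g_{\base}) > 0$ and $t_i = \hat T(c_i)$. Thus \eqref{eq:g2ta} minimizes the product of a convex quadratic and a piecewise-constant function of the support of $a$. The two obstacles are (i) the nonsmooth indicators $\mathbf{1}[a_i\neq 0]$, and (ii) the product itself, which written naively is cubic in the decision variables.

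For (i) I introduce binaries $z\in\{0,1\}^J$ tied to $a$ by the implication $z_i = 0 \Rightarrow a_i = 0$ (handled natively as indicator constraints, or via big-$M$ bounds), so that $z_i$ plays the role of $\mathbf{1}[a_i\neq 0]$, and one linear constraint $\tau = T_0 + \sum_i t_i z_i$ defines the cost. For (ii) — the key step — I homogenize via the change of variables $\tilde a = \tau a$; since $\tau > 0$ this is a bijection that preserves supports, and $\hat G^2 \cdot \hat T = \tau f(a) = \tilde a^\top Q \tilde a / \tau + b^\top \tilde a + c\tau$, where the first term is a quadratic-over-linear, hence jointly convex in $(\tilde a,\tau)$ on $\tau>0$ and expressible as a rotated second-order cone. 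Introducing an epigraph variable $s$ with the single quadratic constraint $\tilde a^\top Q \tilde a \le s\,\tau$ (a convex set because $Q\succeq 0$ and $s,\tau\ge 0$) then lets me minimize the linear objective $s + b^\top \tilde a + c\tau$. The decision variables are $\tilde a\in\mathbb{R}^J$, $z\in\{0,1\}^J$, $\tau$, and $s$, i.e. $2J+2$ in total, with one quadratic and one linear constraint, exactly as claimed.

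It then remains to verify that the reduction is exact. At any optimum the epigraph constraint is tight, so the objective equals $\tau f(a) = \hat T(g_a)\,\hat G^2(g_a,w)$, and $a^*(w) = \tilde a^*/\tau^*$ recovers the weights (the support is preserved since $\tau^*>0$). Conversely, every feasible point corresponds to a choice of support through $z$ together with weights on that support, so the program ranges over precisely the estimators in $\mathcal G$, and for each fixed $z$ the inner continuous problem returns $\tau$ times the minimal achievable variance on $\mathrm{supp}(z)$.

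I expect the main obstacle to be arguing this equivalence cleanly rather than any single calculation. One must confirm that the solver has no incentive to set a ``spurious'' $z_i = 1$ with $\tilde a_i = 0$: such a point pays the extra cost $t_i$ without reducing the variance, so it is dominated by the same $\tilde a$ with $z_i = 0$ whenever the minimal variance is positive, which keeps $z$ a faithful encoding of the support and of $\hat T$. The second point needing care is establishing that $\tilde a^\top Q \tilde a \le s\tau$ is genuinely convex — writing $Q = R^\top R$ and using $\tau \ge T_0 > 0$, $s \ge 0$ — so that the resulting MIQCP has a convex continuous relaxation and is dispatched quickly by off-the-shelf solvers.
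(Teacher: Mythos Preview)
Your reduction is correct and arrives at the same $2J+2$ variable / one quadratic / one linear count, but by a genuinely different route than the paper. The paper does the naive thing: it introduces slack variables $V_G,V_T$ with one quadratic epigraph constraint $V_G \ge \tfrac12 a^\top Q a + r^\top a + u$ and one linear constraint $V_T = t_0 + t^\top b$, and then minimizes the \emph{bilinear} objective $V_G \cdot V_T$ directly, relying on the solver to handle the resulting nonconvex quadratic objective. Your homogenization $\tilde a = \tau a$ is more work but buys something real: it turns the product $\tau f(a)$ into a quadratic-over-linear plus linear terms, so after the rotated-cone epigraph $\tilde a^\top Q \tilde a \le s\tau$ you get a \emph{linear} objective and a convex continuous relaxation, which is both a cleaner MIQCP and typically faster to solve. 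The paper's version is shorter to write down; yours is the stronger formulation.

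One small point worth tightening: the set $\{\tilde a^\top Q \tilde a \le s\tau\}$ is only convex once $s\ge 0$ and $\tau\ge 0$ are enforced. You note that $\tau \ge T_0 > 0$ comes from the linear constraint, and that then $s\ge 0$ is implied by the quadratic one since $Q\succeq 0$; that is fine for correctness of the optimum, but for a solver to recognize the constraint as a rotated second-order cone you will in practice need to declare the bound $s\ge 0$ explicitly. This is a bound, not an additional constraint in the theorem's count, so the statement is unaffected.
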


The proof idea is as follows: If one expands the norm, equation (\ref{eq:Gga}) can be written as a quadratic function $\hat{G}^2(g_a,w_k) = \frac{1}{2} a^\top Q a + r^\top a + u$, where $Q, r$ and $u$ depend on the $M$ evaluations of the base gradient $g_\base$ and control variates $C$. Using this, the minimization problem from equation (\ref{eq:g2ta}) can be expressed as
\begin{align}
\textstyle a^* & = \underset{a\in\mathbb{R}^J, b \in \{0, 1\}^J}{\arg\min} \left( \frac{1}{2} a^\top Q a + r^\top a + u \right) \times \left(t_0 + t^\top b\right) \nonumber \\
& \mathrm{s.t}\,\, b_i = \mathbf{1}[a_i \neq 0], \label{eq:MIQCPconv}
\end{align}
where $t_0=\hat{T}(g_\base)$, and $t_i=\hat{T}(c_i)$. Going from eq. (\ref{eq:MIQCPconv}) to a MIQCP is straightforward. Available solvers can solve the final MIQCP fast. The full proof and resulting MIQCP problem are shown in the appendix.

The final algorithm is summarized in Algorithm \ref{alg:g2tcv}.

\begin{algorithm}
\begin{small}
\caption{SGD with minimum $\hat{G}^2 \hat{T}$ estimator; estimators indexed by control variates weights.}
\label{alg:g2tcv}
\begin{algorithmic}[100]
\REQUIRE Set of available gradient estimators $\mathcal{G}$.
\REQUIRE Total opt time and times to re-select estimator.
\REQUIRE Number of MC samples $M$.
\STATE For $g_{\mathrm{base}}$ measure time $t_0$.
\STATE For all $i = 1,...,J$, measure $c_i$ time $t_i$.
\FOR{$k = 1, 2, \cdots$}
\IF{time to re-select estimator}
\STATE $a = \mathrm{argmin}_a\hat{G}^2(g_a, w_k) \hat{T}(g_a)$ (solve MIQCP).
\ENDIF
\STATE $g_a = g_\mathrm{base}(w, \xi_k) + \sum_{i:a_i\neq0} a_i c_i(w, \xi_k)$
\STATE $w_{k+1} = w_k - \eta_k\ g_a(w_k,\xi_k)$
\ENDFOR
\end{algorithmic}
\end{small}
\end{algorithm}


\section{Experiments and Results}
\label{sec:exps}

This section presents results that empirically validate Alg. \ref{alg:g2tcv} for control variate selection. We tackle variational inference problems on several probabilistic models using SGVI. We perform stochastic gradient momentum while using Algorithm \ref{alg:g2tcv} to select the gradient estimator at three different times throughout optimization. The set of candidate estimators is $\mathcal{G}_\mathrm{Auto}=\{g_a | a \in \mathbb{R}^3 \}$, where $g_a$ is as defined in eq. (\ref{eq:gacv}), the base estimator is plain reparameterization, and there are three candidate control variates ($c_1, c_2, c_3$).

The goal is to check if Alg. \ref{alg:g2tcv} successfully navigates time / variance tradeoffs. We thus compare against using \emph{fixed} subsets of control variates with the weights that minimize the estimator's variance (which can be estimated efficiently \cite{CVs}). We consider each possible subset of control variates $S\subseteq\{c_1, c_2, c_3\}$. This is essentially the same as Algorithm \ref{alg:g2tcv}, but with the set of estimators $\mathcal{G}_S$, which contains all estimators that use the fixed subset of control variates $S$. Since all estimators in $\mathcal{G}_S$ use the same control variates, they all have the same cost. Therefore, finding the estimator with optimal cost variance tradeoff (minimum $\hat{G}^2 \hat{T}$) reduces to finding the estimator with minimum $\hat{G}$.





\subsection{Experimental details}
\label{sec:exp_details}

\textbf{Tasks and datasets:} We use SGVI on several probabilistic models: (i) Bayesian logistic regression with the datasets Mushrooms and a1a, with a standard normal prior on the weights; (ii) a hierarchical Poisson model with the Frisk dataset \cite{frisk}; (iii) a Bayesian neural network with the Red-wine dataset, with a scaled standard normal prior on the weights and biases; and (iv) a Bayesian neural network with the Red-wine dataset, with a hierarchical distribution as prior on the weights and biases. (The likelihoods for (iii) and (iv) are the same, they only differ in the prior used. We denote them by BNN-A and BNN-B, respectively.) Details for all models are given in the appendix.

\textbf{Variational distribution $q_w(z)$: } For the simpler logistic regression models we use a Gaussian distribution with a full-rank covariance matrix as variational distribution. For the other more complex models we use a factorized Gaussian (diagonal covariance matrix).

\textbf{Base gradient estimator:} As base gradient estimator, $g_\base$, we use what seems to be the most common estimator. We compute the entropy term \small$\nabla_w \E_{q_w}[\log q_w(Z)]$ \normalsize in closed form, and estimate the term \small$\nabla_w \E_{q_w}[\log p(x, Z)]$ \normalsize with reparameterization.\footnote{Using $\mathcal{T}_w(\xi) = \mu + D^{1/2} \xi$, where $\xi \sim \mathcal{N}(0, I)$, $\mu$ is the mean of $q_w$ and $D^{1/2}$ is the Cholesky factorization of the covariance of $q_w$.}

\textbf{Control variates used:}
\begin{description} [leftmargin=*] \denselist
\vspace{-0.4cm}
  \item[$c_1$:] Difference between the entropy term computed exactly and estimated using reparameterization: \small$c(w, \xi) = \nabla_w \log q_w(\mathcal{T}_w(\xi)) - \nabla_w \E_{q_w} \log q_w(Z)$\normalsize.
  \item[$c_2$:] Control variate by Miller et al. \cite{traylorreducevariance_adam} based on a second order Taylor expansion of $\log p(x, z)$ (Sec \ref{sec:CVSVI}).
  \item[$c_3$:] Difference between the prior term computed exactly and estimated using reparameterization: \small$c(w, \xi) = \nabla_w \log p(\mathcal{T}_w(\xi)) - \nabla_w \E_{q_w} \log p(Z)$\normalsize.
\end{description}

\textbf{Algorithmic details:} For Alg. \ref{alg:g2tcv} we use $M = 400$ to estimate $\hat{G}^2$ (except for Logistic regression, where we use $M = 200$). We re-select the optimal estimator three times during training, initially, after 10\% of training is done, and after 50\% of training is done.

\textbf{Optimization details:} We use SGD with momentum ($\beta = 0.9$) with 5 samples $z \sim q_w(z)$ to form the Monte Carlo gradient estimates. For all models we find an initial set of parameters by optimizing with the base gradient for 300 steps and a fixed learning rate of $10^{-5}$. This initialization was helpful in practice because $w$ tends to change rapidly at the beginning of optimization. After this brief initialization, $G^2(g, w)$ tends to change much more slowly, meaning our technique is more helpful.


\subsection{Results}

\begin{figure*}[ht]
  \centering
  \includegraphics[width=0.29\linewidth]{./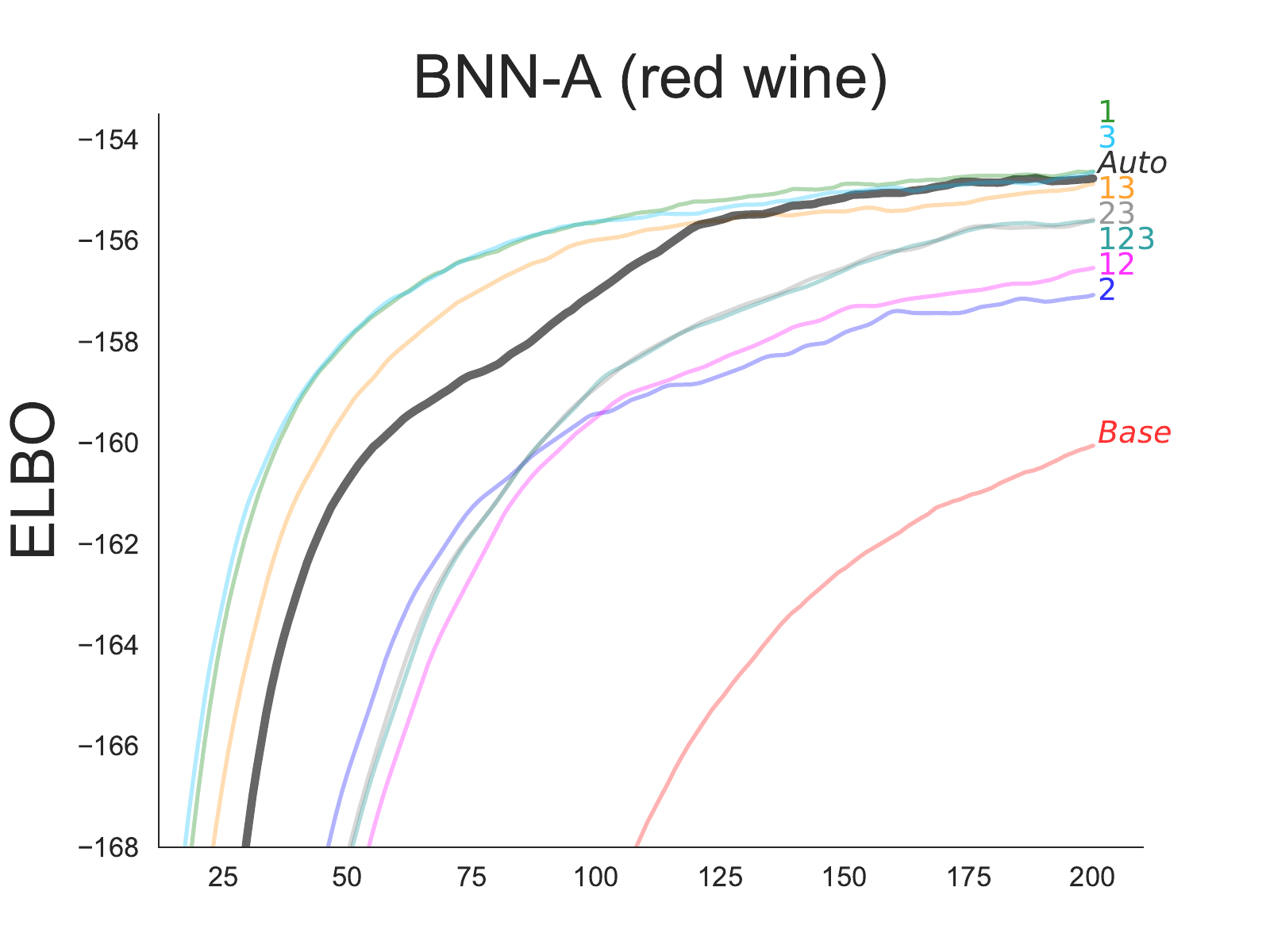}
  \includegraphics[width=0.29\linewidth]{./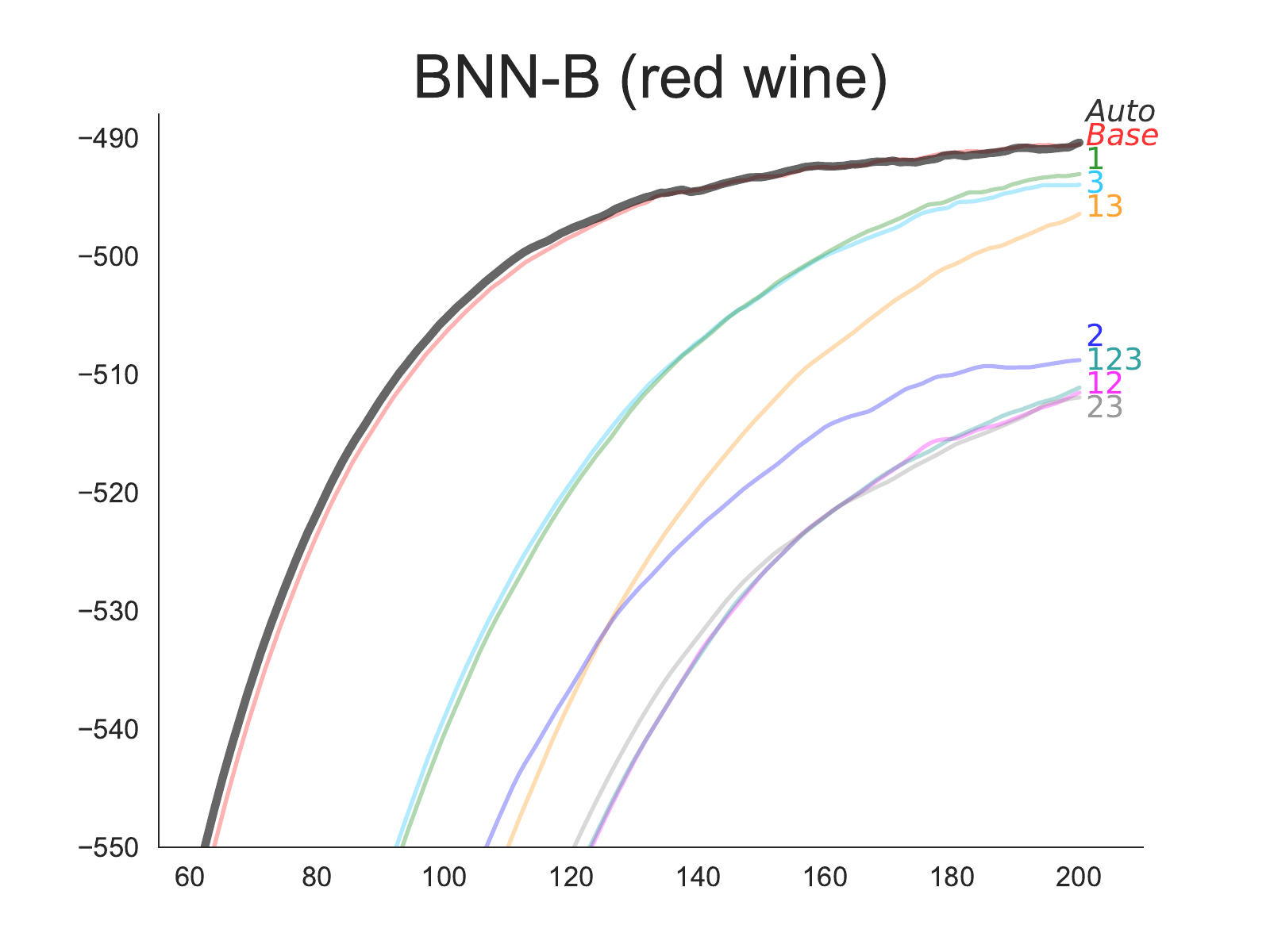}
  \includegraphics[width=0.29\linewidth]{./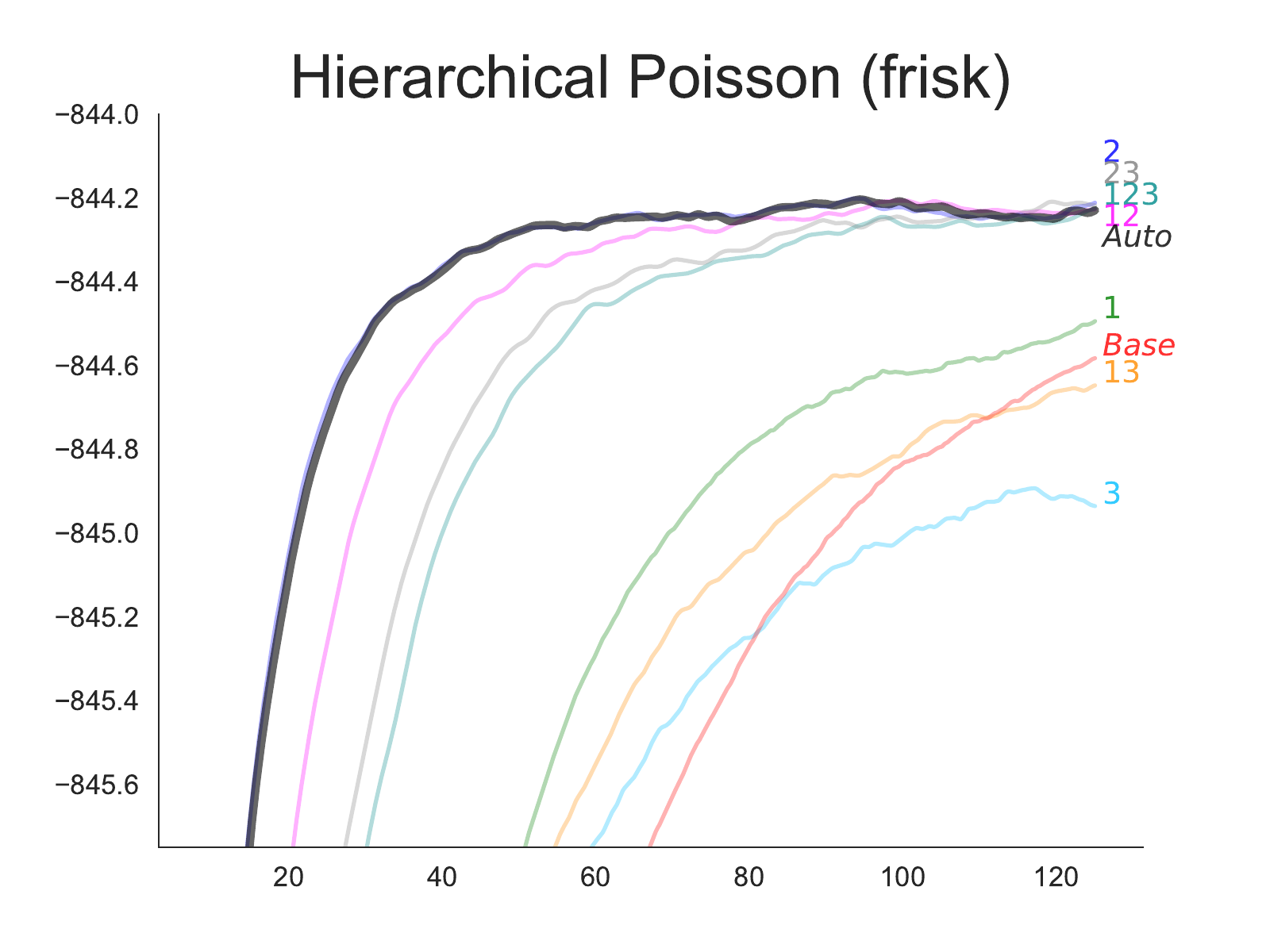}

  \includegraphics[width=0.29\linewidth]{./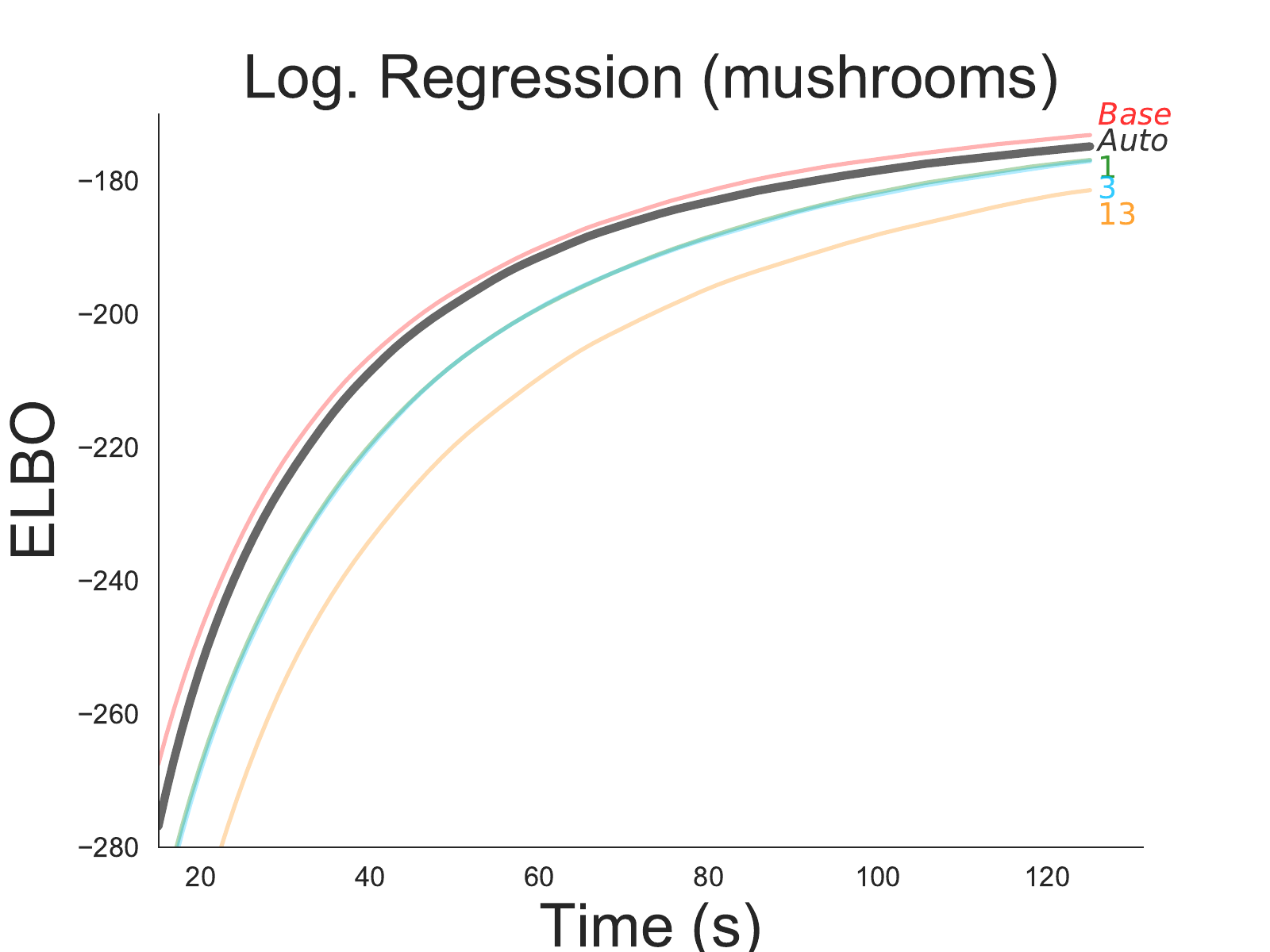}
  \includegraphics[width=0.29\linewidth]{./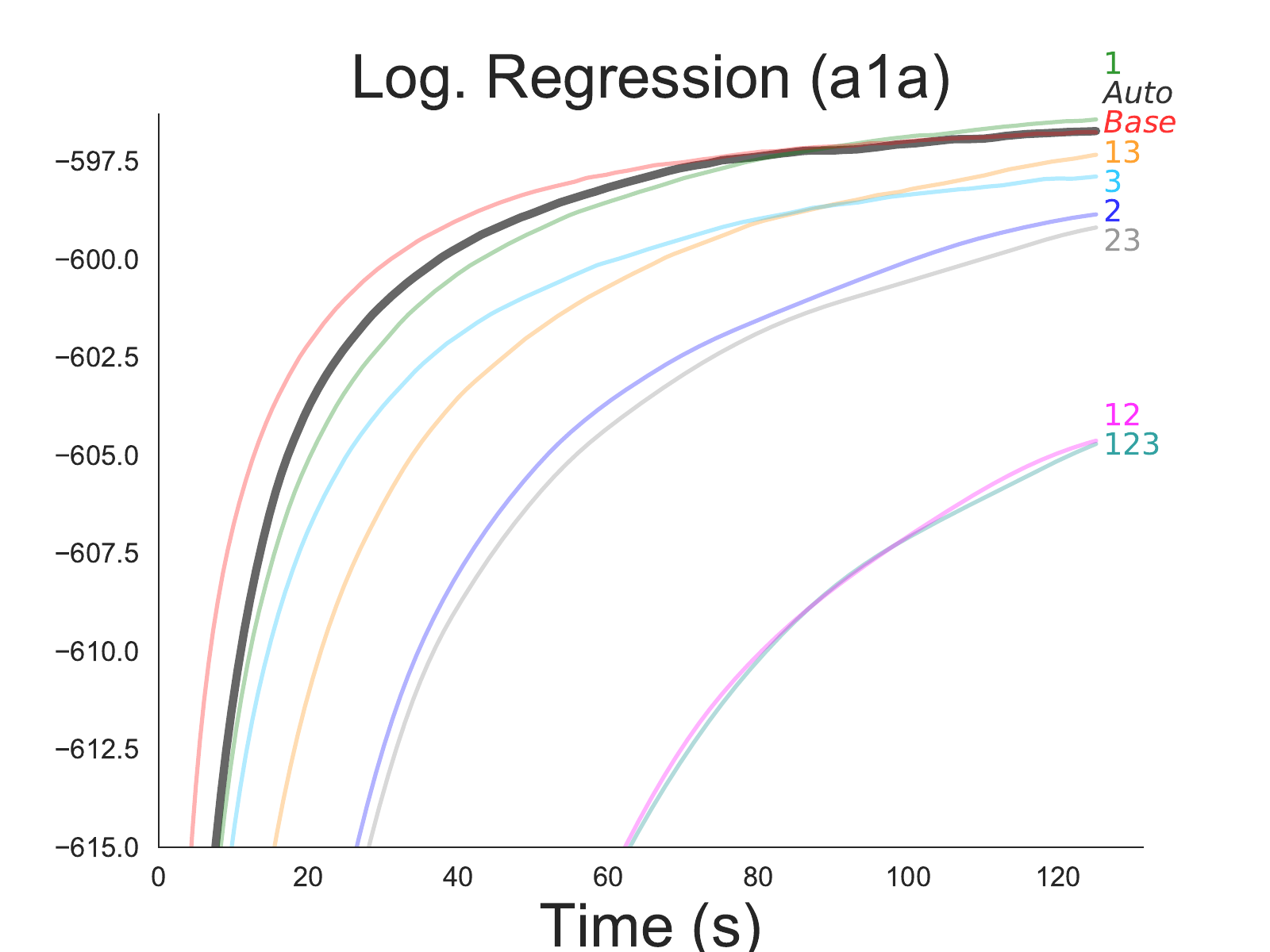}
  \caption{\textbf{Automatically selecting what control variates to use performs comparably to the best fixed set of control variates chosen with hindsight.} (Higher ELBO is better.) ``ELBO vs Time'' plots obtained using Alg. \ref{alg:g2tcv} to select what control variates to use and their weights (plotted in black). We compare against using different fixed subsets of control variates with the weights that minimize the estimator's variance. Five models considered, described in the text. Lines are identified as follows: ``Auto'' stands for using Alg. \ref{alg:g2tcv} to select what control variates to use and their weights, ``Base'' stands for optimizing using the base gradient alone, ``1'' stands for using the fixed set of control variates $\{c_1\}$ with the minimum variance weights, ``12'' stands for using the fixed set of control variates $\{c_1, c_2\}$, and so on. Overheads incurred by Algorithm \ref{alg:g2tcv} (profiling phase and MIQCP solving) are included in the times. (Note: For the model logistic regression (mushrooms) computing $c_2$ is slow. When $c_2$ is used, optimization does not converge in the amount of time provided.)}
  \label{fig:elbotime3}
\end{figure*}


Fig. \ref{fig:elbotime3} shows the optimization results (``ELBO vs. Time'' plots). All simulations were performed independently 20 times, and the average result is shown. The performance of all algorithms depends on the step-size. To give a fair comparison, Fig. \ref{fig:elbotime3} summarizes by showing the results with the best step-sizes.\footnote{12 stepsizes between $10^{-6}$ and $10^{-3}$ were considered.} It can be observed that automatically selecting what control variates to use and their weights (using Alg. 2) leads to good results for all models, as good as the ones obtained using the best fixed set of control variates chosen retrospectively. It can also be observed that results depend heavily on the model considered. For instance, for some models using the base gradient without any control variates appears to be the best choice (BNN-B), while for others it is suboptimal (BNN-A).

Raw results for individual step-sizes are shown in the appendix. The $G^2 T$ principle is not ideal when the step-sizes are chosen poorly. For example, if the step-size is chosen very small, convergence may be ``iteration limited'' rather than ``variance limited'', so that reducing variance of the gradient estimator does not speed up convergence. This is all consistent with our theory -- the convergence rates in Section \ref{sec:G2T} are based on well-chosen step-sizes.

Fig. \ref{fig:final} shows a summary of the results. For each model, it shows the final training ELBO achieved by using each possible fixed subset of control variates and the automatically chosen ones. It can be observed that there is no fixed subset of control variates that performs well across all models. However, using the automatically chosen control variates does.

\begin{figure}[ht]
  \centering
  \includegraphics[width=1.\linewidth]{./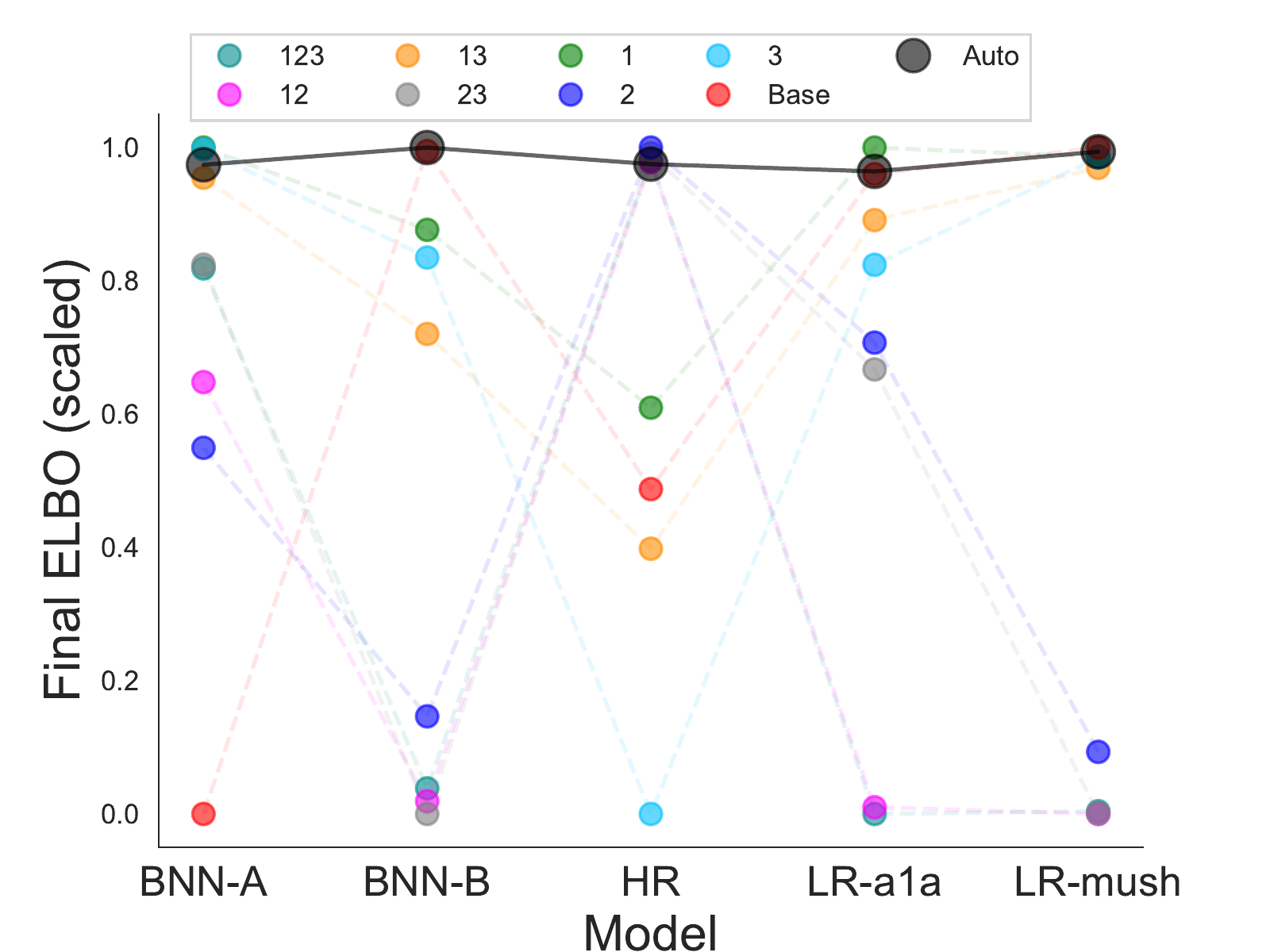}
  \caption{\textbf{The automatically chosen control variates perform well in all models.} (Higher ELBO is better.) For each model, we show the ELBO at the final time in Fig. \ref{fig:elbotime3}, with the lowest ELBO scaled to zero (bottom) and the highest scaled to one (top).}
  \label{fig:final}
\end{figure}

Figures \ref{fig:elbotime3} and \ref{fig:final} indicate that Alg. \ref{alg:g2tcv} selects a ``good'' set of control variates (and weights). What control variates are being chosen? This is shown in Table \ref{table:choices}. It can be observed that the algorithm makes different choices for different models, and that the control variates chosen depend on the stage of the training process (initially, or later stages). As shown in Fig. \ref{fig:elbotime3}, these choices lead to good results.

\renewcommand{\arraystretch}{1.1}
\begin{table}[ht]
\begin{small}
  \centering
  \begin{tabular}{llll}
    \toprule
    \multirow{2}{*}{\textbf{Model}} & \multicolumn{3}{c}{\textbf{Time of choice}} \\ \cmidrule{2-4}
    & $T = 0$ & $T = T_{opt}/10$ & $T = T_{opt}/2$\\ \midrule
    Log Reg (a1a) & $\{c_1, c_3\}$ & $\{c_1\}$ & $\{c_1\}$ \\
    Log Reg (mush) & $\emptyset$ & $\emptyset$ & $\{c_1\}$ \\
    Hier Poisson & $\{c_2\}$ & $\{c_2\}$ & $\{c_2\}$ \\
    BNN-A & $\{c_3\}$ & $\{c_1\}$ & $\{c_1\}$ \\
    BNN-B & $\emptyset$ & $\emptyset$ & $\emptyset$ \\
    \bottomrule
  \end{tabular}
  \caption{\textbf{Algorithm \ref{alg:g2tcv} selects different control variates for different models.} Control variates chosen by Algorithm \ref{alg:g2tcv}. Each column shows the set of control variates selected at some selection time. (In different runs different choices may ocurr. We shows the ``most popular'' choice across runs.)}
  \label{table:choices}
\end{small}
\end{table}
\vspace{-0.4cm}

\section{Conclusion}




Results show that there is no single optimal estimator. The best estimator depends on several factors, such as model, dataset, and implementation. Estimator selection must be done adaptively. We presented the ``$G^2 T$ principle'' and used it to derive two gradient estimator selection algorithms. We showed empirically that both these algorithms work well in practice on inference problems tackled using SGVI.

Despite the empirical success of the proposed algorithms, several open questions remain: Can convergence guarantees use ``local'' $G^2$ values, as opposed to (looser) global bounds?
Should algorithms adapt (e.g. by changing the step size) when the underlying gradient estimator changes? We plan to address these questions in future work.

\newpage
\clearpage
\bibliography{control}
\bibliographystyle{plain}
\clearpage


\section{Appendix}

\subsection{Details on models used}

Throughout the paper four different models are considered: two variants of a Bayesian neural network (BNN-A and BNN-B), a hierarchical Poisson model, and Bayesian logistic regression.

\textbf{Bayesian logistic regression: } We consider two different datasets:
``a1a'' and ``Mushrooms''. In all cases the training set is given by $\mathcal{D} = \{x_i, y_i\}$, where $y_i$ is binary. The model is specified by

\begin{align*}
w_i & \sim \mathcal{N}(0, 1),\\
p_i & = \left(1 + \exp (w_0 + w \cdot x_i)\right)^{-1},\\
y_i & \sim \mathrm{Bernoulli}(p_i).
\end{align*}

\textbf{Hierarchical Poisson model: } By Gelman et al. \cite{frisk}. The model measures the relative stop-and-frisk events in different precincts in New York city, for different ethnicities. The model is specified by

\begin{align*}
\mu & \sim \mathcal{N}(0, 10^2)\\
\log \sigma_\alpha & \sim \mathcal{N}(0, 10^2),\\
\log \sigma_\beta & \sim \mathcal{N}(0, 10^2),\\
\alpha_e & \sim \mathcal{N}(0, \sigma_\alpha^2),\\
\beta_p & \sim \mathcal{N}(0, \sigma_\beta^2),\\
\lambda_{ep} & = \exp(\mu + \alpha_e + \beta_p + \log N_{ep}),\\
Y_{ep} & \sim \mathrm{Poisson}(\lambda_{ep}).
\end{align*}

In this case, $e$ stands for ethnicity and $p$ for precinct, $Y_{ep}$ for the number of stops in precinct $p$ within ethnicity group $e$ (observed), and $N_{ep}$ for the total number of arrests in precinct $p$ within ethnicity group $e$ (observed).

\textbf{BNN-A: } As done by Miller et al. \cite{traylorreducevariance_adam} we use a subset of 100 rows from the ``Red-wine'' dataset (regression). We implement a neural network with one hidden layer with 50 units and Relu activations. Let $\mathcal{D} = \{x_i, y_i\}$ be the training set. The model is specified by

\begin{align*}
\log \alpha & \sim \mathcal{N}(0, 10^2),\\
\log \tau & \sim \mathcal{N}(0, 10^2),\\
w_i & \sim \mathcal{N}(0, \alpha^2), & \mbox{(weights and biases)}\\
\hat{y}_i & = \mathrm{FeedForward}(x_i, W),\\
y_i & \sim \mathcal{N}(\hat{y}_i, \tau^2).
\end{align*}

\textbf{BNN-B: } We use a subset of 200 rows from the ``Red-wine'' dataset (regression). We implement a neural network with one hidden layer with 50 units and Relu activations. Let $\mathcal{D} = \{x_i, y_i\}$ be the training set. The model is specified by

\begin{align*}
\log \tau & \sim \mathcal{N}(0, 5^2),\\
w_i & \sim \mathcal{N}(0, 5^2), & \mbox{(weights and biases)}\\
\hat{y}_i & = \mathrm{FeedForward}(x_i, W),\\
y_i & \sim \mathcal{N}(\hat{Y}_i, \tau^2).
\end{align*}

The only difference between BNN-A and BNN-B is in the prior used for the weights and biases.







\subsection{Mixed Integer Quadratic Program} \label{ap:MIQCP}

A mixed integer quadratic program is an optimization problem in which the objective function and constraints are quadratic (or linear), and some (or all) variables are restricted to be integers:
\begin{align}
\underset{x}{\mathrm{minimize}} \hspace{0.6cm} & \frac{1}{2} x^\top Q_0 x + r_0 ^\top x + u_0 \nonumber \\
\mbox{s.t.} \hspace{0.6cm} & \frac{1}{2} x^\top Q_i x + r_i ^\top x + u_i \geq 0 \hspace{0.5cm} i = 1, ..., m, \label{eq:origmiqcp}\\
& A x + b = 0, \nonumber
\end{align}
where $x \in \mathbb{R}^n$, $Q_0, ..., Q_m \in \mathbb{R}^{n\times n}$, and some components of $x$ are restricted to be integers.

We now prove Theorem \ref{thm:opta}.

\optimala*

\begin{proof}

Given
\begin{equation} \bar{T}(g_a) = \bar{T}(g_\base) + \sum_{i = 1}^J \bar{T}(c_i)\  \mathbf{1}[a_i \neq 0] \label{eq:pmt}\end{equation}
and
\begin{equation} \bar{G}^2(g_a, w) = \frac{1}{M} \sum_{m=1}^M \Vert g_{\base}(w, \xi_{m}) + C(w, \xi_{m}) a \Vert^2, \label{eq:pmg}\end{equation}
we want to find
\begin{equation} a^*(w) = \underset{a\in\mathbb{R}^J}{\arg\min} \,\, \bar{G}^2(g_a, w) \times \bar{T}(g_a). \label{eq:pma}\end{equation}

To simplify notation, we use $\bar{G}^2 = \bar{G}^2(g_a, w)$, $g_{bm} = g_{\base}(w, \xi_{m})$ and $C_m = C(w, \xi_m)$. Expanding the squared norm in eq. \ref{eq:pmg} we get
\begin{align}
\bar{G}^2 & = \frac{1}{M} \sum_{m=1}^M \Vert g_{bm} + C_m a \Vert^2 \nonumber\\
& = \frac{1}{M} \sum_{m=1}^M \left( g_{bm}^\top \, g_{bm} + 2 g_{bm}^\top C_m a + a^\top C_m^\top C_m a \right) \nonumber\\
& = \underbrace{\frac{1}{M} \sum_{m=1}^M \Vert g_{bm} \Vert^2}_{u_1} + \underbrace{\left( \frac{2}{M} \sum_{m=1}^M g_{bm}^\top C_m \right)}_{r_1} a\\
& + \frac{1}{2} a^\top \underbrace{\left( \frac{2}{M} \sum_{m=1}^M C_m^\top C_m \right)}_{Q_1} a \nonumber\\
& = u_1 + r_1^\top a + \frac{1}{2} a^\top Q_1 a. \label{eq:mg}
\end{align}

On the other hand, equation \ref{eq:pmt} can be expressed as
\begin{equation} \bar{T}(g_a) = t_0 + t^\top b, \,\, \mathrm{s.t.} \,\, b_i = \mathbf{1}[a_i \neq 0], \label{eq:mt} \end{equation}

where $t_0=\bar{T}(g_\base)$, and $t_i=\bar{T}(c_i)$. Using equations \ref{eq:mg} and \ref{eq:mt}, the minimization problem from equation \ref{eq:pma} can be expressed as
\small\begin{align} (a^*, b^*) & = \underset{a\in\mathbb{R}^J, b \in \{0, 1\}^J}{\arg\min} \left( \frac{1}{2} a^\top Q_1 a + r_1^\top a + u_1 \right) \times \left(t_0 + t^\top b\right), \nonumber\\
& \mathrm{s.t}\,\, b_i = \mathbf{1}[a_i \neq 0]. \label{eq:pmaa} \end{align}\normalsize

Introducing two extra varaibles, $V_G$ and $V_T$, we can express the minimization problem in eq. \ref{eq:pmaa} as 
\begin{align}
(a^*, b^*, V_G^*, V_T^*) & = \underset{a\in\mathbb{R}^J, b \in \{0, 1\}^J, V_G \in \mathbb{R}, V_T \in \mathbb{R} }{\arg\min} V_G \times V_T, \nonumber \\
& \mbox{s.t } V_G \geq \frac{1}{2} a^\top Q_1 a + r_1^\top a + u_1 \nonumber\\ 
& V_T = t_0 + t^\top b \nonumber \\
& b_i = \mathbf{1}[a_i \neq 0]. \label{eq:miqcpfinal}
\end{align}

The final minimization problem shown in equation \ref{eq:miqcpfinal} has the form of a general MIQCP, shown in equation \ref{eq:origmiqcp}, with the exception of the last constraint $b_i = \mathbf{1}[a_i \neq 0]$. Despite not being in the original definition of a MIQCP, several solver accept constraints of this type (Gurobi \cite{gurobi}, the solver used in our simulation, does).

\end{proof}

\subsection{SGD convergence rates} \label{ap:SGD}

Recall the following definitions:

\begin{description}
  \item[Convexity: ] $F$ is convex iff it $F(\theta x + (1 - \theta) y) \leq \theta F(x) + (1 - \theta) F(y) \,\, \forall \theta \in [0, 1]$.

  \item[Strong convexity: ] $F$ is $\lambda$-strongly convex iff $F(y) \geq F(x) + \nabla F(x)^\top (y - x) + \frac{\lambda}{2} ||y - x||^2 \,\, \forall x, y.$
  
  \item[Smoothness: ] $F$ has Lipschitz continuous gradient with constant $L$ ($F$ is $L$-smooth) iff $||\nabla F(y) - \nabla F(x)|| \leq L ||y - x|| \,\, \forall x, y.$\\
\end{description}

We now state the convergence rates presented in Table \ref{table:SGD} in more detail.

\begin{theorem}[F strongly convex; decaying step-size; By Rakhlin et al. \cite{shamir_SGDopt}] \label{thm:sc}
Let F be a $\lambda$-strongly convex function over a convex set $W$. If we assume $\E_\xi[||g(w, \xi)||^2] \leq G^2 \, \forall w\in W$ and set the step size $\eta_t = 1 / (\lambda t)$, then, after $K$ updates of SGD, we have
\[ \E \left[ ||w_K - w^*||^2 \right] \leq \frac{4}{\lambda^2} \frac{G^2}{K}; \,\, \mbox{ where } w^* = \mbox{argmin}_w F(w) \]
\end{theorem}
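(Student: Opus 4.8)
The plan is to track $a_k = \E\|w_k - w^*\|^2$ and derive a one-step recursion that, combined with $\eta_k = 1/(\lambda k)$, yields the claimed $O(1/K)$ decay by induction. First I would expand the update: writing $g_k = g(w_k,\xi_k)$, and noting that projection onto the convex set $W$ is non-expansive while $w^*\in W$, it suffices to bound the unprojected step, giving
\[ \|w_{k+1} - w^*\|^2 \leq \|w_k - w^*\|^2 - 2\eta_k \langle g_k, w_k - w^*\rangle + \eta_k^2 \|g_k\|^2. \]
Taking the expectation conditional on $w_k$ and using unbiasedness, $\E[g_k\mid w_k] = \nabla F(w_k)$, together with the variance bound $\E\|g_k\|^2 \leq G^2$, replaces the cross term by $\langle \nabla F(w_k), w_k - w^*\rangle$ and the quadratic term by $G^2$.

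The key structural step is to lower bound the inner product using strong convexity. Strong convexity makes the gradient strongly monotone, $\langle \nabla F(w_k) - \nabla F(w^*), w_k - w^*\rangle \geq \lambda\|w_k - w^*\|^2$, and the first-order optimality (variational) inequality for $w^*$ over $W$ gives $\langle \nabla F(w^*), w_k - w^*\rangle \geq 0$; adding these yields $\langle \nabla F(w_k), w_k - w^*\rangle \geq \lambda\|w_k - w^*\|^2$. Substituting and taking total expectations produces
\[ a_{k+1} \leq (1 - 2\eta_k \lambda)\,a_k + \eta_k^2 G^2. \]

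With $\eta_k = 1/(\lambda k)$ this becomes $a_{k+1} \leq (1 - 2/k)\,a_k + G^2/(\lambda^2 k^2)$, and the final step is to show $a_k \leq \frac{4G^2}{\lambda^2 k}$ by induction. Writing $c = 4G^2/\lambda^2$, the base case follows because at $k=1$ the factor $1 - 2/k = -1$ together with $a_1 \geq 0$ forces $a_2 \leq G^2/\lambda^2 = c/4 \leq c/2$. For the inductive step with $k \geq 2$ (where $1 - 2/k \geq 0$, so the hypothesis may be multiplied through) I would compute $a_{k+1} \leq (1 - 2/k)(c/k) + (c/4)/k^2 = c/k - 7c/(4k^2)$ and invoke the elementary inequality $c/k - 7c/(4k^2) \leq c/(k+1)$, which reduces to $4k \leq 7(k+1)$ and holds for all $k \geq 1$.

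I expect the main obstacle to be the bookkeeping at the start of the induction: because $\eta_1 = 1/\lambda$ makes the contraction factor $1-2\eta_1\lambda$ negative, one cannot control $a_1$ directly and the argument must instead be seeded at $k=2$, using only $a_1 \geq 0$. Matching the constant to exactly $4$ is the other delicate point, since the inductive hypothesis $c/k$ must absorb the noise term $G^2/\lambda^2 = c/4$; the value of the constant and the precise form of the step size are thus tightly coupled, and a looser choice of either would break the clean telescoping.
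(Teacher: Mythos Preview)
Your proposal is correct and reproduces exactly the argument of Rakhlin et al.\ (Lemma~1 in \cite{shamir_SGDopt}): the one-step recursion from strong monotonicity, the substitution $\eta_k = 1/(\lambda k)$, and the induction seeded at $k=2$ with the observation that the negative contraction factor at $k=1$ kills the dependence on the initial point. The paper itself does not give a proof --- it simply cites \cite{shamir_SGDopt} --- so there is nothing further to compare.
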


\begin{proof}
See Rakhlin et al. \cite{shamir_SGDopt}.
\end{proof}

\begin{corollary}[F strongly convex and smooth; decaying step-size; By Rakhlin et al. \cite{shamir_SGDopt}]\label{thm:scs}
Let F be an $L$-smooth and $\lambda$-strongly convex function over a convex set $W$. If we assume $\E_\xi[||g(w, \xi)||^2] \leq G^2 \, \forall w\in W$ and set the step size $\eta_t = 1 / (\lambda t)$, then, after $K$ updates of SGD, we have
\[ \E \left[ F(w_K) - F(w^*) \right] \leq \frac{2 L}{\lambda^2} \frac{G^2}{K}. \]
\end{corollary}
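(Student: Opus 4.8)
The plan is to derive this corollary directly from Theorem \ref{thm:sc}, which already bounds $\E[\|w_K - w^*\|^2]$, by using $L$-smoothness to convert a bound on the squared distance to the optimum into a bound on the suboptimality gap $F(w_K) - F(w^*)$. The whole argument is essentially a one-line consequence of smoothness applied to an already-established rate.

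First I would recall the standard quadratic upper bound implied by $L$-smoothness: for any $x, y$ we have $F(y) \leq F(x) + \nabla F(x)^\top (y - x) + \frac{L}{2}\|y - x\|^2$. Applying this with $x = w^*$ and $y = w_K$, and using that $w^*$ minimizes $F$ so that $\nabla F(w^*) = 0$, yields the pointwise bound $F(w_K) - F(w^*) \leq \frac{L}{2}\|w_K - w^*\|^2$.

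Next I would take expectations on both sides to obtain $\E[F(w_K) - F(w^*)] \leq \frac{L}{2}\,\E[\|w_K - w^*\|^2]$, and then substitute the bound furnished by Theorem \ref{thm:sc}, namely $\E[\|w_K - w^*\|^2] \leq \frac{4}{\lambda^2}\frac{G^2}{K}$. This immediately gives $\E[F(w_K) - F(w^*)] \leq \frac{L}{2}\cdot\frac{4}{\lambda^2}\frac{G^2}{K} = \frac{2L}{\lambda^2}\frac{G^2}{K}$, which is exactly the claimed rate.

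The only subtle point, and the step I would flag as needing care, is the use of $\nabla F(w^*) = 0$. This holds when $w^*$ is an unconstrained minimizer or an interior point of $W$; over a general convex set one would instead invoke the first-order optimality condition $\nabla F(w^*)^\top(w_K - w^*) \geq 0$, which only strengthens the inequality since the linear term then contributes nonnegatively. Beyond this bookkeeping there is no genuine obstacle, as the core estimate $\E[\|w_K - w^*\|^2] \leq \frac{4}{\lambda^2}\frac{G^2}{K}$ is exactly what Theorem \ref{thm:sc} supplies.
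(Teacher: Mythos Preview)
Your main argument is correct and is exactly the standard derivation: the paper itself does not give a proof but defers to Rakhlin et al., and what you wrote is precisely how that corollary is obtained there---apply the smoothness upper bound at $w^*$ and plug in the iterate-distance rate from Theorem~\ref{thm:sc}.

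One small correction to your aside on the constrained case: the first-order optimality condition $\nabla F(w^*)^\top(w_K - w^*) \geq 0$ does \emph{not} help you. From smoothness you have
\[
F(w_K) - F(w^*) \leq \nabla F(w^*)^\top(w_K - w^*) + \tfrac{L}{2}\|w_K - w^*\|^2,
\]
and a nonnegative linear term on the right cannot be dropped; it weakens, not strengthens, the bound you want. So the clean inequality $F(w_K)-F(w^*)\le \tfrac{L}{2}\|w_K-w^*\|^2$ really does rely on $\nabla F(w^*)=0$ (unconstrained or interior minimizer), which is the setting Rakhlin et al.\ work in. This does not affect the validity of your core derivation.
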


\begin{proof}
See Rakhlin et al. \cite{shamir_SGDopt}.
\end{proof}

\begin{theorem}[F convex; constant step-size; Nemirovski et al. \cite{nemirovski2009robust}]\label{thm:c}
Let F be a convex function over a convex set $W$. Assume $\E_\xi[||g(w, \xi)||^2] \leq G^2 \, \forall w\in W$. Then, after $K$ updates of SGD using the optimal learning rate $\eta^* = \frac{Dw}{G\sqrt{K}}$ we have
\[ \E \left[ F(\bar{w}) - F(w^*) \right] \leq D_w \frac{G}{\sqrt{K}}, \]

where $\bar{w} = \frac{1}{K} \sum_{i=1}^K w_i$ and $D_w = ||w_0 - w^*||$.
\end{theorem}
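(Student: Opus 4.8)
The plan is to follow the standard ``one-step contraction, telescope, then average'' argument for projected SGD on a convex set, and to finish by optimizing the constant step size. Throughout I write $g_k = g(w_k,\xi_k)$ and let $\Pi_W$ denote Euclidean projection onto $W$, so that the update reads $w_{k+1} = \Pi_W(w_k - \eta g_k)$ with $\eta$ held fixed.

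First I would establish a one-step inequality on the distance to the optimum. Because projection onto a convex set is nonexpansive and $w^*\in W$, we have $\|w_{k+1}-w^*\|^2 \le \|w_k - \eta g_k - w^*\|^2$. Expanding the right-hand side gives $\|w_k-w^*\|^2 - 2\eta\, g_k^\top(w_k-w^*) + \eta^2\|g_k\|^2$. Taking the conditional expectation given $w_k$ and using unbiasedness, $\E[g_k\mid w_k]=\nabla F(w_k)$, together with the second-moment bound $\E[\|g_k\|^2\mid w_k]\le G^2$, yields
\[ \E\big[\|w_{k+1}-w^*\|^2 \mid w_k\big] \le \|w_k-w^*\|^2 - 2\eta\, \nabla F(w_k)^\top(w_k-w^*) + \eta^2 G^2. \]
Convexity of $F$ then supplies the key inequality $\nabla F(w_k)^\top(w_k-w^*)\ge F(w_k)-F(w^*)$, so after taking full expectations and rearranging I get
\[ 2\eta\,\E\big[F(w_k)-F(w^*)\big] \le \E\|w_k-w^*\|^2 - \E\|w_{k+1}-w^*\|^2 + \eta^2 G^2. \]

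Next I would telescope and average. Summing the last display over the $K$ iterations collapses the distance terms to $\|w_0-w^*\|^2 - \E\|w_K-w^*\|^2 \le D_w^2$, giving $2\eta\sum_{k} \E[F(w_k)-F(w^*)] \le D_w^2 + K\eta^2 G^2$. Dividing by $2\eta K$ and invoking Jensen's inequality on the convex $F$ to replace the average of the $F(w_k)$ by $F(\bar w)$, I obtain $\E[F(\bar w)-F(w^*)] \le \frac{D_w^2}{2\eta K} + \frac{\eta G^2}{2}$.

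Finally I would optimize the bound over $\eta$. The right-hand side is a sum of one term decreasing in $\eta$ and one increasing in $\eta$; setting its derivative to zero gives $\eta^* = D_w/(G\sqrt K)$, and substituting back makes the two terms equal, each contributing $\tfrac12 D_w G/\sqrt K$, for a total of $D_w\,G/\sqrt K$, exactly as claimed. There is no deep obstacle here, since this is a classical bound; the only point requiring genuine care is the projection step, where nonexpansiveness of $\Pi_W$ --- which is precisely where convexity of $W$ enters --- is what lets me discard the projection and work with the plain expansion of the unconstrained step. A secondary bookkeeping subtlety is aligning the index convention so that the distance surviving the telescope is $\|w_0-w^*\|^2=D_w^2$ while the averaged iterate is $\bar w=\frac1K\sum_{i=1}^K w_i$.
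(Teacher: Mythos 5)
Your proof is correct and is the standard projected-SGD averaging argument; the paper gives no proof of its own for this theorem beyond the citation ``See Nemirovski et al.,'' and your argument is essentially the one found in that reference (one-step contraction via nonexpansiveness of the projection, convexity to lower-bound the inner product $\nabla F(w_k)^\top(w_k-w^*)$, telescoping, Jensen's inequality for the averaged iterate, and optimizing the constant step size to balance the two terms). The only discrepancy is the index convention --- your telescope naturally bounds the average of $w_0,\dots,w_{K-1}$ rather than $w_1,\dots,w_K$ --- which, as you correctly note, is harmless relabeling of the initial iterate and does not affect the bound $D_w G/\sqrt{K}$.
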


\begin{proof}
See Nemirovski et al. \cite{nemirovski2009robust}.
\end{proof}

\begin{theorem}[F smooth; constant step-size]\label{thm:c}
Let F be an $L$-smooth function. Assume $\E_\xi[||g(w, \xi)||^2] \leq G^2 \, \forall w$. Then, after $K$ updates of SGD using the optimal learning rate $\eta^* = \sqrt{\frac{2 D_f}{L K G^2}}$ we have
\[ \E \left[ \frac{1}{K} \sum_{i = 1}^K ||\nabla F(w_i)||^2 \right] \leq \sqrt{L D_f} \frac{G}{\sqrt{K}}, \]
where $D_f = \E[F(w_0) - F(w^*)]$.
\end{theorem}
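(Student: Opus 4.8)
The plan is to run the standard ``descent lemma'' argument for nonconvex SGD and then optimize the constant step-size at the very end. First I would invoke $L$-smoothness, which gives the quadratic upper bound
\[ F(w_{k+1}) \le F(w_k) + \nabla F(w_k)^\top (w_{k+1}-w_k) + \frac{L}{2}\Vert w_{k+1}-w_k\Vert^2. \]
Substituting the SGD update $w_{k+1}=w_k-\eta\,g(w_k,\xi_k)$ turns the right-hand side into $F(w_k) - \eta\,\nabla F(w_k)^\top g(w_k,\xi_k) + \tfrac{L\eta^2}{2}\Vert g(w_k,\xi_k)\Vert^2$, so all the estimator-dependent randomness is now localized in the last two terms.

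Next I would take the expectation over $\xi_k$ conditioned on $w_k$. Unbiasedness, $\E_{\xi_k} g(w_k,\xi_k)=\nabla F(w_k)$, collapses the cross term to $-\eta\Vert\nabla F(w_k)\Vert^2$, while the assumption $\E_{\xi_k}\Vert g(w_k,\xi_k)\Vert^2\le G^2$ bounds the last term by $\tfrac{L\eta^2 G^2}{2}$. Passing to the full expectation over the history and rearranging isolates the gradient norm,
\[ \eta\,\E\Vert\nabla F(w_k)\Vert^2 \le \E F(w_k) - \E F(w_{k+1}) + \frac{L\eta^2 G^2}{2}. \]
Summing this telescoping inequality over the $K$ iterations and using $F(w^*)\le F(w_K)$ bounds the accumulated decrease by $D_f=\E[F(w_0)-F(w^*)]$, which gives $\eta\sum_k \E\Vert\nabla F(w_k)\Vert^2 \le D_f + \tfrac{K L\eta^2 G^2}{2}$.

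Dividing by $\eta K$ yields the averaged bound $\frac{1}{K}\sum_k\E\Vert\nabla F(w_k)\Vert^2 \le \frac{D_f}{\eta K} + \frac{L\eta G^2}{2}$. The final step is a one-dimensional minimization of this right-hand side over the constant step-size $\eta$; setting the derivative $-D_f/(\eta^2 K) + LG^2/2$ to zero produces exactly the stated $\eta^*=\sqrt{2D_f/(LKG^2)}$, and substituting it back balances the two terms into a single expression of the claimed order $\sqrt{L D_f}\,G/\sqrt{K}$ (the precise scalar constant is then pure algebra).

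The only genuine subtlety — and the step I would be most careful about — is the expectation bookkeeping. Because each $w_k$ is itself random, depending on $\xi_0,\dots,\xi_{k-1}$, the unbiasedness identity and the variance bound must be applied \emph{conditionally} on $w_k$ and only then lifted via the tower property before the telescoping sum is taken; conflating the conditional and unconditional expectations is the easy place to slip. Everything else — the descent lemma, the telescoping with the lower bound $F(w^*)$, and the scalar minimization over $\eta$ — is mechanical.
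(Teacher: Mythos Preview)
Your proposal is correct and follows essentially the same route as the paper's proof: descent lemma, substitute the SGD step, take expectations (using unbiasedness and the $G^2$ bound), telescope, lower-bound $F(w_K)$ by $F(w^*)$, and optimize the constant step-size. If anything, your explicit handling of the conditional-versus-full expectation via the tower property is more careful than the paper's, which glosses over that point.
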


\begin{proof}

This proof is a straightforward adaptation from the one by Bottou et al. \cite{bottou_SGDreview}.

\small
\begin{align*}
F(w_{t+1}) & \leq F(w_t) + \nabla F(w_t)^\top (w_{t+1} - w_t) + \frac{L}{2} ||w_{t+1} - w_t||^2 \\
		   & = F(w_t) - \eta_t \nabla F(w_t)^\top g_t + \frac{L}{2} ||\eta_t g_t||^2 \\
		   & = F(w_t) - \eta_t \nabla F(w_t)^\top g_t + \frac{L \eta_t^2}{2} ||g_t||^2.
\end{align*}
\normalsize

Taking the expectation on both sides we get

\small
\begin{align*}
\E[F(w_{t+1}) - F(w_t)] & \leq - \eta_t \nabla \E[F(w_t)^\top g_t] + \frac{L \eta_t^2}{2} \E ||g_t||^2 \\
						& = - \eta_t \E ||\nabla F(w_t)||^2 + \frac{L \eta_t^2}{2} G^2.
\end{align*}
\normalsize

If we take $\eta_t = \eta$, and sum up both sides of the inequality we get

\small
\begin{align*}
\sum_{t = 0}^{K-1} \E[F(w_{t+1}) - F(w_t)] & \leq - \eta \sum_{t = 0}^{K-1} \E ||\nabla F(w_t)||^2 + \frac{K L \eta^2}{2} G^2 \\
\E[F(w_{K}) - F(w_0)] & \leq - \eta \sum_{t = 0}^{K-1} \E ||\nabla F(w_t)||^2 + \frac{K L \eta^2}{2} G^2.
\end{align*}
\normalsize

Re-arranging and using the fact that $F(w^*) \leq F(w_K)$ gives

\small
\begin{align*}
\E \left[ \frac{1}{K} \sum_{t = 0}^{K-1} ||\nabla F(w_t)||^2 \right] & \leq \frac{L \eta}{2} G^2 + \frac{F(w_0) - F(w^*)}{\eta K} \\
																	 & = \frac{L \eta}{2} G^2 + \frac{D_f}{\eta K}.
\end{align*}
\normalsize

Where $D_f = F(w_0) - F(w^*)$. The value for $\eta$ that minimizes the right hand side of the last inequality is $\eta^* = \sqrt{\frac{D_f}{L K G^2}}$. Using $\eta = \eta^*$ yields

\small
\begin{align*}
\E \left[ \frac{1}{K} \sum_{t = 0}^{K-1} ||\nabla F(w_t)||^2 \right] & \leq \sqrt{\frac{L (F(w_0) - F(w^*)) G^2}{K}}.
\end{align*}
\normalsize

\end{proof}

Finally, slightly different versions of the last two bounds in Table \ref{table:SGD} were proposed by Yang et al. \cite{yang_momentum}. The authors carry out a unified analysis for stochastic momentum methods using a parameter $s$; if $s=0$ the algorithm results in the heavy ball method, and if $s=1$ in a stochastic variant of Nesterov's accelerated gradient. They state the following result:

\begin{theorem}[F smooth; constant step-size; stochastic momentum methods; Yang et al. \cite{yang_momentum}]
Let F be a (possibly non convex) $L$-smooth function, Assume $\E_\xi[||g(w, \xi) - \nabla F(w)||^2] \leq \delta^2$ and $||\nabla F(w)||^2 \leq V^2$. Then, after $K$ updates of the proposed stochastic momentum method ($\beta$) with learning rate $\eta = \mathrm{min}\left \{ \frac{1-\beta}{2L}, \frac{C}{\sqrt{T}} \right\}$, we have
\[\underset{k=0,...,K}{\min} \E ||\nabla F(w_k)||^2 \leq \frac{2 D_f (1-\beta)}{T} \mathrm{max} \left\{ \frac{2L}{1-\beta}, \frac{\sqrt{T}}{C} \right\}\] 
\[+ \frac{C}{\sqrt{T}} \frac{L \beta^2 \left( (1 - \beta)s - 1 \right)^2 (V^2 + \delta^2) + L \delta^2 (1-\beta)^2}{(1-\beta)^3},\]
where $D_f = F(w_0) - F(w^*)$. 
\end{theorem}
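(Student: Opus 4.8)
Since the result is attributed to Yang et al., the cleanest route is to reproduce their surrogate-sequence argument, which reduces the momentum recursion to an effective SGD recursion on which the smoothness descent lemma applies. First I would write the unified stochastic momentum update in the form $w_{k+1} = w_k + \beta(w_k - w_{k-1}) - \eta\, g(y_k, \xi_k)$, where the gradient is evaluated at the extrapolated point $y_k = w_k + s\beta(w_k - w_{k-1})$, so that $s=0$ recovers the heavy-ball method and $s=1$ the Nesterov-type variant, with the convention $w_{-1}=w_0$. The key device is the auxiliary sequence $p_k = w_k + \frac{\beta}{1-\beta}(w_k - w_{k-1})$. A direct substitution of the update shows that all momentum terms cancel and $p_{k+1} - p_k = -\frac{\eta}{1-\beta}\, g(y_k,\xi_k)$; i.e. $p_k$ evolves exactly like plain SGD with effective step size $\eta/(1-\beta)$. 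This reduction is what makes the smooth-SGD machinery applicable and, notably, holds regardless of $s$, so the $s$-dependence enters only through the evaluation point $y_k$.

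Next I would apply the $L$-smoothness descent inequality to $p_k$, namely $F(p_{k+1}) \le F(p_k) - \frac{\eta}{1-\beta}\nabla F(p_k)^\top g_k + \frac{L\eta^2}{2(1-\beta)^2}\Vert g_k\Vert^2$. Taking the conditional expectation and using unbiasedness $\E g_k = \nabla F(y_k)$ turns the cross term into $\nabla F(p_k)^\top \nabla F(y_k)$, while the variance assumption gives $\E \Vert g_k\Vert^2 \le V^2 + \delta^2$. The work then goes into replacing both $\nabla F(p_k)$ and $\nabla F(y_k)$ by $\nabla F(w_k)$ so as to surface the target quantity $\Vert\nabla F(w_k)\Vert^2$. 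Since $\Vert p_k - w_k\Vert = \frac{\beta}{1-\beta}\Vert w_k - w_{k-1}\Vert$ and $\Vert y_k - w_k\Vert = s\beta\Vert w_k-w_{k-1}\Vert$, smoothness bounds the resulting errors by multiples of $L\Vert w_k - w_{k-1}\Vert$, and the factor $(1-\beta)s - 1$ appearing in the final constant is exactly what results from combining these two displacements.

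Finally I would control the momentum displacement by unrolling $w_k - w_{k-1} = \sum_{j<k}\beta^{k-1-j}(-\eta g_j)$ and bounding $\E\Vert w_k - w_{k-1}\Vert^2$ through a geometric series, yielding a term of order $\frac{\eta^2}{(1-\beta)^2}(V^2+\delta^2)$. Summing the descent inequality over $k=0,\dots,K-1$, telescoping with $F(p_0)=F(w_0)$ and $F(p_K)\ge F(w^*)$, and dividing by $K$ produces a bound of the shape $\min_k \E\Vert\nabla F(w_k)\Vert^2 \le \frac{(1-\beta)D_f}{\eta K} + \eta\cdot(\text{const})$. Substituting the stated $\eta = \min\{\frac{1-\beta}{2L}, \frac{C}{\sqrt T}\}$, so that $1/\eta = \max\{\frac{2L}{1-\beta},\frac{\sqrt T}{C}\}$, then reproduces the two displayed terms.

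I expect the main obstacle to be the bookkeeping of the displacement and cross terms: keeping the geometric accumulation of $\Vert w_k - w_{k-1}\Vert^2$ honest while collapsing the $\beta$- and $s$-dependent constants into the exact expression $\frac{L\beta^2((1-\beta)s-1)^2(V^2+\delta^2)+L\delta^2(1-\beta)^2}{(1-\beta)^3}$. Everything else is a routine adaptation of the smooth-SGD analysis already carried out in the preceding theorem. Because the statement is exactly that of Yang et al., an equally valid route would be simply to invoke their result directly.
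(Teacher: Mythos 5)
Your proposal is correct, and its closing remark captures exactly what the paper does: the paper provides no proof of this theorem at all. It is stated purely as an attributed result ("They state the following result"), and the only original content in the appendix is the remark that follows the statement, explaining how the Table~2 entries are recovered by setting $s\in\{0,1\}$, optimizing $C$, bounding $\E\|g\|^2 \le \delta^2 + V^2 = G^2$, and assuming $K$ large enough. So the citation route you mention at the end coincides with the paper; your sketch of the internal argument is extra. As a reconstruction of Yang et al.'s proof, the sketch is structurally faithful (auxiliary sequence, descent lemma at the surrogate point, geometric-series bound on $\E\|w_k-w_{k-1}\|^2$, telescoping, and $1/\eta = \max\{2L/(1-\beta),\sqrt{T}/C\}$), but one detail deserves a caveat: in Yang et al.'s actual unified scheme the stochastic gradient is evaluated at $x_k$ itself and the parameter $s$ enters through the update coefficients, so the auxiliary sequence $z_k = x_k + \tfrac{\beta}{1-\beta}(x_k-x_{k-1})$ satisfies a \emph{perturbed} SGD recursion, $z_{k+1} = z_k - \tfrac{\eta}{1-\beta}g_k + \tfrac{s\eta\beta}{1-\beta}(g_{k-1}-g_k)$, rather than cancelling exactly; that residual term is precisely the source of the $\left((1-\beta)s-1\right)^2$ factor in the constant. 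Your claim that the momentum terms cancel "regardless of $s$" holds only in your reparameterization where the gradient is taken at the extrapolated point $y_k$, in which case the bound naturally lands on $\nabla F$ at the evaluation points rather than on your primary iterates $w_k$; the two formulations are equivalent up to relabeling, but the bookkeeping you flag as the main obstacle is exactly where this must be reconciled to recover the stated constants.
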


The results shown in Table \ref{table:SGD} are obtained by: 1) setting $s=0$ (momentum) or $s=1$ (Nesterov); 2) finding the optimal $C$; 3) bounding $\E_\xi ||g(w, \xi)||^2 \leq \delta^2 + V^2 = G^2$; and 4) assuming that optimization is performed for a large enough number of steps $\left(K \geq \frac{4L^2 C^2}{1-\beta}\right)$.



\subsection{Raw results for individual step-sizes}

\clearpage
\newpage


\begin{figure*}[ht]
  \centering
  \includegraphics[width=0.45\linewidth]{./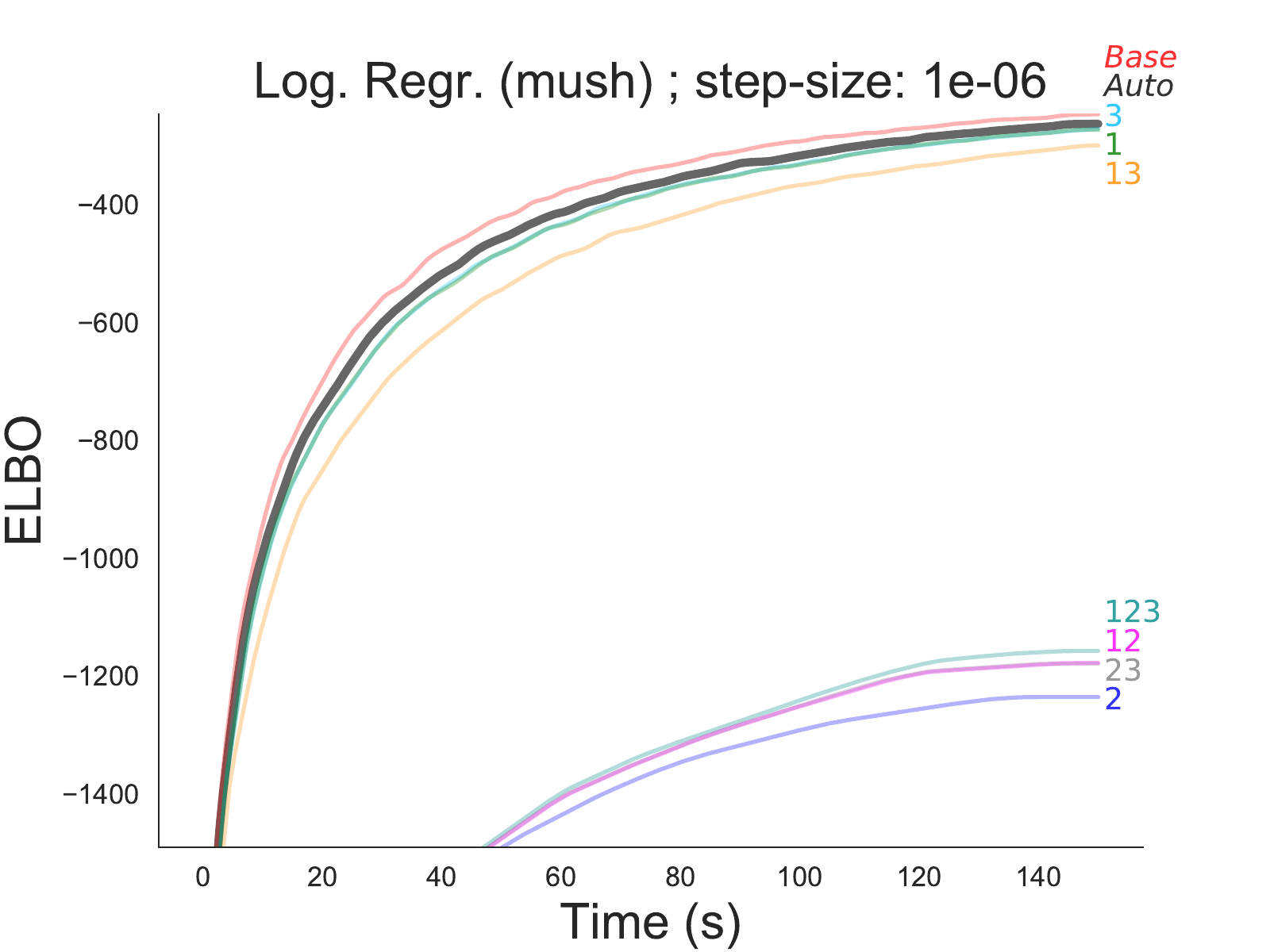}
  \includegraphics[width=0.45\linewidth]{./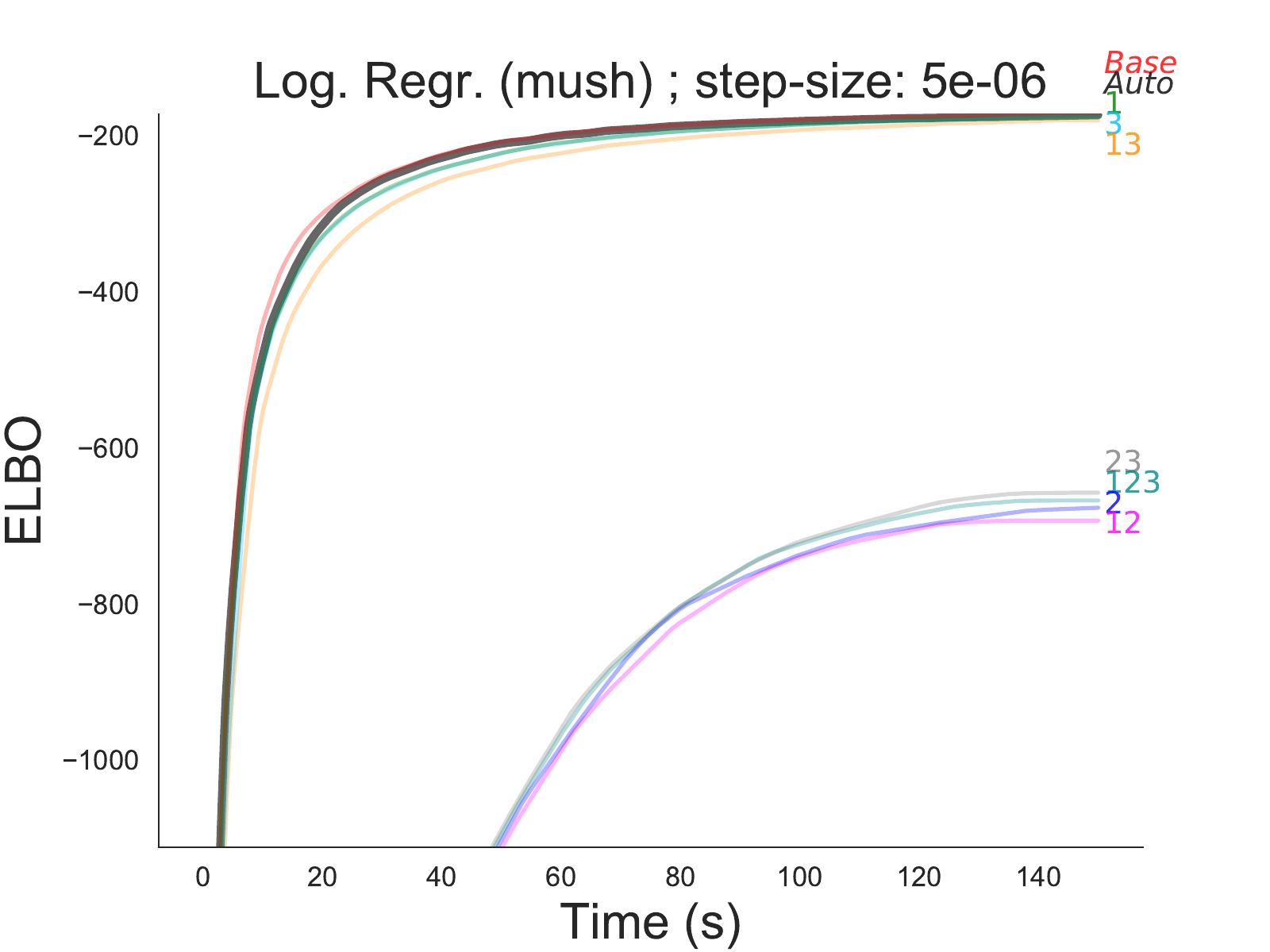}

  \includegraphics[width=0.45\linewidth]{./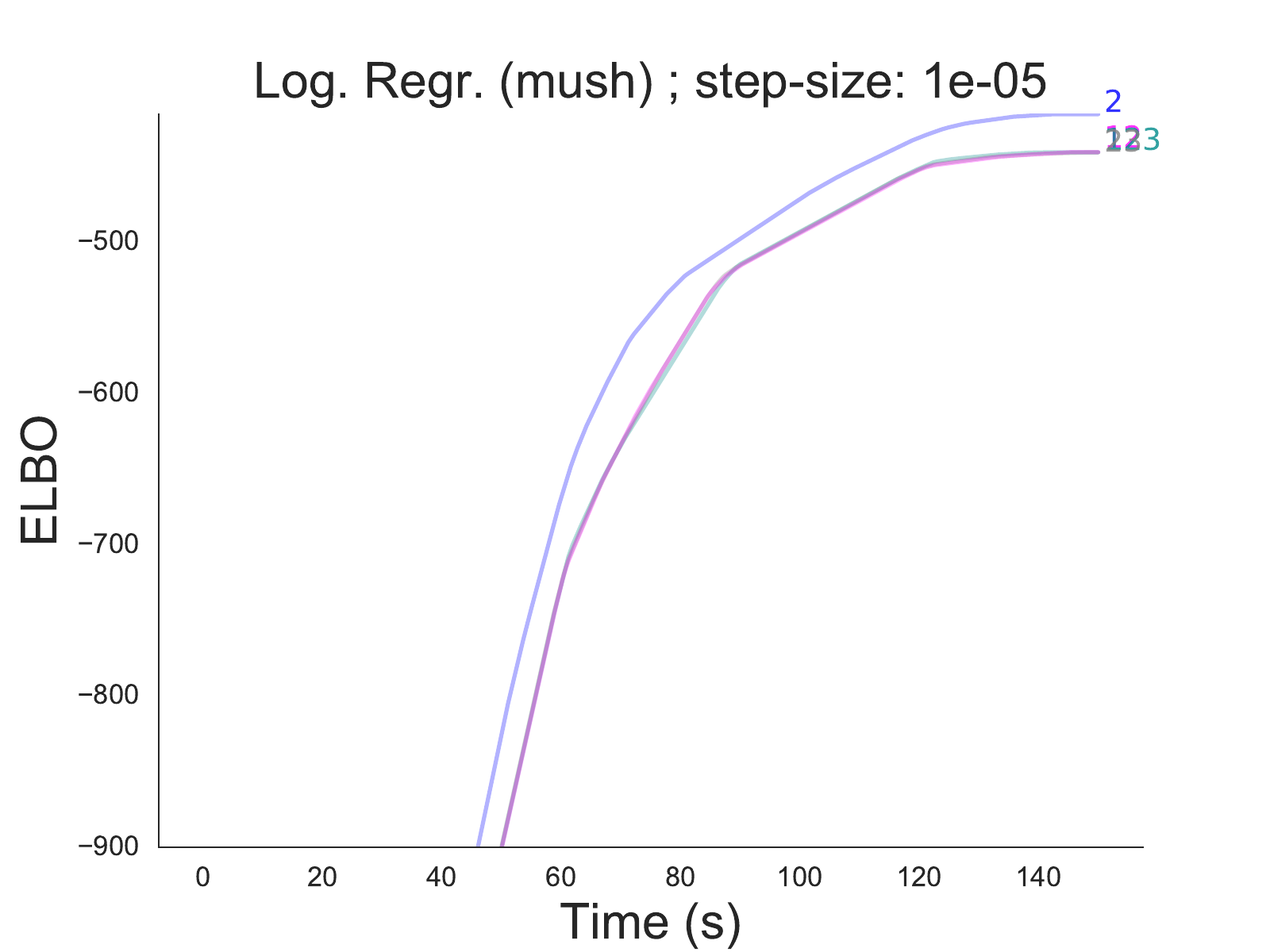}
\end{figure*}

\clearpage
\newpage


\begin{figure*}[ht]
  \centering
  \includegraphics[width=0.45\linewidth]{./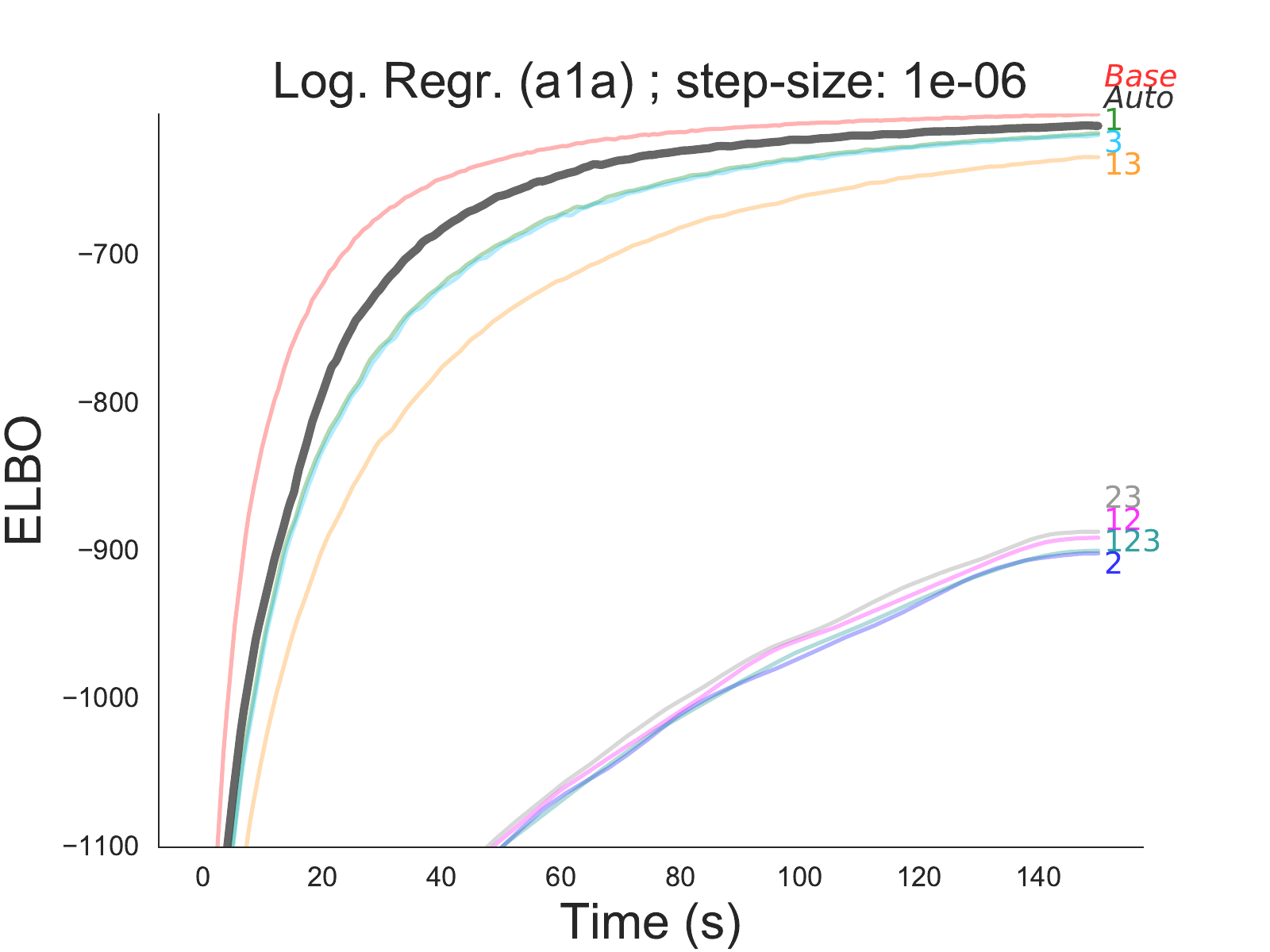}
  \includegraphics[width=0.45\linewidth]{./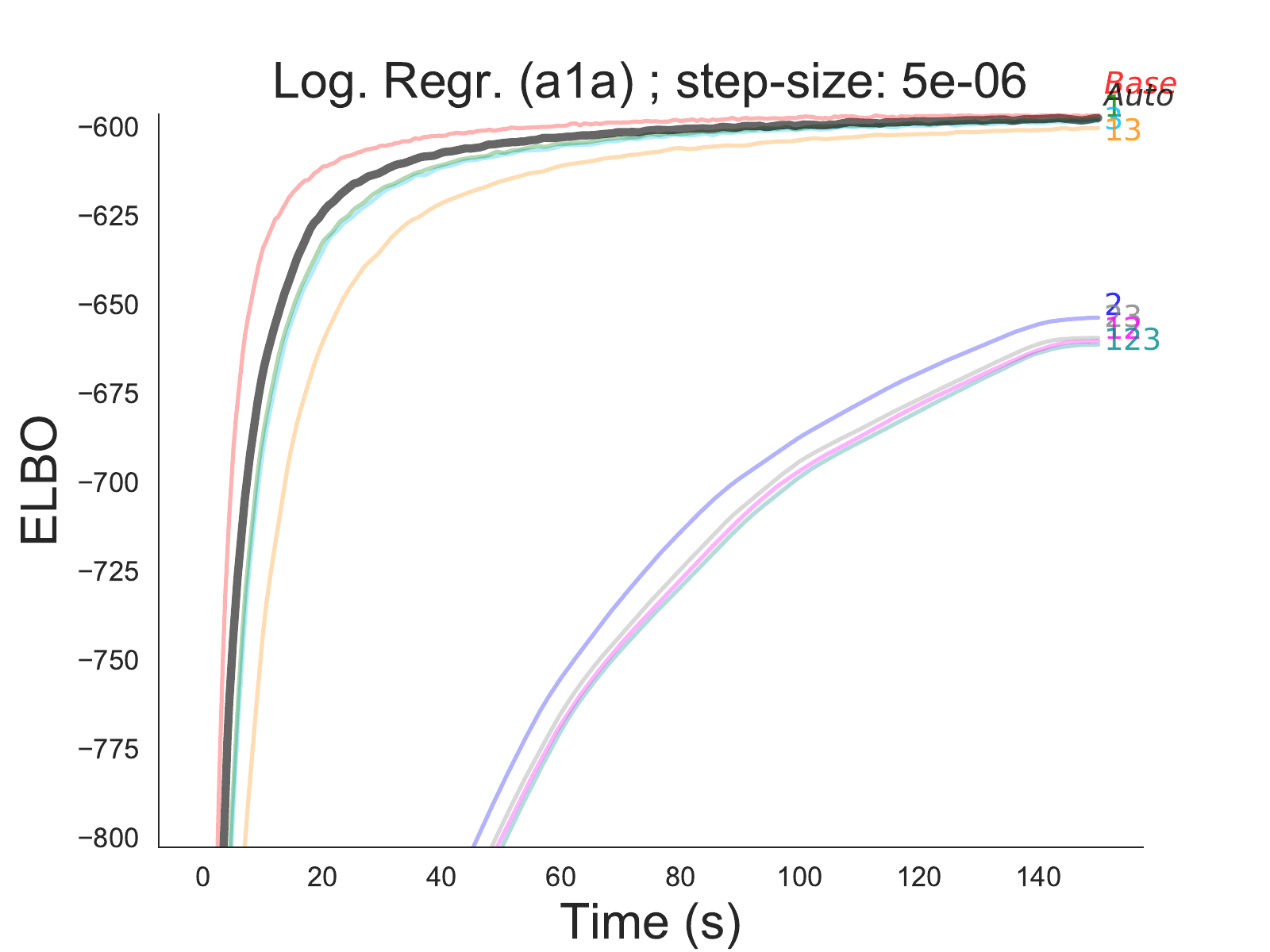}
  
  \includegraphics[width=0.45\linewidth]{./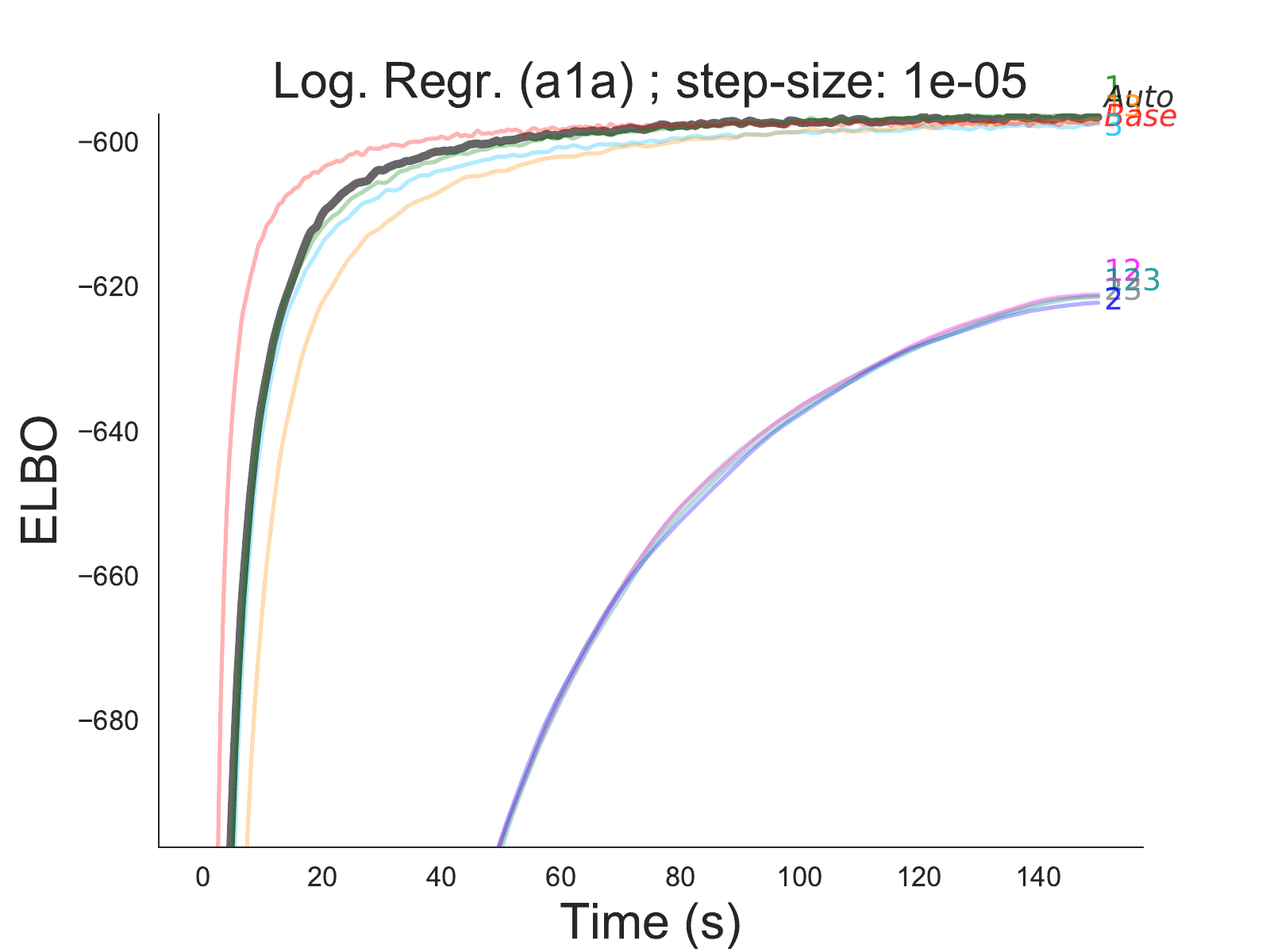}
  \includegraphics[width=0.45\linewidth]{./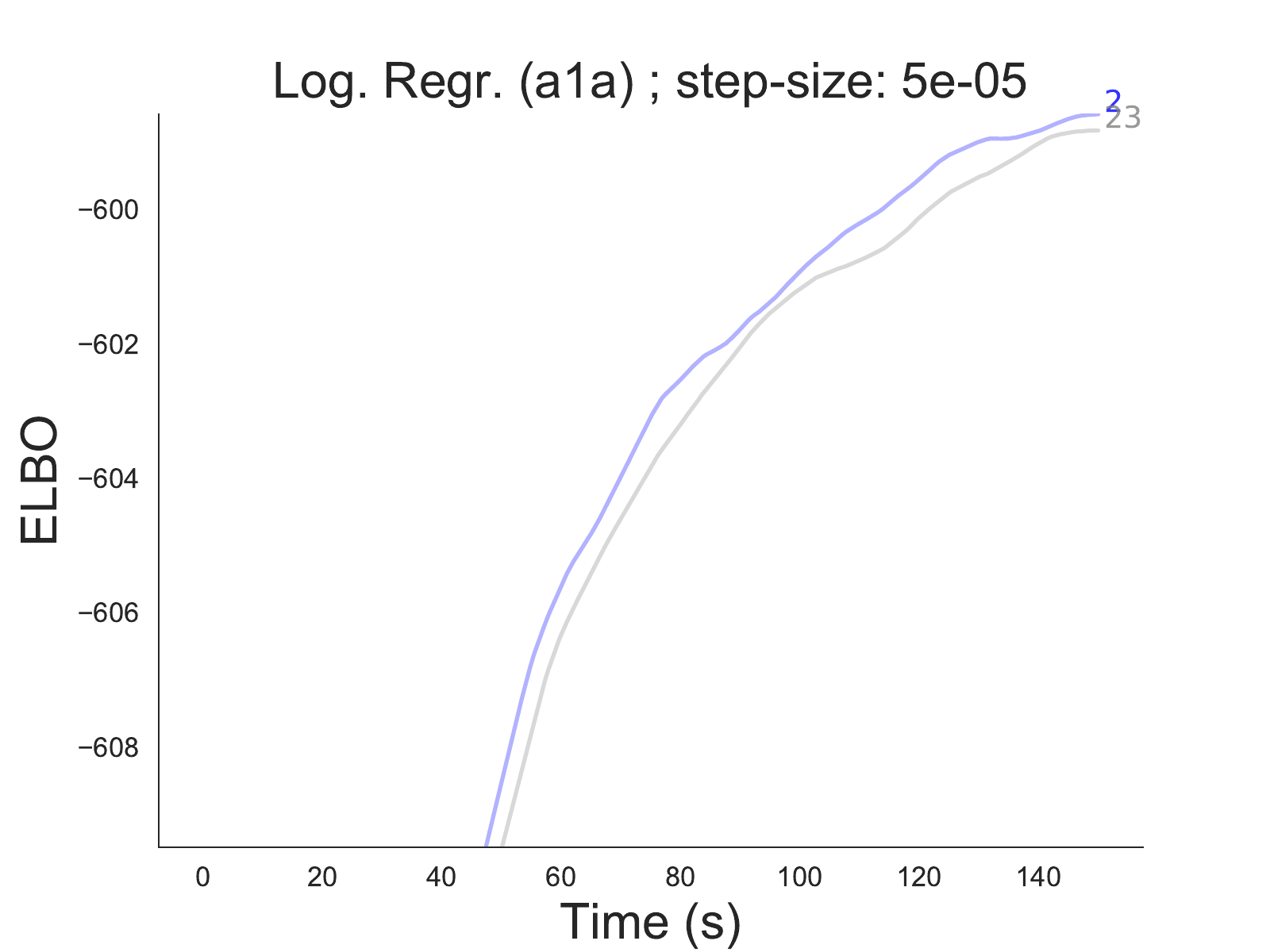}
\end{figure*}


\begin{figure*}[ht]
  \centering
  \includegraphics[width=0.45\linewidth]{./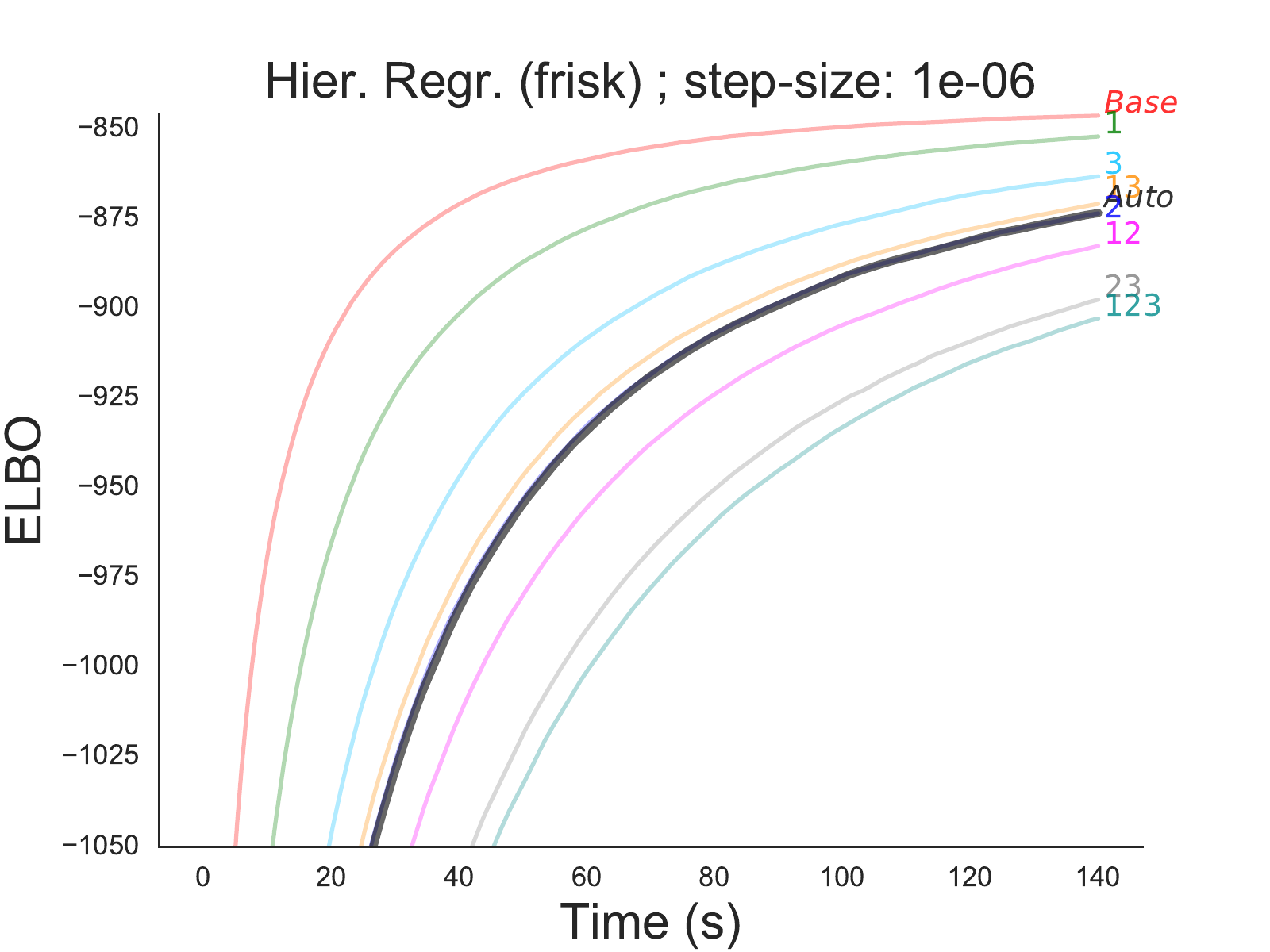}
  \includegraphics[width=0.45\linewidth]{./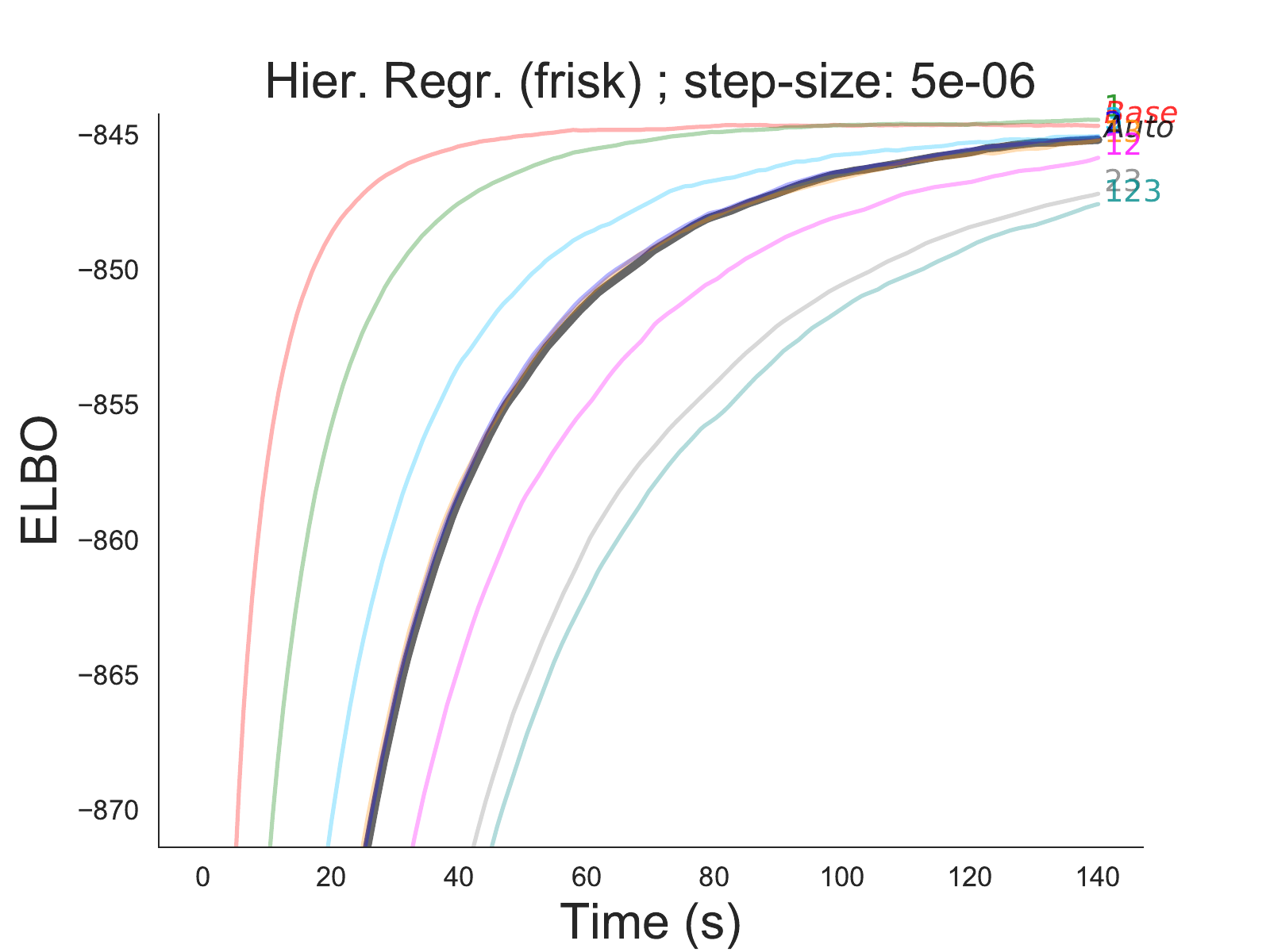}
  
  \includegraphics[width=0.45\linewidth]{./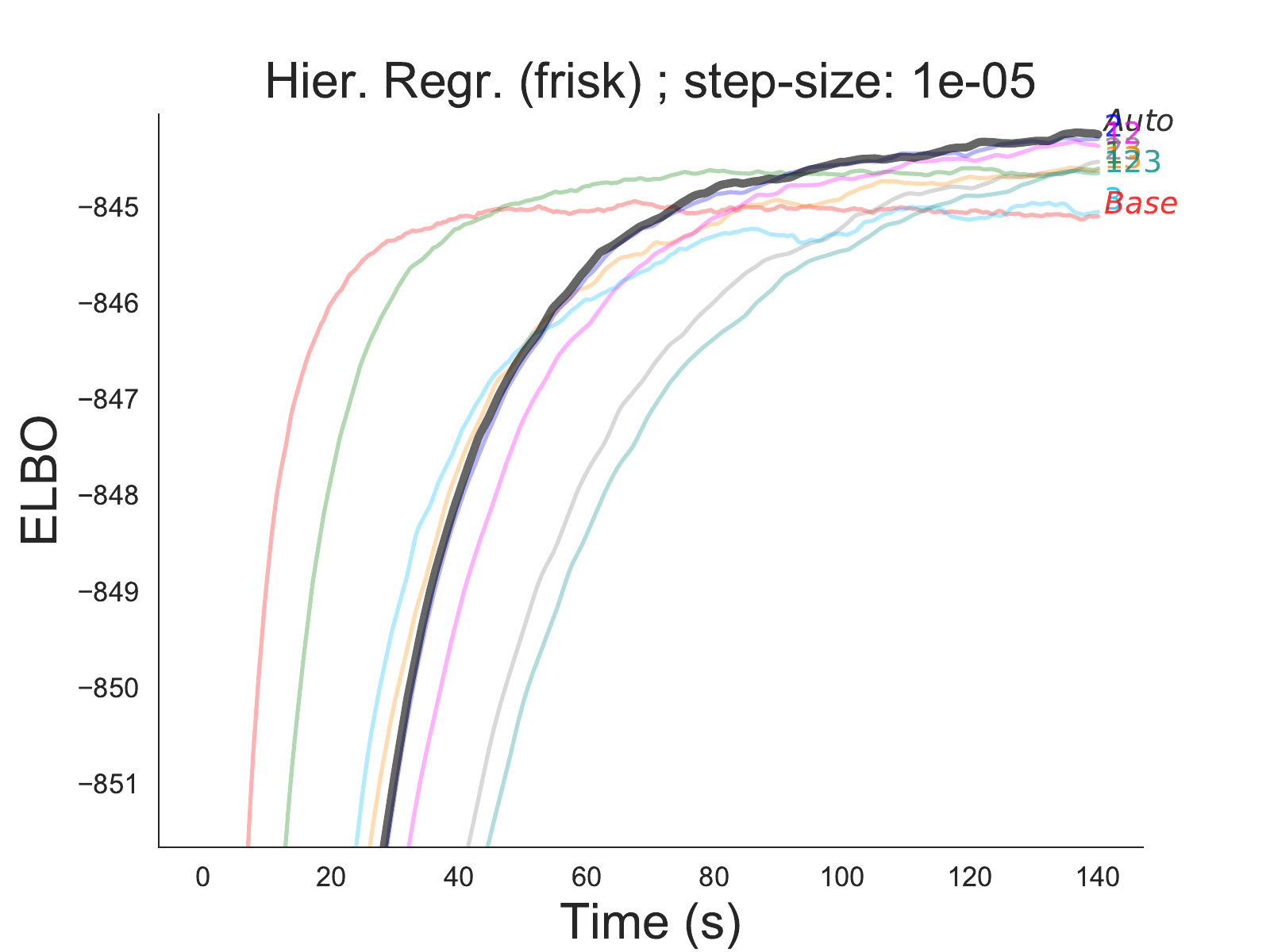}
  \includegraphics[width=0.45\linewidth]{./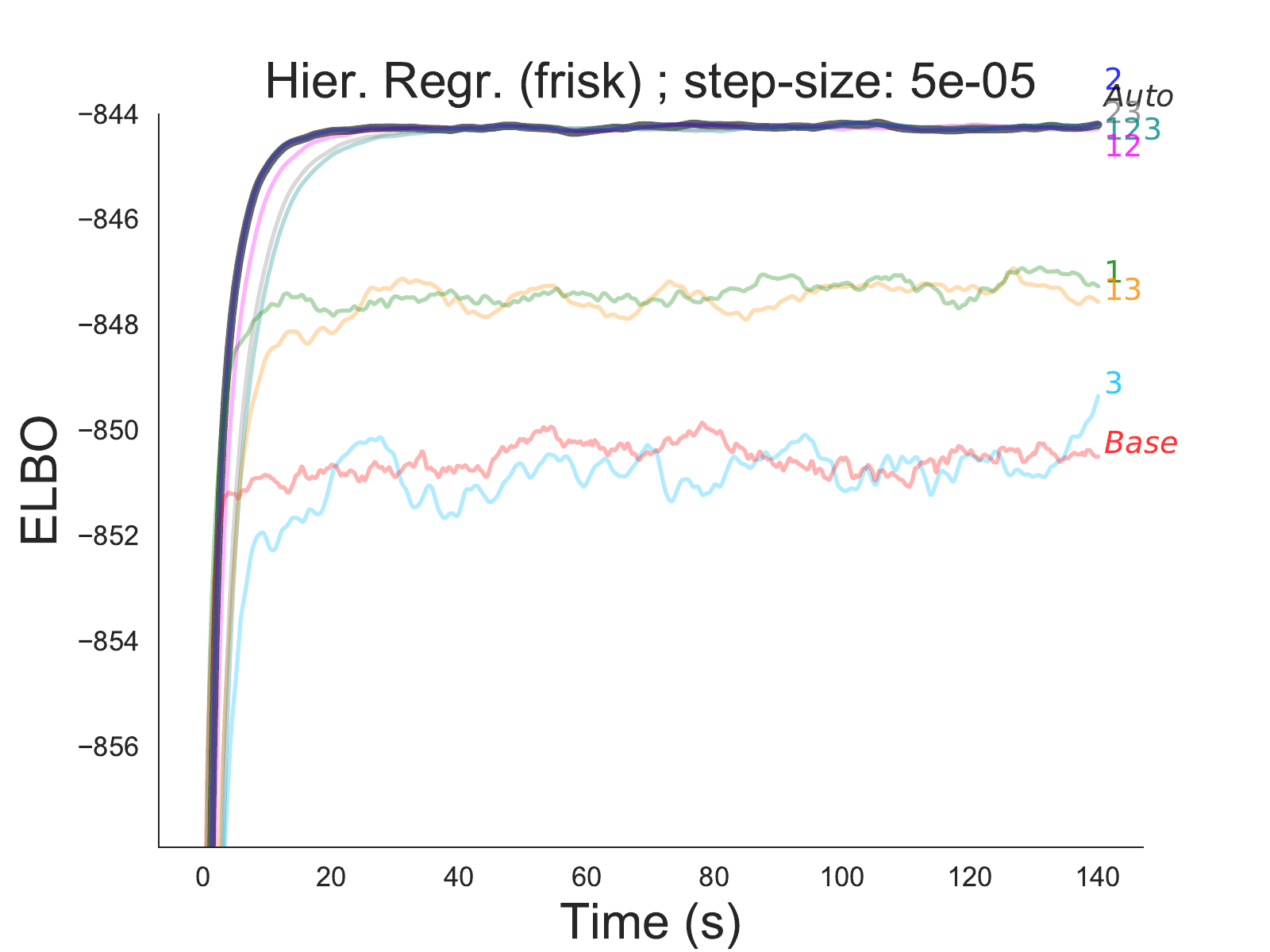}

\end{figure*}


\begin{figure*}[ht]
  \centering
  \includegraphics[width=0.45\linewidth]{./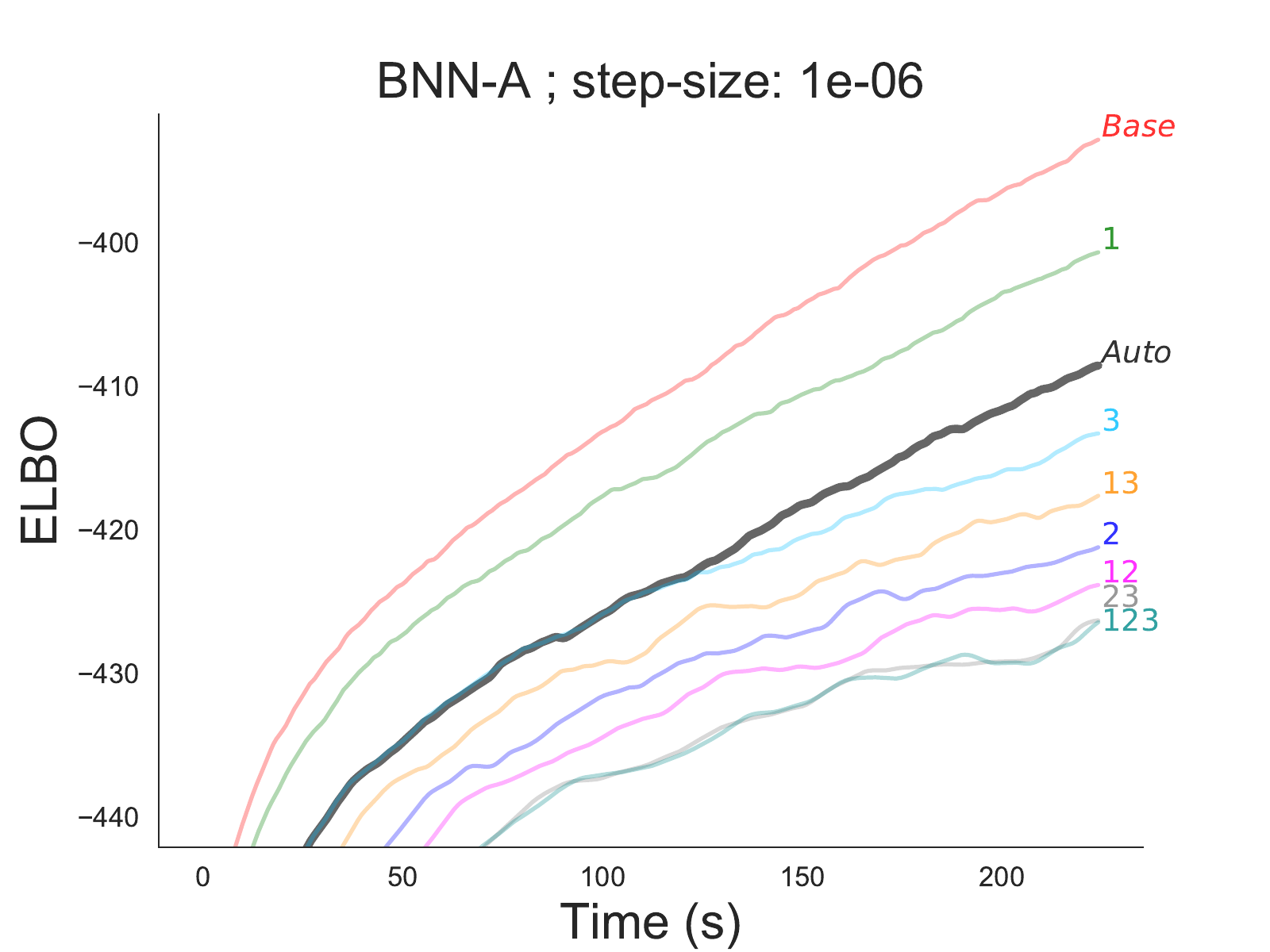}
  \includegraphics[width=0.45\linewidth]{./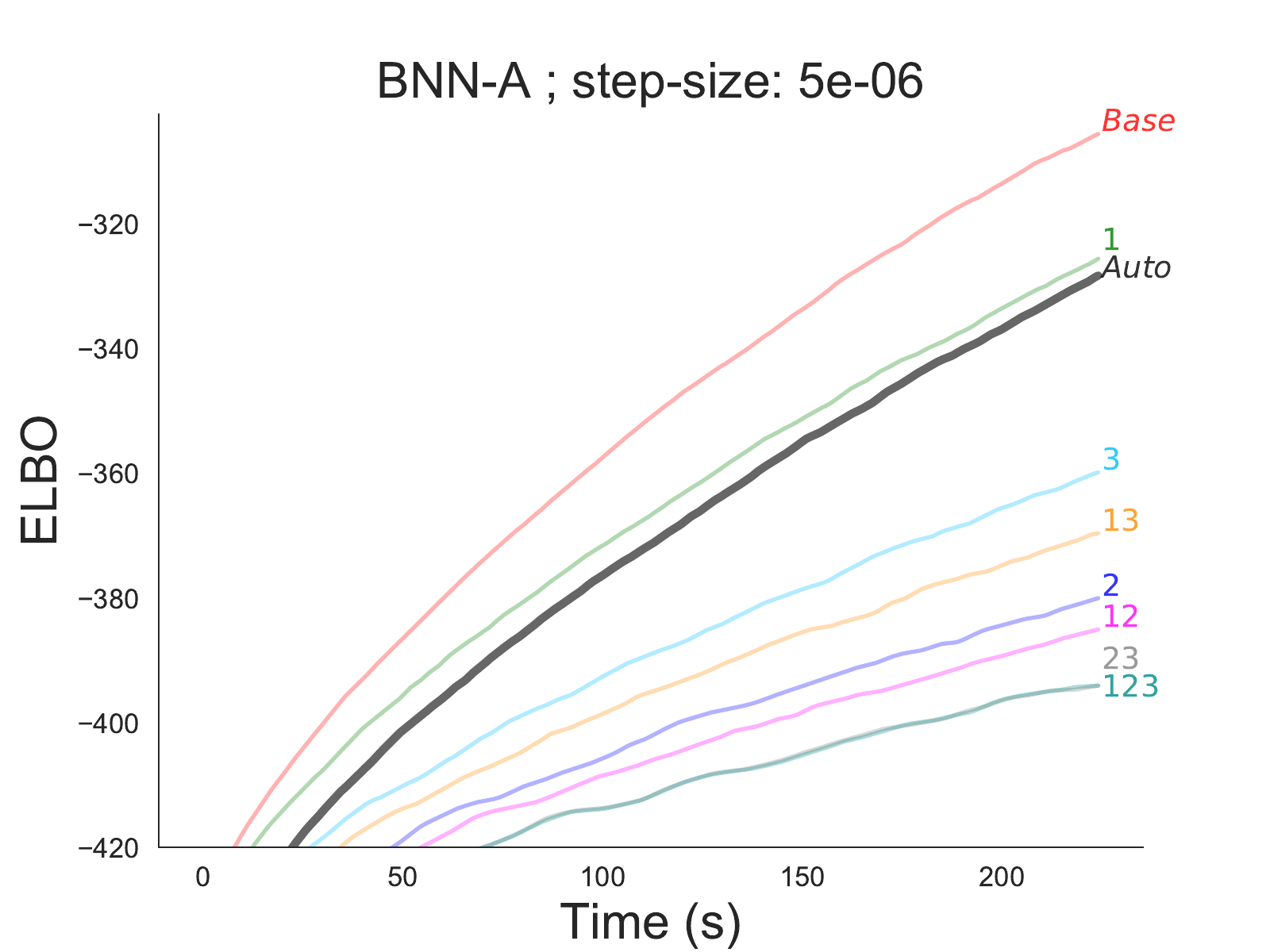}
  
  \includegraphics[width=0.45\linewidth]{./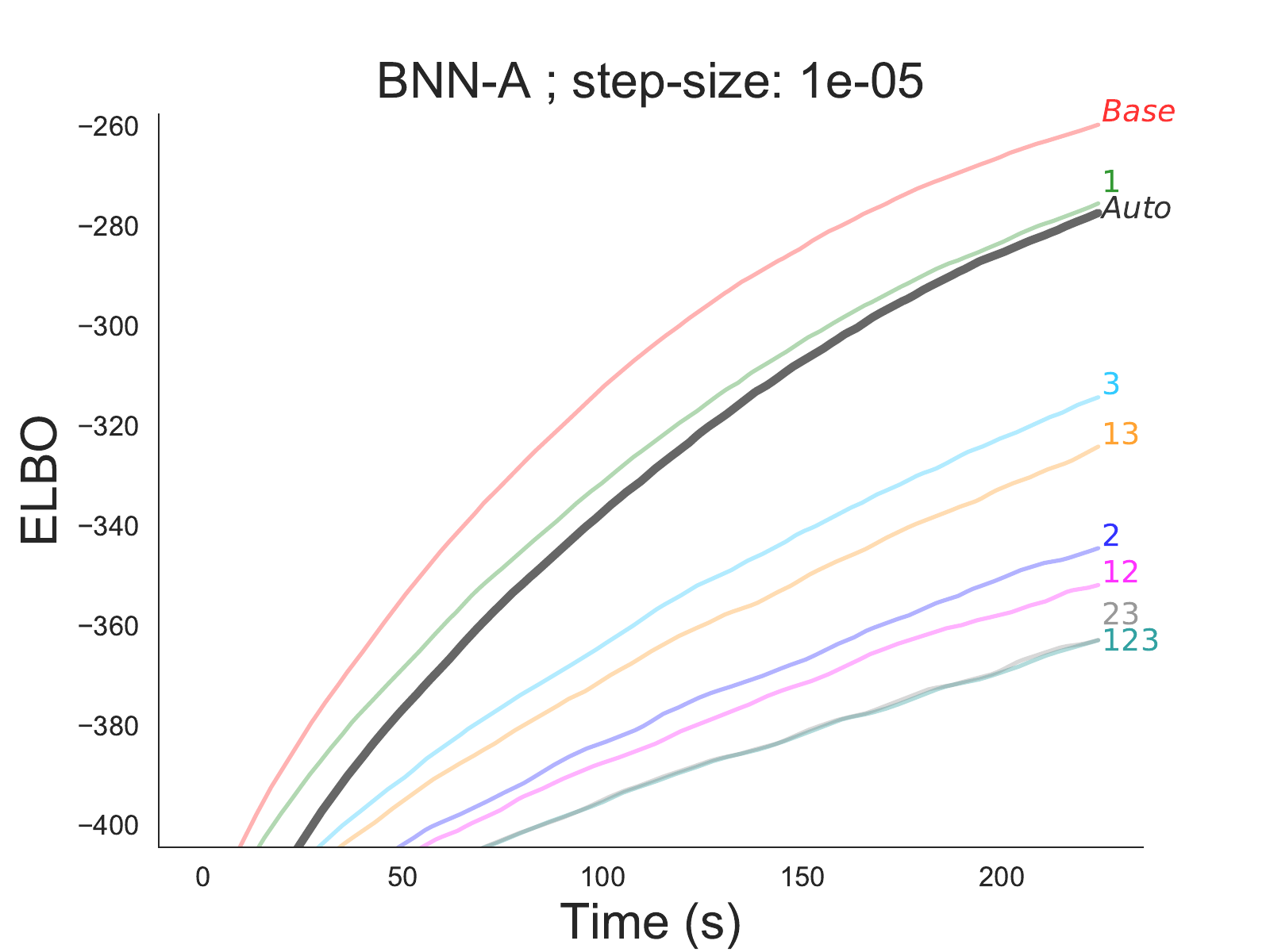}
  \includegraphics[width=0.45\linewidth]{./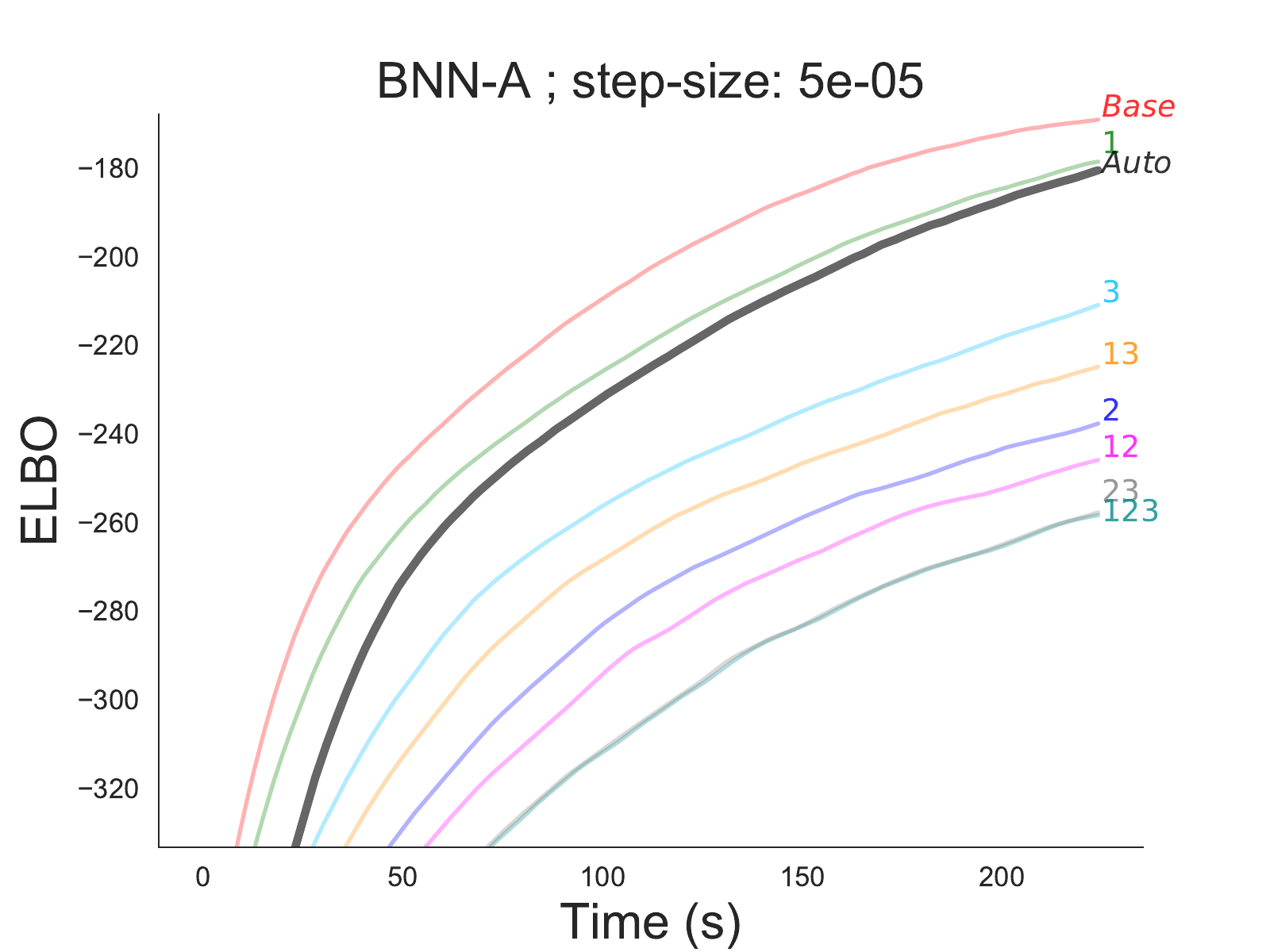}
  
  \includegraphics[width=0.45\linewidth]{./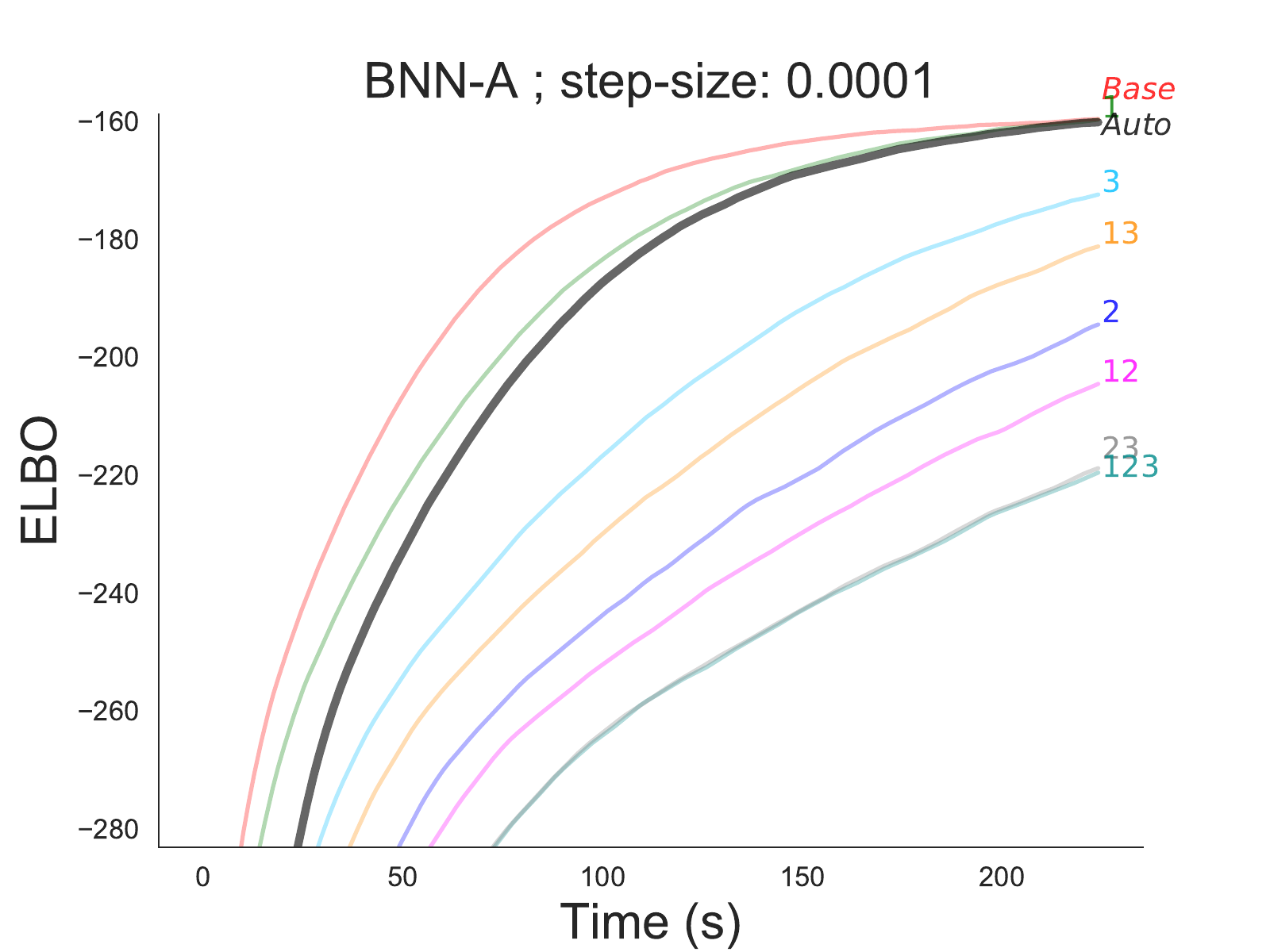}
  \includegraphics[width=0.45\linewidth]{./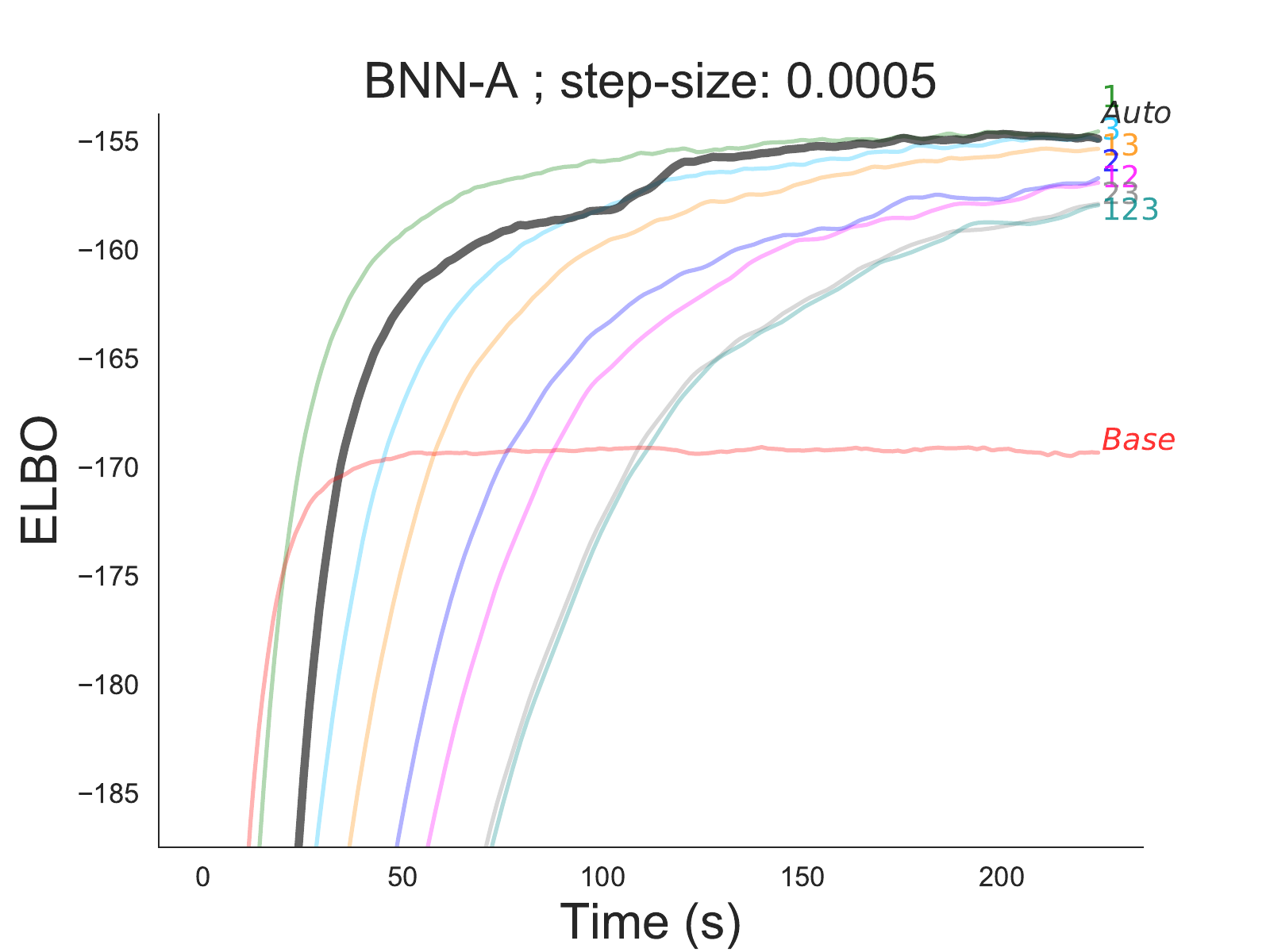}
  
  \includegraphics[width=0.45\linewidth]{./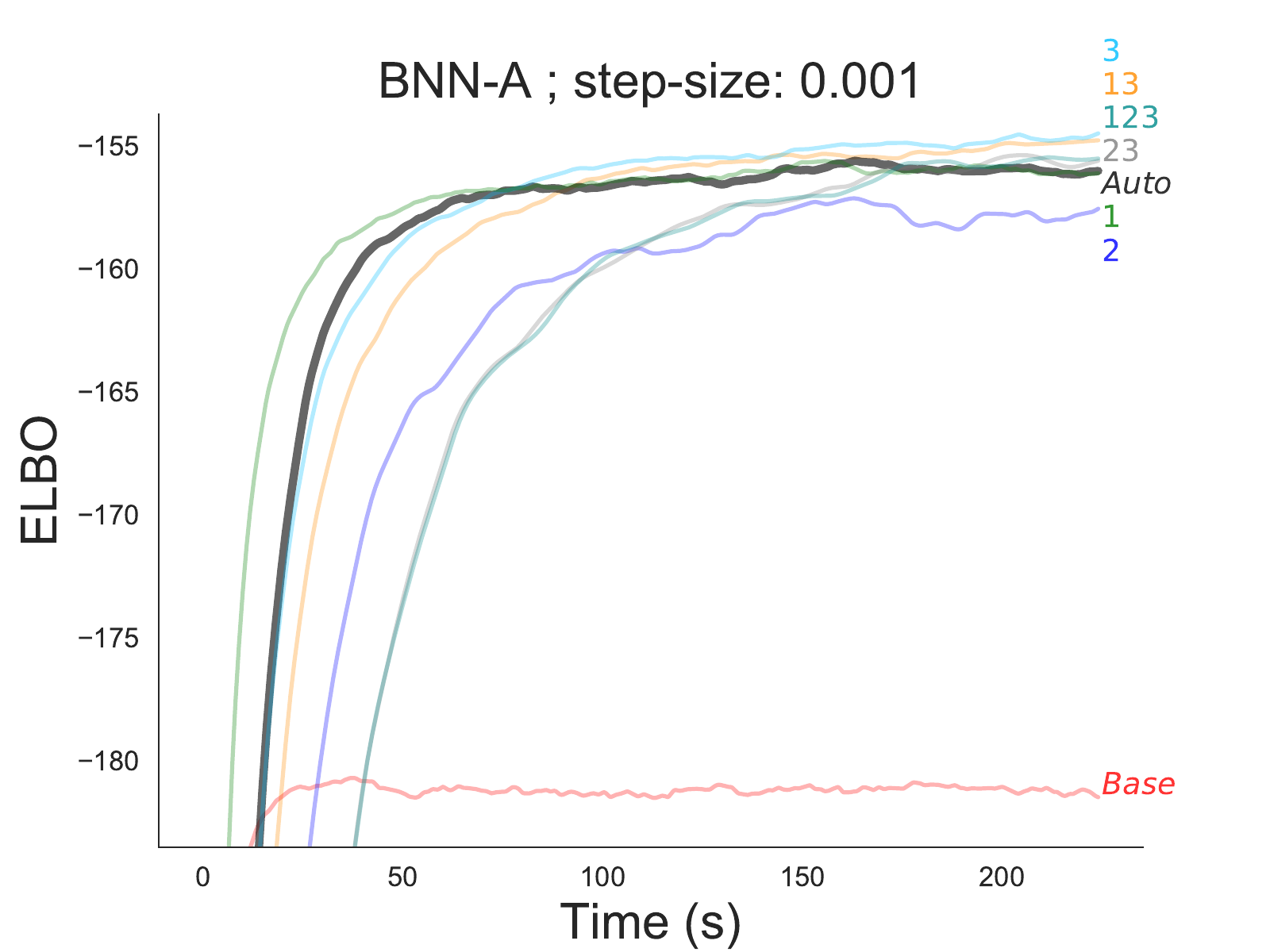}
\end{figure*}


\begin{figure*}[ht]
  \centering
  \includegraphics[width=0.45\linewidth]{./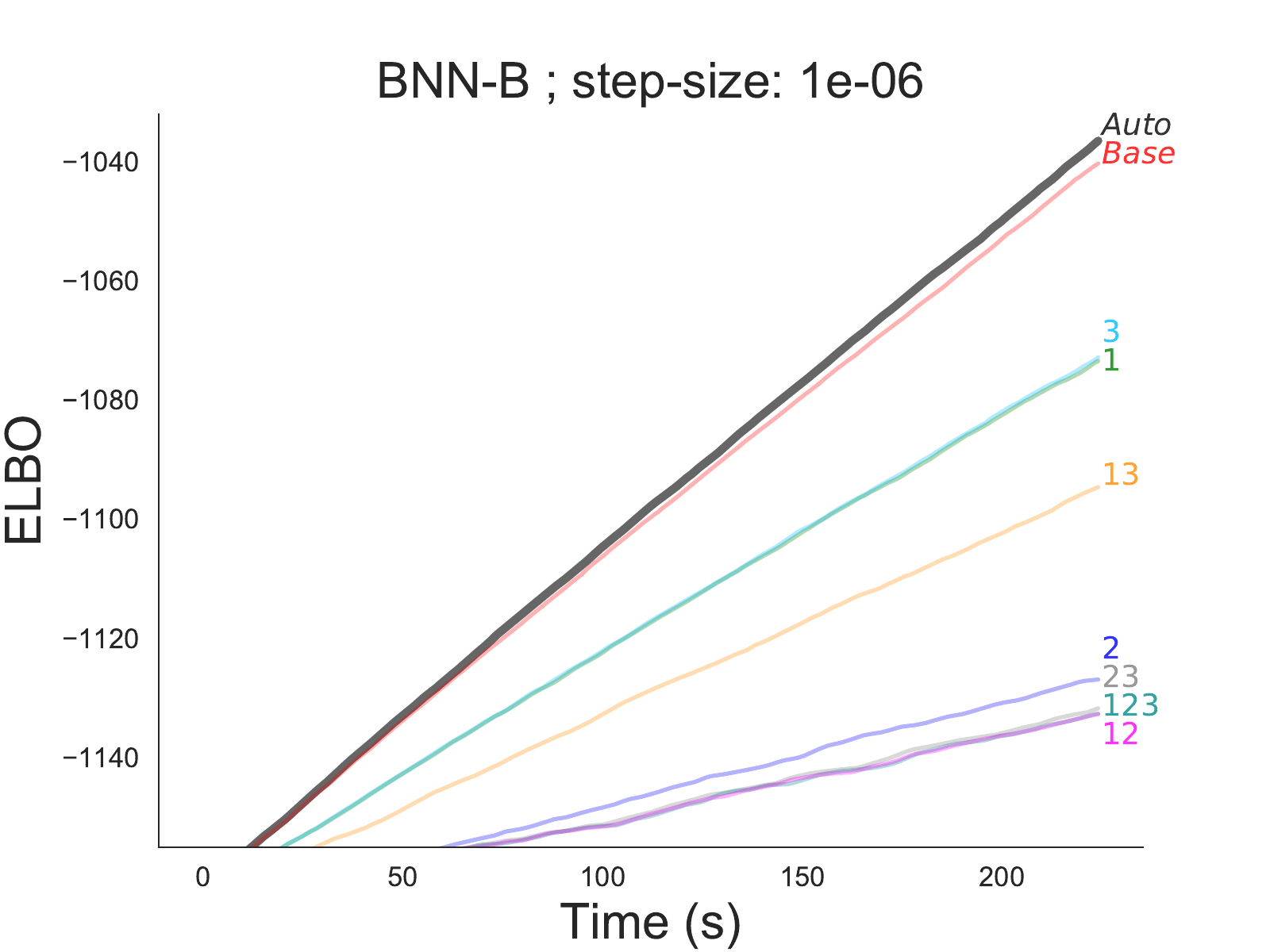}
  \includegraphics[width=0.45\linewidth]{./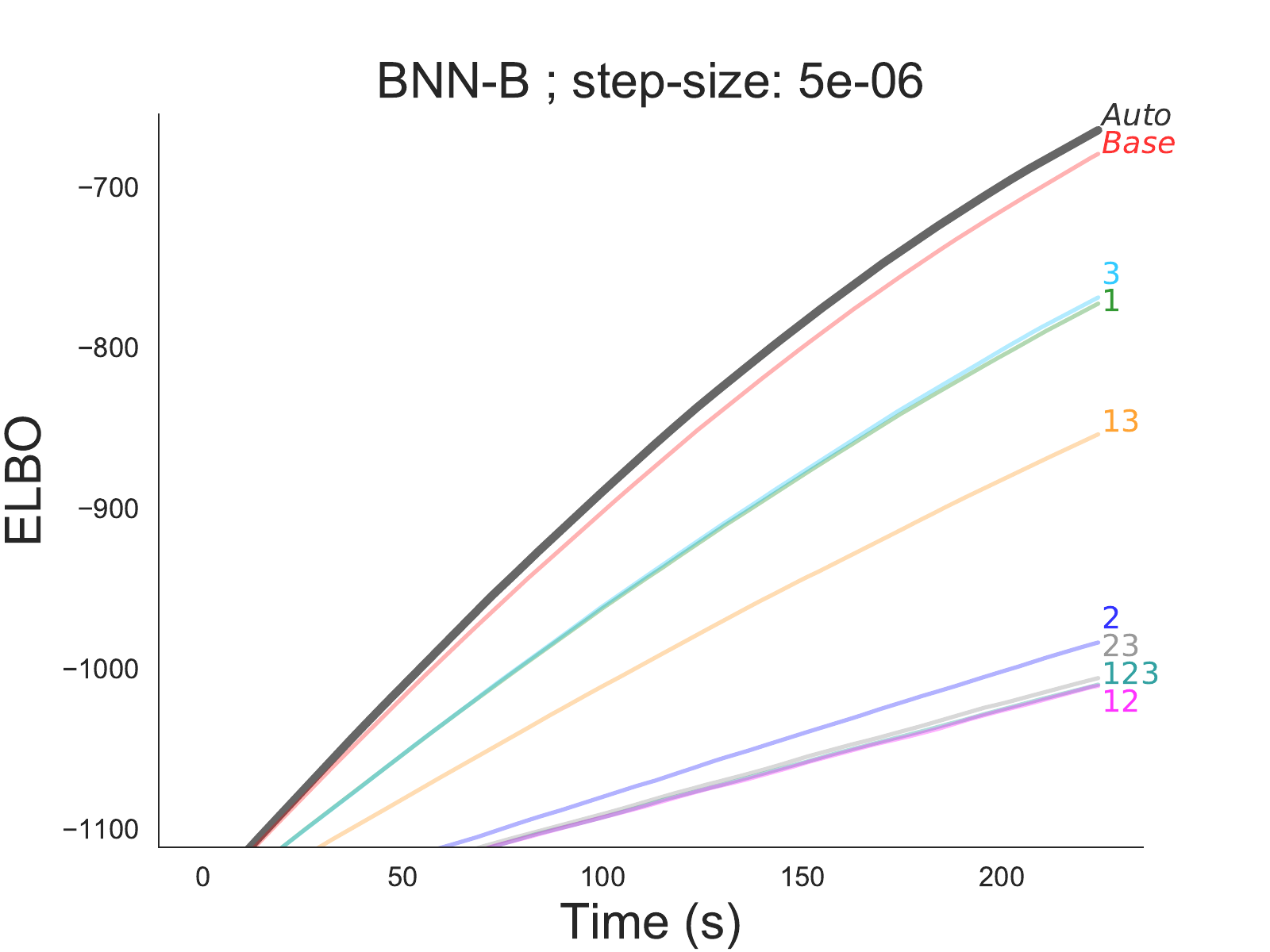}
  
  \includegraphics[width=0.45\linewidth]{./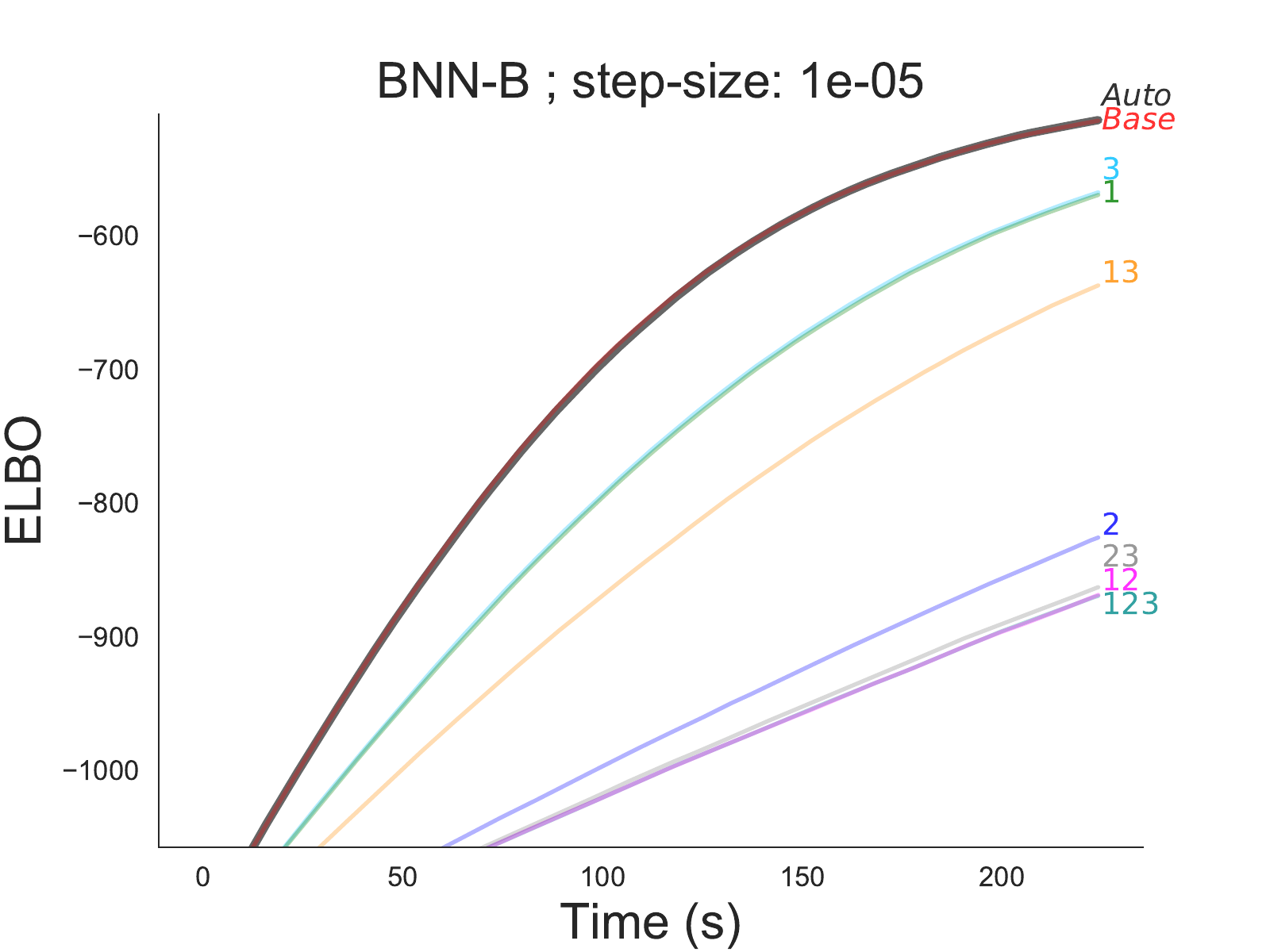}
  \includegraphics[width=0.45\linewidth]{./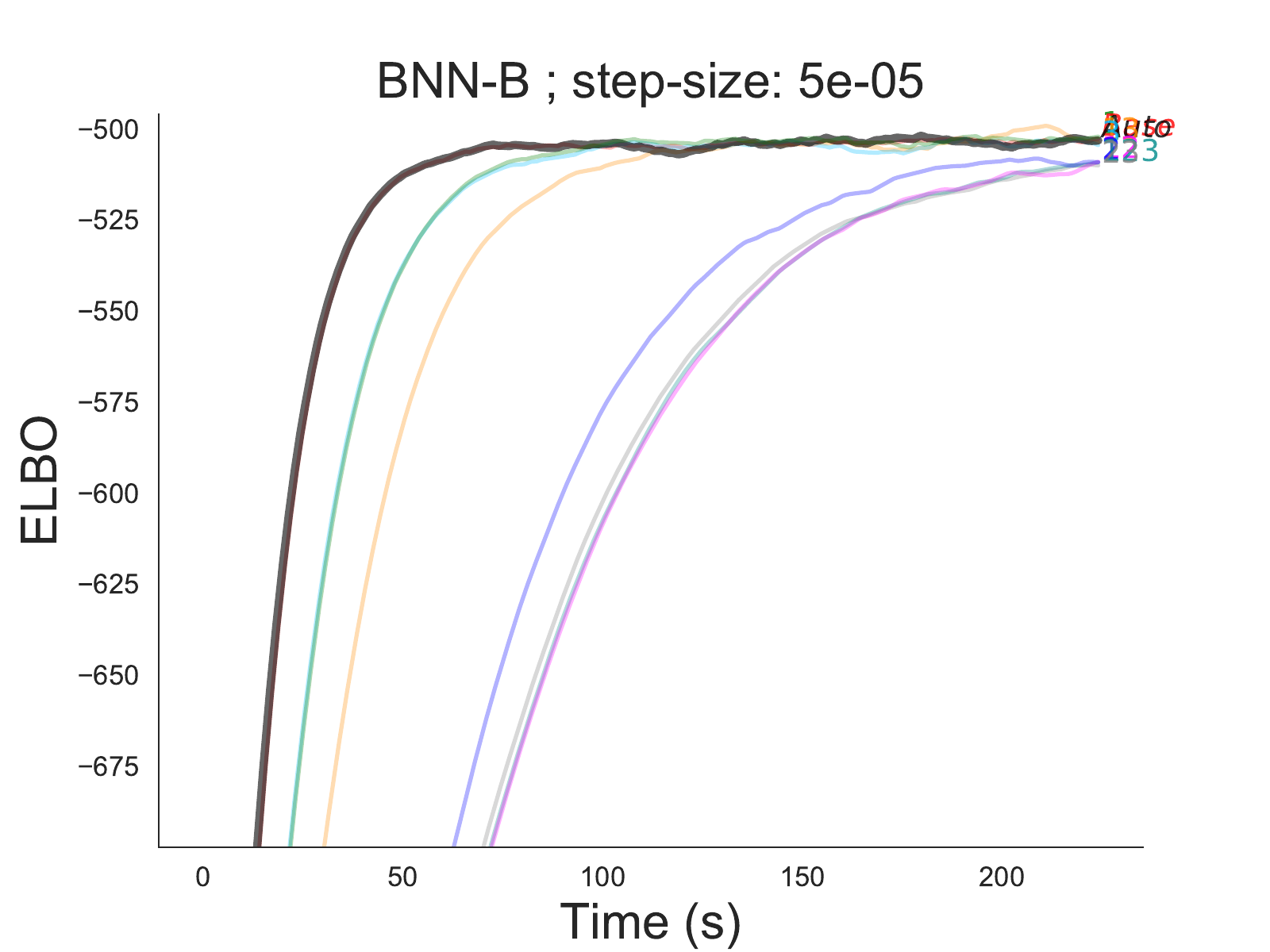}
  
  \includegraphics[width=0.45\linewidth]{./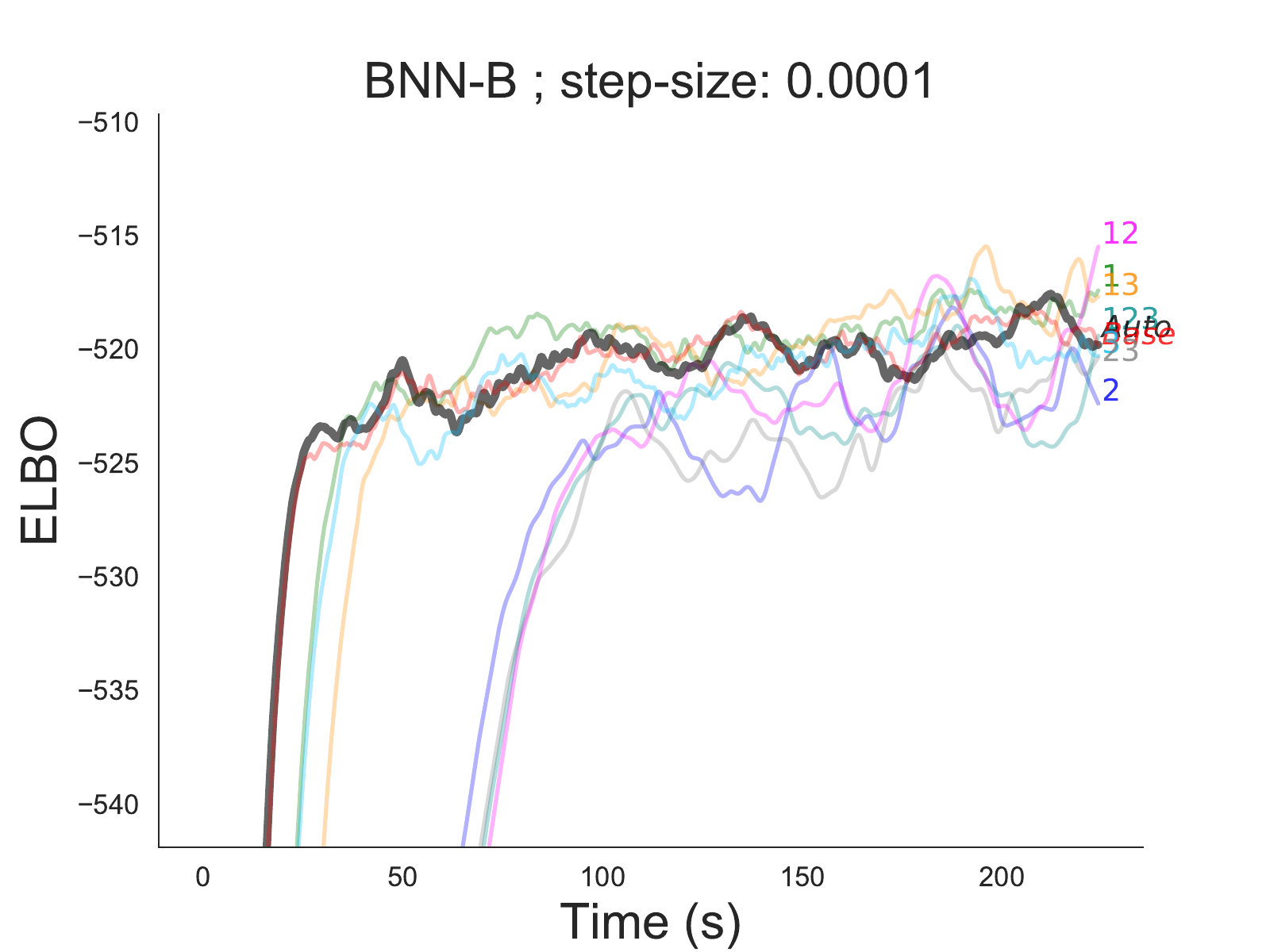}
  \includegraphics[width=0.45\linewidth]{./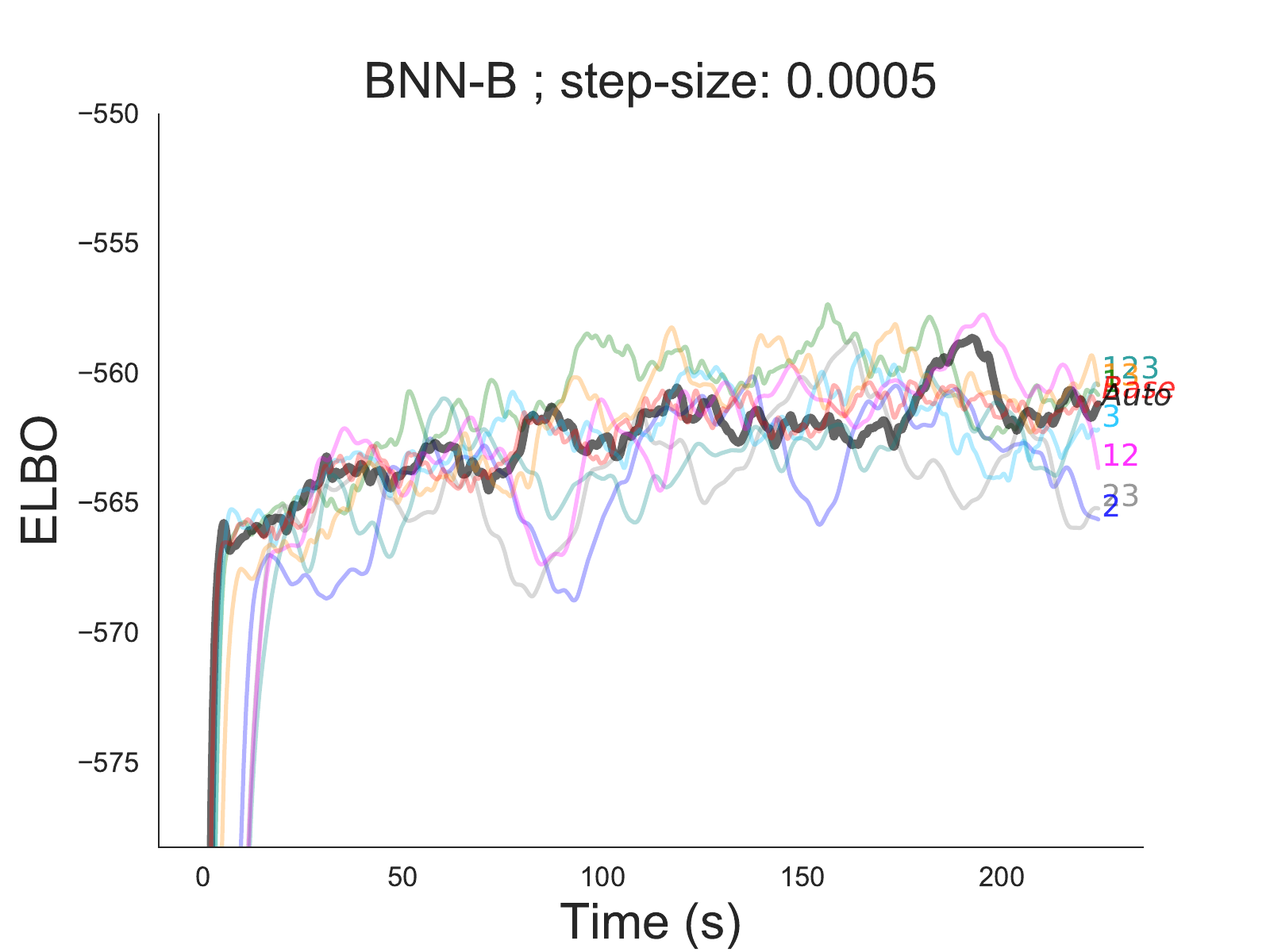}
  
  \includegraphics[width=0.45\linewidth]{./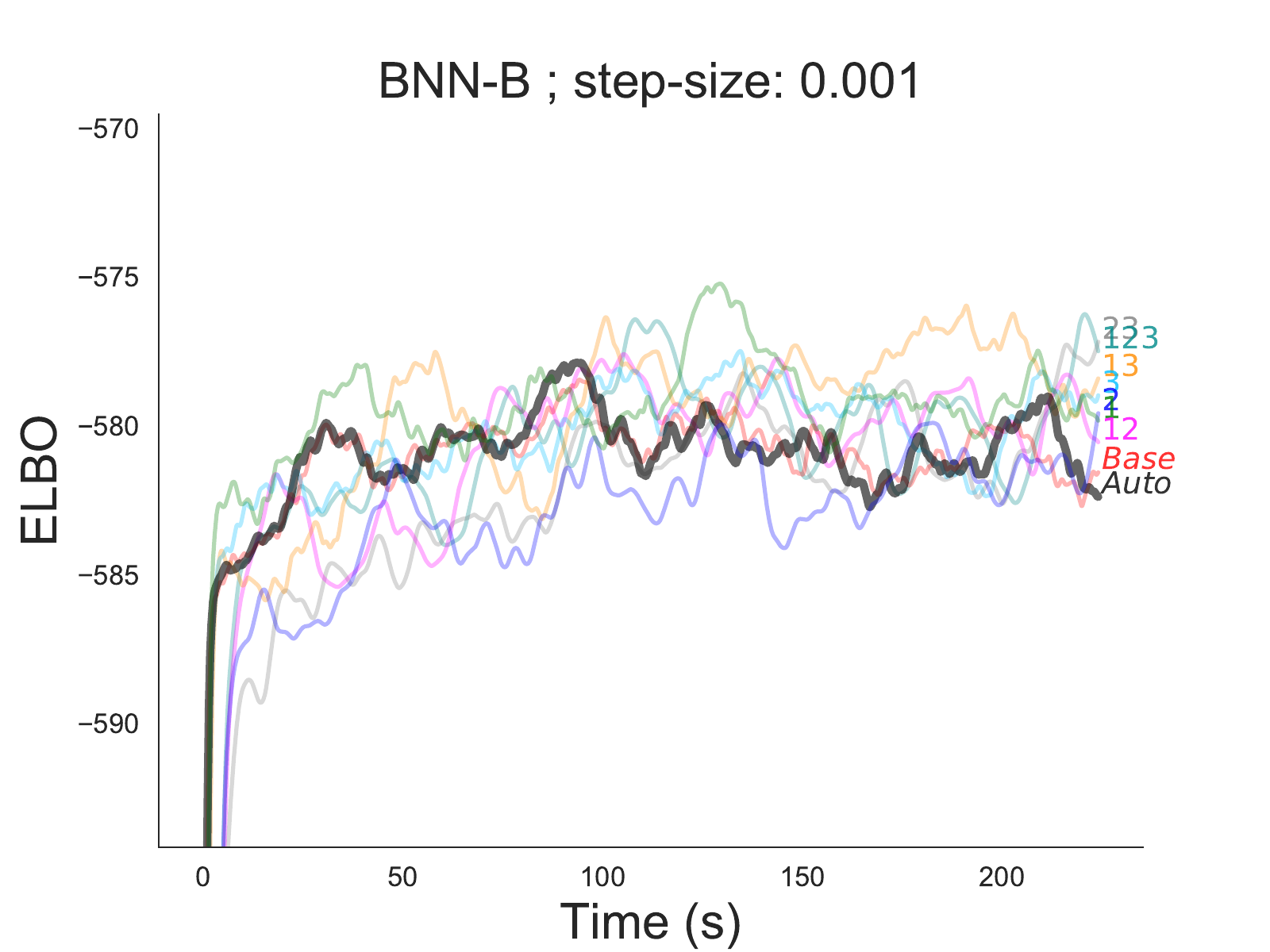}
\end{figure*}








\end{document}